\declaretheorem[name=Proposition]{prop}
\declaretheorem[name=Theorem]{theorem}
\crefname{asst}{assumption}{assumptions}
\crefname{equation}{Eq.}{Eqs.}
\newtheorem{definition}{Definition}
\newtheorem{lemma}{Lemma}
\newtheorem{asst}{Assumption}
\newcommand{\antoine}[1]{%
    \ifmmode
    \text{{\color{blue4}[Antoine: #1]}} 
    \else
    {\color{blue4}[Antoine: #1]}
    \fi
}
\newcommand{\error}[1]{%
    \ifmmode
    \text{{\color{red5}#1}} 
    \else
    {\color{red5}[#1]}
    \fi
}
\newcommand{\todo}[1]{%
    \ifmmode
    \text{{\color{green5}[todo: #1]}} 
    \else
    {\color{green5}[todo: #1]}
    \fi
}
\newcommand{\tmpassumption}[1]{%
    \ifmmode
    \text{{\color{orange5}[assumption: #1]}} 
    \else
    {\color{orange5}[assumption: #1]}
    \fi
}
\DeclareRobustCommand\onedot{\futurelet\@let@token\@onedot}
\def\@onedot{\ifx\@let@token.\else.\null\fi\xspace}
\def\eg{\textit{e.g}\onedot}
\def\ie{\textit{i.e}\onedot}
\newcommand{\spr}[1]{\left( #1 \right)}
\newcommand{\sbr}[1]{\left[ #1 \right]}
\newcommand{\scbr}[1]{\left\{ #1 \right\}}
\newcommand{\inp}[1]{\left\langle #1 \right\rangle}
\newcommand*\diff{\mathop{}\!\mathrm{d}}
\newcommand{\given}{\mathrel{}\middle|\mathrel{}}
\newcommand{\norm}[1]{\left\| #1 \right\|}
\newcommand{\normop}[1]{\left\| #1 \right\|_{\mathrm{op}}}
\newcommand{\normHS}[1]{\left\| #1 \right\|_{\scriptscriptstyle \mathrm{HS}}}
\newcommand{{\transpose}}{^\mathsf{\scriptscriptstyle T}}
\newcommand{\bbE}{\mathbb{E}}
\newcommand{\bbP}{\mathbb{P}}
\newcommand{\cG}{\mathcal{G}}
\newcommand{\cH}{\mathcal{H}}
\newcommand{\cL}{\mathcal{L}}
\newcommand{\cN}{\mathcal{N}}
\newcommand{\cR}{\mathcal{R}}
\newcommand{\cT}{\mathcal{T}}
\newcommand{\cU}{\mathcal{U}}
\newcommand{\cV}{\mathcal{V}}
\newcommand{\cX}{\mathcal{X}}
\newcommand{\cZ}{\mathcal{Z}}
\DeclareMathOperator*{\argmin}{arg\,min}
\DeclareMathOperator*{\arginf}{arg\,inf}
\newcommand{\R}{\mathbb{R}}
\newcommand{\Rd}{\mathbb{R}^{d}}
\newcommand{\E}{\mathbb{E}}
\renewcommand{\emph}[1]{\textbf{\textit{#1}}}
\newcolumntype{C}[1]{>{\centering\arraybackslash}p{#1}} 
\setlist{nosep}
\title{Demystifying Spectral Feature Learning for Instrumental Variable Regression}
\author{%
  Dimitri~Meunier\thanks{direct correspondence to dimitri.meunier.21@ucl.ac.uk}\\
  Gatsby Unit, UCL \\
  \And
  Antoine~Moulin \\
  Universitat Pompeu Fabra \\
  \And
  Jakub~Wornbard\\
  Gatsby Unit, UCL \\
  \And
  Vladimir~R.~Kostic\\
  Istituto Italiano di Tecnologia \& University of Novi Sad \\
  \And
  Arthur Gretton \\
  Gatsby Unit, UCL \\
}
\begin{document}

\maketitle


\begin{abstract}

We address the problem of causal effect estimation in the presence of hidden confounders, using nonparametric instrumental variable (IV) regression.
A leading strategy employs \emph{spectral features} - that is, learned features spanning the top eigensubspaces of the operator linking treatments to instruments.
We derive a generalization error bound for a two-stage least squares estimator based on  spectral features, and gain insights into the method's performance and failure modes. We show that  performance depends on two key factors, leading to a clear taxonomy of outcomes. In a \emph{good} scenario, the approach is optimal. This occurs with strong \emph{spectral alignment}, meaning the structural function is well-represented by the top eigenfunctions of the conditional operator, coupled with this operator's slow eigenvalue decay, indicating a strong instrument. Performance degrades in a \emph{bad} scenario:  spectral alignment remains strong, but rapid eigenvalue decay (indicating a weaker instrument) demands significantly more samples for effective feature learning. Finally, in the \emph{ugly} scenario, weak spectral alignment causes the method to fail, regardless of the eigenvalues' characteristics. Our synthetic experiments empirically validate this taxonomy. We further introduce a practical procedure to estimate these spectral properties from data, allowing practitioners to diagnose which regime a given problem falls into. We apply this method to the dSprites dataset, demonstrating its utility.

\end{abstract}

\vspace{-0.5cm}
\section{Introduction} \label{sec:intro}


We study the nonparametric instrumental variable (NPIV) model \citep{ai2003efficient,newey2003instrumental}
\begin{equation} \label{eq:npiv}
    Y = h_0(X) + U, \quad \mathbb{E}[U \mid Z] = 0, \quad Z \not\!\perp\!\!\!\perp X, 
\end{equation}
where \(X\) is the treatment, \(Y\) the outcome, \(Z\) the instrument, \(U\) an unobserved confounder, and \( h_0 \in L_2(X) \) is the structural function to be learned, where we denoted $L_2(X)$ the $L_2$ space corresponding to the distribution $\bbP_X$. We assume no common confounder of \(Z\) and \(Y\). To illustrate this setting, consider a simplified example inspired by \cite{hartford2017deep}. Suppose we want to determine how ticket price ($X$) affects the number of flight tickets sold ($Y$). A potential confounder ($U$) could be a major event (\eg, a large conference or festival) occurring at the destination. This event would likely increase demand for flights, and airline pricing algorithms would result in more expensive seats being sold; the confounder $U$ influences both $X$ and $Y$. A naive regression of sales on price might then misleadingly suggest that higher prices lead to higher sales. This occurs because the analysis fails to account for the event, which independently drives up both demand (sales) and prices. To obtain a less biased estimate of the price effect, we could use an instrument ($Z$), such as the cost of jet fuel. The cost of fuel is a plausible instrument because: (i) it likely affects ticket prices ($X$) as airlines adjust fares based on operational costs, and (ii) it is unlikely to be directly correlated with whether a major event ($U$) is happening at the destination, thus satisfying $\bbE \sbr{U \given Z} = 0$. By using fuel cost as an instrument, we can isolate the variation in ticket prices that is independent of the unobserved demand shock caused by the event, allowing for a more accurate estimation of the structural function $h_0$.

Central to the NPIV model is the \emph{conditional expectation operator} \(\mathcal{T}: L_2(X) \rightarrow L_2(Z)\), defined by \(\mathcal{T}h(Z) = \mathbb{E}[h(X) \mid Z]\). Estimating $h_0$ amounts to inverting this operator from the following integral equation
\begin{equation} \label{eq:npiv_integral}
    \mathcal{T} h_0 = r_0, \quad \text{where } r_0(Z) \doteq \mathbb{E}[Y \mid Z].
\end{equation}
In practice, the operator $\mathcal{T}$ is unknown and must be estimated from data. Moreover, even if $\mathcal{T}$ were known, its inverse is typically unbounded, making recovery of $h_0$ an ill-posed inverse problem that requires regularization.

A classical approach to solving NPIV is nonlinear two-stage least squares (2SLS), which projects \(X\) and \(Z\) into suitable feature spaces and performs regression in these representations. Let \(\varphi\) and \(\psi\) denote feature maps for \(X\) and \(Z\), respectively. In \textbf{Stage 1}, one estimates the conditional expectation \(\mathbb{E}[\varphi(X) \mid Z]\) by regressing \(\varphi(X)\) on \(\psi(Z)\), yielding an approximation of the form $\mathbb{E}[\varphi(X) \mid Z] \approx A \psi(Z)$, for some linear map \(A\). This step isolates the component of \(\varphi(X)\) that is predictable from \(Z\), filtering out variation due to the unobserved confounder. In \textbf{Stage 2}, the outcome \(Y\) is regressed on the predicted features \(\mathbb{E}[\varphi(X) \mid Z]\), yielding an estimator of the structural function \(h_0\), which is modeled as a linear combination \(h_0(X) \approx \beta\transpose \varphi(X)\), for some coefficient vector \(\beta\). Given that the feature maps \(\varphi\) and \(\psi\) can be fixed (\eg, polynomials, splines, or radial basis functions from a kernel) or learned from data (\eg, using neural networks), a natural question arises: \emph{What specific information must these features encode to enable efficient estimation?}

The algorithm recently proposed by \cite{sun25speciv}, which learns features by minimizing a spectral contrastive loss, suggests that the answer lies in the spectral structure of \(\mathcal{T}\). In this paper, we rigorously investigate this perspective. We show that the performance of this approach hinges on two key factors: the \emph{spectral alignment} of the structural function \(h_0\) with the operator \(\mathcal{T}\), and the rate of its \emph{singular value decay}.
Namely, we identify three distinct regimes governing the effectiveness of spectral methods for causal inference:\footnote{The ``good--bad--ugly'' terminology is borrowed from \cite{kostic2023sharp}, who used it to describe performance regimes in Koopman operator learning. We adopt the same naming convention in a different context: spectral feature learning for causal inference. The phrase itself, of course, originates from the classic \citep{leone1966good}.}
\begin{enumerate}[leftmargin=20pt, topsep=-2pt, itemsep=4pt]
    \item The \emph{good} regime, where the structural function aligns strongly with the leading eigenspaces of the conditional expectation operator \(\mathcal{T}\), and the singular values of \(\mathcal{T}\) decay slowly;
    \item The \emph{bad} regime, where spectral alignment holds, but rapid singular value decay leads to instability and degraded estimation quality;
    \item The \emph{ugly} regime, where the structural function is misaligned with the top eigenspaces of \(\mathcal{T}\), rendering spectral methods ineffective regardless of decay rate.
\end{enumerate}

\textbf{Overview of contributions.} 
We investigate the effectiveness of spectral features in NPIV estimation, grounding their empirical success in a principled theoretical analysis. Specifically:

$\bullet$ By leveraging an upper bound on the generalization error for the 2SLS estimator and specializing it to spectral features, we characterize the conditions under which each regime arises.

$\bullet$ We clarify the theoretical foundation of the state-of-the-art causal estimation algorithm of \cite{sun25speciv} by showing that it implements a 2SLS estimator with learned features that explicitly target the leading eigenspaces of the conditional expectation operator $\mathcal{T}$. Although \cite{sun25speciv} suggested a heuristic link to the Singular Value Decomposition (SVD) of $\mathcal{T}$, we make this connection explicit by proving that their contrastive objective corresponds to a Hilbert-Schmidt approximation of $\mathcal{T}$.
    
$\bullet$ We empirically validate the ``good--bad--ugly'' taxonomy through synthetic experiments designed to isolate the effects of spectral alignment and singular value decay on generalization error. We further introduce a practical, data-driven procedure to estimate these key spectral properties (spectral alignment and singular value decay), allowing practitioners to diagnose their problem's regime. We demonstrate this procedure on the dSprites dataset \cite{dsprites17}.

\textbf{Related work.} Instrumental variable regression is often implemented as a two-stage procedure, where the key design choice is the feature representation. Fixed features include polynomials, splines, or kernel methods \citep{blundell2007semi,chen2018optimal,chen2012estimation,meunier2024nonparametric,singh2019kernel}, while learned features are either derived from the conditional expectation operator \(\mathcal{T}\) \citep{sun25speciv} or jointly optimized with the 2SLS objective \citep{kim2025optimality,petrulionyte2024functional,xu2020learning}. Some approaches replace first-stage regression with conditional density estimation \citep{darolles2011nonparametric,hall2005nonparametric,hartford2017deep,li2024regularized,shao2024learning}.

An alternative line of work reformulates IV regression as a saddle-point optimization problem to avoid the bias introduced by nested conditional expectation estimation \citep{bennett2023minimax,dikkala2020minimax,liao2020provably,wang2022spectral,zhang2023instrumental}. These methods frame estimation as a min-max game between a candidate solution \(h\) and a dual witness function enforcing the IV moment condition. This approach sidesteps explicit conditional expectation estimation but relies on selecting tractable, fixed function classes to ensure the well-posedness of the optimization problem.

Features can be learned in various ways. One line of work learns them jointly with the regression objective, as in \cite{fonseca2024nonparametric,kim2025optimality,petrulionyte2024functional,xu2020learning}, who propose end-to-end frameworks that simultaneously optimize over the feature representations and second-stage parameters. In contrast, \cite{sun25speciv,wang2022spectral} propose to learn features that reflect the eigenstructure of \(\mathcal{T}\), independently of the outcome \( Y \), using a spectral contrastive loss. This may appear suboptimal, since incorporating \( Y \) could, in principle, yield more informative representations, however it avoids the optimization difficulties inherent to nonconvex, joint objectives involving all three variables \( (X, Y, Z) \). This decoupled spectral approach was shown in certain experimental settings to outperform end-to-end alternatives, as demonstrated in \cite{sun25speciv}. 
Our work places this empirical finding in context, since it likely arises due to the experiments in that work
satisfying the ``good'' regime in the present work.
We provide further discussion in Appendix~\ref{sec:extended_related_work}.

\textbf{Structure of the paper.} Section~\ref{sec:background} introduces the notation and preliminaries. Section~\ref{section:2SLS} reviews the Sieve 2SLS estimator and presents generalization bounds highlighting the role of spectral features. Section~\ref{sec:spectral_2SLS} shows how to learn such features via a contrastive loss and connects this to the method of \cite{sun25speciv}. \Cref{sec:estimation_spectral} presents a practical procedure to estimate the spectral properties from data, enabling an empirical diagnosis of our taxonomy. Section~\ref{sec:experiments} provides synthetic experiments, including an application to the dSprites dataset, that validate our theoretical taxonomy. Proofs are deferred to the Appendix.
\vspace{-5pt}
\section{Preliminaries} \label{sec:background}
\vspace{-5pt}


\subsection{Background}
\vspace{-5pt}

\textbf{Probability Spaces and Function Spaces.}
$Y$ is defined on $\mathbb{R}$, while $X$ and $Z$ take values in measurable spaces $\cX$ and $\cZ$, respectively, endowed with their $\sigma$-fields $\mathcal{F}_{\cX}$ and $\mathcal{F}_{\cZ}$. We let $(\Omega, \mathcal{F},\mathbb{P})$ be the underlying probability space with expectation operator $\mathbb{E}$. For $R, S \in \{X,Z,(X,Z)\}$, let \(\pi_R\) denote the push-forward of \(\mathbb{P}\) under \(R\), and let \(\pi_R \otimes \pi_S\) denote the product measure on the corresponding product \(\sigma\)-field, defined by \((\pi_R \otimes \pi_S)(A \times B) = \pi_R(A)\pi_S(B)\) for measurable sets \(A, B\). For $R \in \{X,Z\}$ and corresponding domain $\mathcal{R} \in \scbr{\cX, \cZ}$, we abbreviate \(L_2(R) \equiv L_2(\cR, \pi_R)\) as the space of measurable functions $f: \cR \to \R$ such that \(\int |f|^2 \, d\pi_R < \infty\), defined up to \(\pi_R\)-almost everywhere equivalence.

\textbf{Operators on Hilbert Spaces.}
Let \(H\) be a Hilbert space. For a bounded linear operator $A$ acting on $H$, we denote its operator norm by \(\|A\|_{\mathrm{op}}\), its Hilbert--Schmidt norm by \(\normHS{A}\), its Moore--Penrose inverse by \(A^\dagger\), and its adjoint by \(A^\star\). For finite-dimensional operators, the Hilbert--Schmidt norm coincides with the Frobenius norm. We denote the range and null space of \(A\) by \(\mathcal{R}(A)\) and \(\mathcal{N}(A)\), respectively. Given a closed subspace \(M \subseteq H\), we write \(M^\perp\) for its orthogonal complement, \(\overline{M}\) for its closure, and \(\Pi_M\) for the orthogonal projection onto \(M\). The orthogonal projection onto \( M^\perp \) is denoted by \((\Pi_M)_\perp \doteq I_H - \Pi_M = \Pi_{M^\perp}\). For functions \(f, h \in L_2(X)\), \(g \in L_2(Z)\), the rank-one operator \(g \otimes f\) is defined as \((g \otimes f)(h) = \langle h, f \rangle_{L_2(X)} g\). This generalizes the standard outer product. For vectors \(x \in \mathbb{R}^d\), we write \(\|x\|_{\ell_2}\) for the Euclidean norm.

\textbf{Data Splitting and Empirical Expectations.} 
Given \(n, m \geq 0\), we consider two independent datasets: an unlabeled dataset, \(\tilde{\mathcal{D}}_m = \{(\tilde{z}_i, \tilde{x}_i)\}_{i=1}^m\), used to learn features for \(X\) and \(Z\), and a labeled dataset, \(\mathcal{D}_n = \{(z_i, x_i, y_i)\}_{i=1}^n\), used to estimate the structural function. For \(R \in \{X,Z,Y,(X,Z),(X,Y),(Z,Y),(X,Z,Y)\}\) and a measurable function \(f\), we define the empirical expectation over \(\tilde{\mathcal{D}}_m\) as \(\hat{\mathbb{E}}_m[f(R)] \doteq \frac{1}{m} \sum_{i=1}^m f(\tilde{r}_i)\), and similarly for \(\hat{\mathbb{E}}_n\) on \(\mathcal{D}_n\).

\textbf{Feature Maps and Projection Operators.}
Let \(d \geq 1\), and let \(\varphi: \cX \to \mathbb{R}^d\) be a feature map with components \(\varphi_i \in L_2(X)\) for \(i = 1, \dots, d\). We define \(\mathcal{H}_{\varphi,d} \doteq \operatorname{span}\{\varphi_1, \ldots, \varphi_d\}\), and let \(\Pi_{\varphi,d}\) be the $L_2-$orthogonal projection onto this space. Analogous definitions apply for feature maps \(\psi: \cZ \to \mathbb{R}^d\), yielding \(\mathcal{H}_{\psi,d}\) and \(\Pi_{\psi,d}\). 

\vspace{-5pt}
\subsection{On the Conditional Expectation Operator: NPIV as an Ill-posed Inverse Problem}
\vspace{-5pt}

We begin by defining the conditional expectation operator. Let
\begin{equation*}
    \mathcal{T} : L_2(X) \to L_2(Z), \quad h \mapsto \mathbb{E}[h(X) \mid Z]\,.
\end{equation*}
As highlighted by \Cref{eq:npiv_integral}, this operator plays a central role in the NPIV model. It encodes the conditional distribution of $X$ given $Z$ and provides a convenient lens through which to analyze how efficiently one can estimate \(h_0\). The NPIV equation admits a solution if and only if \(r_0 \in \cR(\cT)\), and is identifiable if and only if \(\cT\) is injective. We therefore adopt the following assumption.
\begin{asst} \label{asst:exist}
    $\cT$ is injective and there exists a solution to the NPIV problem, i.e., $\cT^{-1}(\{r_0\}) \ne \emptyset$.
\end{asst}
\vspace{-\parskip}
The existence part is essential: without it, there is no structural function \(h_0\) satisfying the model in \Cref{eq:npiv}. When $\cT$ is not injective, we lose identifiability (\ie, unicity of the solution). In such scenarios, it is possible to establish consistency to a minimum-norm solution of the inverse problem \Cref{eq:npiv_integral}, namely \(h_* \doteq \cT^{\dagger} r_0\) \citep{florens2011identification}.\footnote{This corresponds to the unique element in the pre-image of \(r_0\) under \(\cT\) that minimizes the \(L_2\)-norm and is equivalently characterized as the unique element of the set $\cT^{-1}(\{r_0\}) \cap \cN(\cT)^{\perp}$ \citep{bennett2023minimax}.} We assume injectivity henceforth to simplify the discussion.

\Cref{eq:npiv_integral} is an inverse problem in which both the operator \(\cT\) and the right-hand side \(r_0\) are unknown. This inverse problem is typically \emph{ill-posed}, as \(\cT^{-1}\) is generally not continuous.\footnote{In particular, \(\cT^{-1}\) is discontinuous if \(\cT\) is compact and not finite rank.} Hence, even if \(\cT\) were known, an approximation \(\hat{r} \approx r_0\) would not ensure that \(\cT^{-1} \hat{r}\) is close to \(h_0 = \cT^{-1} r_0\). In such settings, regularization is required to obtain stable approximate inverses. Two common regularization approaches are \emph{Tikhonov regularization}, where \(\cT^{-1} \approx (\cT^\star \cT + \lambda I)^{-1} \cT^\star\), $\lambda > 0$, and \emph{spectral cut-off}, which truncates the Singular Value Decomposition (SVD) \citep{EnglHankeNeubauer2000}.

When \(\cT\) is compact (and therefore it admits a countable sequence of singular values), the \emph{degree of ill-posedness} of the inverse problem is characterized by the decay rate of these singular values. Since the singular values of \(\cT^{-1}\) are the reciprocals of those of \(\cT\), faster decay implies worse ill-posedness. The NPIV problem can be viewed as particularly challenging, as in addition to the ill-posedness of the operator \(\cT\), one must also estimate it from data. We now introduce a sufficient condition for compactness of \(\cT\).
\begin{asst} \label{asst:dominated_joint}
    The joint distribution \(\pi_{X Z}\) is dominated by the product measure \(\pi_X \otimes \pi_Z\), and its Radon–Nikodym derivative, or ``density'', \(p \doteq \frac{\diff \pi_{X Z}}{\diff \pi_X \otimes \pi_Z}\) belongs to \(L_2(\cX \times \cZ, \pi_X \otimes \pi_Z)\).
\end{asst}
\Cref{asst:dominated_joint} is standard in the NPIV literature (Assumption A.1 in~\cite{darolles2011nonparametric}). It may fail when \(X\) and \(Z\) share variables or are deterministically related. For example, if \(Z = (W, V)\) and \(X = V\), then the joint distribution is not dominated by the product measure. 

\begin{restatable}{prop}{THS}\label{prop:T_HS}
    Under \Cref{asst:dominated_joint}, \(\cT\) is a Hilbert–Schmidt operator and thus compact. 
\end{restatable}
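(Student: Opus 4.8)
The plan is to show that $\cT$ is an integral (Hilbert--Schmidt) operator whose kernel is, up to normalization, the density $p$ from \Cref{asst:dominated_joint}, and then verify that this kernel is square-integrable with respect to the natural product measure. Concretely, for $h \in L_2(X)$, I would first rewrite the conditional expectation as an integral against the conditional density. Using \Cref{asst:dominated_joint}, the conditional law of $X$ given $Z = z$ has density $x \mapsto p(x,z)$ with respect to $\pi_X$, so that
\begin{equation*}
    (\cT h)(z) = \bbE[h(X) \mid Z = z] = \int_{\cX} h(x)\, p(x,z)\, \diff \pi_X(x).
\end{equation*}
This exhibits $\cT$ as an integral operator with kernel $p(x,z)$ acting between $L_2(X)$ and $L_2(Z)$.

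The second step is the standard Hilbert--Schmidt criterion: an integral operator with kernel $k$ acting from $L_2(\cX, \pi_X)$ to $L_2(\cZ, \pi_Z)$ is Hilbert--Schmidt with $\normHS{\cT}^2 = \int_{\cX \times \cZ} |k(x,z)|^2 \, \diff(\pi_X \otimes \pi_Z)(x,z)$. Applying this with $k = p$, I would conclude
\begin{equation*}
    \normHS{\cT}^2 = \int_{\cX \times \cZ} |p(x,z)|^2 \, \diff(\pi_X \otimes \pi_Z)(x,z) = \norm{p}_{L_2(\pi_X \otimes \pi_Z)}^2 < \infty,
\end{equation*}
where finiteness is exactly the assumption $p \in L_2(\cX \times \cZ, \pi_X \otimes \pi_Z)$. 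Since every Hilbert--Schmidt operator is compact, this yields both conclusions at once.

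To make this rigorous rather than formal, I would establish the kernel representation carefully. The cleanest route is to fix an orthonormal basis $(e_i)$ of $L_2(X)$ and $(f_j)$ of $L_2(Z)$ and compute the matrix entries $\inp{\cT e_i, f_j}_{L_2(Z)}$ directly. By Fubini--Tonelli (justified by $p \in L_2 \subseteq L_1$ of the product measure, since both marginals are probability measures), each entry equals $\inp{p, f_j \otimes e_i}_{L_2(\pi_X \otimes \pi_Z)}$, i.e. the Fourier coefficient of $p$ against the product orthonormal basis $\{f_j \otimes e_i\}$ of $L_2(\pi_X \otimes \pi_Z)$. Summing squares via Parseval then gives $\sum_{i,j} \abs{\inp{\cT e_i, f_j}}^2 = \norm{p}_{L_2(\pi_X \otimes \pi_Z)}^2$, which is precisely the definition of $\normHS{\cT}^2$ and is finite by assumption.

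The main obstacle is the measure-theoretic bookkeeping around the conditional-expectation-to-integral-kernel identification: one must justify that $\bbE[h(X)\mid Z=z] = \int h(x) p(x,z)\, \diff\pi_X(x)$ holds $\pi_Z$-almost everywhere (this is where absolute continuity of $\pi_{XZ}$ w.r.t.\ $\pi_X \otimes \pi_Z$ is used, via the disintegration/Bayes argument), and that the interchange of integration in the Fubini step is legitimate. These are routine but need $p \in L_1(\pi_X \otimes \pi_Z)$, which follows from $p \in L_2$ together with the finiteness of the probability measures by Cauchy--Schwarz. Everything else is a direct application of the square-integrable-kernel characterization of Hilbert--Schmidt operators.
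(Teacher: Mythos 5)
Your proposal is correct and follows essentially the same route as the paper: both first establish the kernel representation $\cT h(z) = \int_{\cX} h(x)\, p(x,z)\, \diff\pi_X(x)$ via disintegration and uniqueness of the Radon--Nikodym derivative, and then identify $\normHS{\cT}$ with $\norm{p}_{L_2(\pi_X \otimes \pi_Z)}$ by an orthonormal-basis/Parseval computation. The only cosmetic difference is that the paper applies Parseval slice-wise in $L_2(X)$ for each fixed $z$ (so only Tonelli for nonnegative terms is needed), whereas you expand against a tensor-product basis of $L_2(\pi_X \otimes \pi_Z)$, where the Fubini step should be justified by the Cauchy--Schwarz bound on $\abs{e_i(x) f_j(z) p(x,z)}$ rather than by $p \in L_1$ alone, since the basis functions need not be bounded.
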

This result is well known in the inverse problems literature; we provide a proof in Appendix~\ref{sec:proofs_sec_background}.

\textbf{Compact Decomposition and Eigensubspaces.} 
As \(\cT\) is a conditional expectation operator, it is non-expansive, \ie, \(\|\cT\|_{\text{op}} \leq 1\). Under \Cref{asst:dominated_joint}, \(\cT\) is compact, and therefore admits a SVD
\begin{equation} \label{eq:svd_T}
    \cT = \sum_{i \geq 1} \sigma_i u_i \otimes v_i\,,
\end{equation}
where \(\{u_i\} \) and \(\{v_i\}\) are orthonormal basis of $L_2(Z)$ and $L_2(X)$ respectively (left and right singular functions, respectively), and \(\sigma_i\) are the nonnegative singular values in nonincreasing order. The leading singular triplet is known explicitly: \(\sigma_1 = 1\), \(u_1 = \mathbbm{1}_{\cZ}\) and \(v_1 = \mathbbm{1}_{\cX}\) corresponding to the constant functions. 
Under the assumption that \(\cT\) is injective (\Cref{asst:exist}) 
$\sigma_i > 0$ for all $i \geq 1$. Finally, since \(\cT\) is Hilbert–Schmidt, \(\sum_{i \geq 1} \sigma_i^2 < \infty\). For any \(d \geq 1\), define the leading eigensubspaces of $\cT$ as
\begin{align*}
    \cU_d &\doteq \text{span}\{u_1, \ldots, u_d\}, \quad \Pi_{\cZ,d}: \text{ orthogonal projection onto } \cU_d, \\ 
    \cV_d &\doteq \text{span}\{v_1, \ldots, v_d\}, \quad \Pi_{\cX,d}: \text{ orthogonal projection onto } \cV_d.
\end{align*}
In addition, define $\cT_d \doteq \sum_{i=1}^d \sigma_i u_i \otimes v_i$ the rank-$d$ truncation of $\cT$. To avoid ambiguity in the definition of $\cU_d, \cV_d$ and $\mathcal{T}_d$, we assume that $\sigma_d > \sigma_{d+1}$. We refer to \Cref{app:spectral} for more details on spectral theory.

\vspace{-5pt}
\section{Solving NPIV with 2SLS} \label{section:2SLS}
\vspace{-5pt}

To understand how and when spectral features should be used, we first recall results from the Sieve 2SLS literature. This gives us a clear picture of what terms are to be controlled in order to achieve good generalization, which we will use to demonstrate the effectiveness of spectral features and reveal our good-bad-ugly taxonomy.

\subsection{Sieve 2SLS estimator}

We start with the following characterization of the structural function: 
\begin{equation*}
    h_0 = \argmin_{h \in L_2 \spr{X}}\, \bbE \sbr{\spr{Y - \cT h \spr{Z}}^2}\,,
\end{equation*}
where the minimizer, $h_0$, is unique under \Cref{asst:exist}. A popular strategy to estimate $h_0$ is to consider a hypothesis space $\cH_\cX \subset L_2 \spr{X}$ and a procedure to estimate the action of $\cT$: $\hat \cT h \approx \cT h$ for all $h \in \cH_{\cX}$, in order to obtain
\begin{equation} \label{eq:npiv_approx}
    \hat h \in \argmin_{h \in \cH_\cX}\, \hat \E_n \sbr{\spr{Y - \hat \cT h \spr{Z}}^2}\,,
\end{equation}
possibly subject to regularization. In the algorithm 2SLS, we consider features on both $\cX$ and $\cZ$, and both steps (estimating $h \mapsto \cT h$ and solving \Cref{eq:npiv_approx}) are carried out via linear regression in feature space. The two sets of features may be infinite-dimensional---\eg, using Reproducing Kernel Hilbert Spaces (RKHS)---or finite-dimensional---\eg, using splines, wavelets, or neural network features. When the feature dimension grows with sample size, we obtain a \emph{sieve}. Since our focus is on neural network-based features, we present the finite-dimensional version, noting that the ideas naturally extend to infinite-dimensional cases with proper regularization \citep{singh2019kernel,meunier2024nonparametric}. We refer to Appendix~\ref{sec:extended_related_work} for a description of other 2SLS methods. 

Consider a feature map $\varphi: \cX \to \Rd$ such that $\cH_\cX \doteq \{x \mapsto \theta\transpose \varphi \spr{x}, \theta \in \Rd \}$ is included in $L_2 \spr{X}$. Then for any $h = \theta\transpose \varphi \spr{\cdot} \in \cH_\cX$,
$
    \cT h \spr{Z} = \theta\transpose \bbE \sbr{\varphi \spr{X} \mid Z}\,.
$
Therefore, estimating $F_\star(Z) \doteq \bbE[\varphi(X)\mid Z]$ allows us to estimate $\cT h$ for any $h \in \cH_\cX$. Let $\psi: \cZ \to \Rd$ be another feature map with $\cH_\cZ \doteq \{z \mapsto \alpha\transpose \psi(z), \alpha \in \Rd \} \subset L_2(Z)$. We estimate $F_\star$ with vector-valued linear regression in feature space by approximating $F_\star$ with $A\psi(\cdot)$, where $A \in \R^{d \times d}$, solving
\[
\hat A \doteq \hat \E_n[\varphi(X) \psi(Z)\transpose]\hat \E_n[\psi(Z) \psi(Z)\transpose]^{\dagger} \in \argmin_{A \in \R^{d \times d}}\hat \E_n[\|\varphi(X) - A \psi(Z)\|_{\ell_2}^2]\,.
\]
The pseudo-inverse yields the \emph{minimum-norm solution} in the case of non-uniqueness. Regularized versions, such as Tikhonov-regularized least squares (also known as ridge regression), can be obtained by penalizing with $\eta \normHS{A}^2$ for $\eta > 0$. Computing $\hat A$ is referred to as the \emph{first stage} of the 2SLS algorithm. For the \emph{second stage}, for any $h = \theta\transpose \varphi(\cdot) \in \cH_{\cX}$, we define $\hat \cT h(Z) \doteq \theta\transpose \hat F(Z)$ with $\hat F(Z) = \hat A \psi(Z)$. Plugging this into \Cref{eq:npiv_approx} yields $\hat h = \hat \theta\transpose \varphi(\cdot)$ with
\begin{equation}\label{eq:2sls_estim}
\begin{aligned}
\hat \theta &\doteq \hat \E_n[\hat F(Z)\hat F(Z)\transpose]^{\dagger}\hat \E_n[\hat F(Z)Y] \in \argmin_{\theta \in \Rd} \hat \E_n[(Y - \theta\transpose \hat F(Z))^2]\,.
\end{aligned}
\end{equation}
Similarly to the first stage, we can also consider a regularized version of the second stage, where we penalize with $\lambda \|\theta\|_{\ell_2}^2$ for $\lambda > 0$. $\hat h$ is the sieve NPIV estimator \cite{blundell2007semi,chen2012estimation,chen2018optimal}. 
\vspace{-\parskip}
\paragraph{2SLS versus saddle-point estimator.} Based on features $\varphi$ and $\psi$, \cite{zhang2023instrumental,sun25speciv} proposed an estimator in a saddle-point form: $\hat h_{bis} = \hat \theta_{bis}\transpose \varphi(\cdot)$, with 
$$
\hat \theta_{bis} = \argmin_{\theta \in \Rd} \max_{\alpha \in \Rd} \hat \E_n[\alpha\transpose \psi(Z)\left(Y - \theta\transpose \varphi(X)\right) - 0.5 \cdot (\alpha\transpose \psi(Z))^2 ] + \lambda \|\theta\|_{\ell_2}^2\,.
$$
Albeit looking different, solving this min-max problem in closed-form leads to $\hat \theta_{bis} = \hat \theta$ (when we introduce regularization in \Cref{eq:2sls_estim}), and thus $\hat h_{bis} = \hat h$. The saddle-point form could be useful for deriving convergence rates, see \cite{bennett2023minimax} for example, as it allows us to use the theory of saddle-point problems. However, it is not necessary for the implementation of the algorithm. 

\vspace{-5pt}
\subsection{Convergence rates for Sieve 2SLS}
\vspace{-5pt}

In order to present convergence rates for Sieve 2SLS, we introduce the sieve measure of ill-posedness. 

\begin{definition} Given $d \geq 1$ and $\cH_{\varphi,d} = \operatorname{span}\{\varphi_1, \ldots, \varphi_d\}$, the sieve measure of ill-posedness is defined as
$$
\tau_{\varphi,d} \doteq \sup_{h \in \cH_{\varphi,d},~h \ne 0 } \frac{\|h\|_{L_2(X)}}{\|\cT h\|_{L_2(Z)}} = \left( \inf_{h \in \cH_{\varphi,d},~\|h\|_{L_2}=1}\|\cT h\|_{L^2(Z)} \right)^{-1}.
$$
\end{definition}
\vspace{-\parskip}
This quantity can be interpreted as the operator norm of $\cT^{-1}$ restricted to $\cT(\cH_{\varphi,d})$; that is, $\tau_{\varphi,d} = \sup_{g \in \cT(\cH_{\varphi,d}),~h \ne 0 } \|\cT^{-1}g\|_{L_2(X)}/\|g\|_{L_2(Z)}.$ While the full operator \( \mathcal{T}^{-1} \) is typically unbounded, the restricted norm \( \tau_{\varphi,d} \) remains finite. It captures the degree of ill-posedness specific to the chosen hypothesis space \( \mathcal{H}_{\varphi,d} \). As \( d \) increases and the hypothesis class becomes richer, \( \tau_{\varphi,d} \) increases. Equipped with this quantity, we now introduce stability conditions that will allow us to establish generalization bounds for Sieve 2SLS.

\begin{asst}\label{asst:assumption_L_2_consistency} (i) $\sup _{h \in \cH_{\varphi,d},~\|h\|_{L_2}=1}\left\|\left(\Pi_{\psi,d}\right)_{\perp}\cT  h\right\|_{L^2(Z)}=o_d\left(\tau_{\varphi,d}^{-1}\right)$; (ii) there is a constant $C > 0$ such that $\left\|\cT\left(\Pi_{\varphi,d} \right)_{\perp} h_0\right\|_{L^2(Z)} \leq C \times \tau_{\varphi,d}^{-1} \times \left\|\left(\Pi_{\varphi,d} \right)_{\perp} h_0\right\|_{L^2(X)}$. 
\end{asst}
\vspace{-\parskip}
\Cref{asst:assumption_L_2_consistency}-(i) is a condition on the approximation properties of the features used for
the instrument space. It says the image of $\mathcal{H}_{\varphi,d}$ through $\mathcal{T}$ lies almost entirely inside $\mathcal{H}_{\psi,d}$, \ie, the projection onto the orthogonal complement vanishes faster than the inverse ill-posedness. \Cref{asst:assumption_L_2_consistency}-(ii) is a stability condition sometimes referred as \emph{link condition} that is standard in the NPIV literature (see Assumption 6 in \cite{blundell2007semi},
Assumption 5.2(ii) in \cite{chen2012estimation} and Assumption 3 in \cite{kim2025optimality}). It says that $\mathcal{T}$ becomes increasingly contractive on the component of $h_0$ lying outside the sieve space as the sieve dimension increases. The next assumption ensures the conditional variance of the noise is uniformly bounded.

\begin{asst}\label{asst:noise_sieve} There exists $\bar{\sigma}> 0$ such that $\E\left[U^2 \mid Z\right] \leq \bar{\sigma}^2<\infty$ almost everywhere.
\end{asst}
\vspace{-\parskip}
The following result is an upper bound on the generalization error from \cite{chen2018optimal}.
\begin{theorem}[Theorem B.1.~\cite{chen2018optimal}] \label{th:sieve_chen}
    Let $\hat h$ be the 2SLS estimator from \Cref{eq:2sls_estim}, using features $\varphi: \cX \to \Rd$ and $\psi: \cZ \to \Rd$.  Suppose \Cref{asst:exist,asst:assumption_L_2_consistency,asst:noise_sieve} hold and $\tau_{\varphi,d} \zeta_{\varphi, \psi, d} \sqrt{(\log d) / n}=o(1)$, where $\zeta_{\varphi,\psi,d} \doteq \max\{ \sup_{x} \left\|\mathbb{E}[\varphi(X)\varphi(X)\transpose]^{-1/2} \varphi(x) \right\|_{\ell_2},\ 
\sup_{z} \left\|\mathbb{E}[\psi(Z)\psi(Z)\transpose]^{-1/2} \psi(z) \right\|_{\ell_2} \}$. Then:
$$
\left\|\hat h-h_0\right\|_{L^2(X)}=O_p\left(\left\|(\Pi_{\varphi,d})_{\perp} h_0\right\|_{L^2(X)}+\tau_{\varphi,d} \sqrt{d / n}\right).
$$
\end{theorem}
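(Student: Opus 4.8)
The plan is to split $\hat h - h_0$ into a deterministic approximation (bias) part and a stochastic estimation part, and to transport the analysis from the ill-behaved $L_2(X)$ geometry into the instrument space $L_2(Z)$ in which the estimator is actually fitted. First I would decompose $h_0 = \Pi_{\varphi,d} h_0 + (\Pi_{\varphi,d})_\perp h_0$ and apply the triangle inequality
\[
\norm{\hat h - h_0}_{L_2(X)} \le \norm{\hat h - \Pi_{\varphi,d} h_0}_{L_2(X)} + \norm{(\Pi_{\varphi,d})_\perp h_0}_{L_2(X)},
\]
so that the second summand is exactly the claimed bias term and only the first term, which lies entirely in the sieve $\cH_{\varphi,d}$, remains. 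Since $\hat h - \Pi_{\varphi,d} h_0 \in \cH_{\varphi,d}$, the definition of the sieve measure of ill-posedness immediately yields the key reduction
\[
\norm{\hat h - \Pi_{\varphi,d} h_0}_{L_2(X)} \le \tau_{\varphi,d}\, \norm{\cT\spr{\hat h - \Pi_{\varphi,d} h_0}}_{L_2(Z)},
\]
trading a factor $\tau_{\varphi,d}$ for the ability to work with $\cT$ applied to the sieve, that is, with the fitted conditional expectation rather than with $h$ itself.

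Next I would pass to coordinates. Writing $\hat h = \hat\theta\transpose \varphi$ and $\Pi_{\varphi,d} h_0 = \theta_0\transpose \varphi$, collecting the features into design matrices $\Phi, \Psi$ with rows $\varphi(x_i)\transpose, \psi(z_i)\transpose$, and substituting $\mathbf{Y} = \Phi\theta_0 + \mathbf{r}_d + \mathbf{u}$ (with $\mathbf{r}_d$ the vector of residuals $(\Pi_{\varphi,d})_\perp h_0$ and $\mathbf{u}$ the noise, both evaluated at the sample), the closed form of \Cref{eq:2sls_estim} reduces to
\[
\hat\theta - \theta_0 = \spr{\hat\Phi\transpose \hat\Phi}^{\dagger} \hat\Phi\transpose \spr{\mathbf{r}_d + \mathbf{u}}, \qquad \hat\Phi \doteq P_\Psi \Phi,
\]
where $P_\Psi$ is the empirical projection onto the instrument features. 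Because $\norm{\cT(\theta\transpose \varphi)}_{L_2(Z)}^2 = \theta\transpose G_F \theta$ with $G_F \doteq \bbE[F_\star F_\star\transpose]$ and $F_\star = \bbE[\varphi(X)\mid Z]$, the object to bound becomes $\norm{\cT(\hat h - \Pi_{\varphi,d} h_0)}_{L_2(Z)}^2 = (\hat\theta - \theta_0)\transpose G_F (\hat\theta - \theta_0)$, which I would split into a noise contribution (from $\mathbf{u}$) and an approximation contribution (from $\mathbf{r}_d$).

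The two contributions are then handled separately, assuming for the moment that the empirical Gram matrices may be replaced by their population counterparts. For the noise term, $P_\Psi$ is a function of the instruments only and $\bbE[U\mid Z]=0$, so $\tfrac1n \hat\Phi\transpose \mathbf{u}$ is a mean-zero average whose conditional covariance is controlled by $\bbE[U^2\mid Z]\le \bar\sigma^2$ (\Cref{asst:noise_sieve}); summing over the $d$ coordinates gives a $G_F$-weighted norm of order $\sqrt{d/n}$, which after the $\tau_{\varphi,d}$ factor produces the variance term $\tau_{\varphi,d}\sqrt{d/n}$. For the approximation term, the population analogue of $\tfrac1n \hat\Phi\transpose \mathbf{r}_d$ is the $L_2(Z)$ inner product between $\cT\varphi$ and $\cT (\Pi_{\varphi,d})_\perp h_0$, so its $G_F$-weighted norm is at most $\norm{\cT (\Pi_{\varphi,d})_\perp h_0}_{L_2(Z)}$, and the link condition \Cref{asst:assumption_L_2_consistency}-(ii) bounds this by $C\,\tau_{\varphi,d}^{-1} \norm{(\Pi_{\varphi,d})_\perp h_0}_{L_2(X)}$; the $\tau_{\varphi,d}$ factor cancels and this contribution is absorbed into the bias term, completing the bound.

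The main obstacle is the step I glossed over: justifying that $\tfrac1n \Phi\transpose P_\Psi \Phi$ concentrates around $G_F$ with comparable eigenvalues, so that the pseudo-inverse is stable and the oracle linear-algebra identities above hold up to $1+o_p(1)$ factors. This is where both parts of \Cref{asst:assumption_L_2_consistency} enter: part~(i) guarantees that the instrument sieve captures the image $\cT\cH_{\varphi,d}$, so that the first stage estimates the intended object $F_\star$ rather than a biased projection of it, while the concentration itself is established via matrix Bernstein-type inequalities whose boundedness and variance proxies are governed by $\zeta_{\varphi,\psi,d}$. The rate condition $\tau_{\varphi,d}\zeta_{\varphi,\psi,d}\sqrt{(\log d)/n}=o(1)$ is precisely what renders these deviations negligible relative to the leading terms. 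Propagating this two-stage error---the first-stage projection error compounding with the second-stage inversion---without degrading the sharp $\tau_{\varphi,d}\sqrt{d/n}$ rate is the technically delicate core of the argument, and is the reason one works throughout in the $\cT$-transformed $L_2(Z)$ geometry rather than directly in $L_2(X)$.
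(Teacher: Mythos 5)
You should first note a point of order: the paper never proves \Cref{th:sieve_chen} at all. It is imported with attribution as Theorem B.1 of \cite{chen2018optimal}, and the appendix only proves results downstream of it (\Cref{prop:meas_ill_pos_complete} and \Cref{cor:rate_spectral_2sls}). There is therefore no internal proof to compare against; the natural benchmark is the proof in the cited reference. Measured against that benchmark, your sketch follows essentially the same route: the bias/variance split through $\Pi_{\varphi,d}h_0$, the reduction $\norm{\hat h - \Pi_{\varphi,d}h_0}_{L_2(X)} \le \tau_{\varphi,d}\norm{\cT(\hat h - \Pi_{\varphi,d}h_0)}_{L_2(Z)}$ valid because the difference lies in the sieve, the coordinate identity $\hat\theta - \theta_0 = (\hat\Phi\transpose\hat\Phi)^\dagger\hat\Phi\transpose(\mathbf{r}_d + \mathbf{u})$, the $\sqrt{d/n}$ variance computation, the cancellation of $\tau_{\varphi,d}$ against the link condition \Cref{asst:assumption_L_2_consistency}-(ii) for the approximation contribution, and the correct placement of \Cref{asst:assumption_L_2_consistency}-(i), of $\zeta_{\varphi,\psi,d}$, and of the rate condition.

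Two gaps remain if this is to stand as a proof rather than a roadmap. The first you flag yourself: the claim that $\tfrac1n\Phi\transpose P_\Psi\Phi$ concentrates around $G_F$ with smallest eigenvalue bounded away from zero, so that pseudo-inverses can be exchanged for population inverses up to $1+o_p(1)$ factors. In \cite{chen2018optimal} this is not a peripheral lemma but the bulk of the argument (matrix Bernstein bounds in the orthonormalized coordinates $\bbE[\varphi\varphi\transpose]^{-1/2}\varphi$, $\bbE[\psi\psi\transpose]^{-1/2}\psi$); note moreover that the relevant population smallest singular value is itself of order $\tau_{\varphi,d}^{-1}$, which is precisely why $\tau_{\varphi,d}$ multiplies $\zeta_{\varphi,\psi,d}$ in the rate condition. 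The second gap is a genuine (if repairable) error in your noise-term argument: you claim $\tfrac1n\hat\Phi\transpose\mathbf{u}$ is mean-zero because $P_\Psi$ depends only on the instruments. But $\hat\Phi = P_\Psi\Phi$ also depends on the $x_i$'s, and under endogeneity $\bbE[\varphi(X)U\mid Z]\ne 0$ --- this is the very problem IV is meant to solve --- so the own-observation terms $\sum_i (P_\Psi)_{ii}\,\varphi(x_i)u_i$ carry a nonzero mean, and conditioning on the instruments alone does not kill it. The correct statement is that the noise enters through $\tfrac1n\Psi\transpose\mathbf{u}$, which is exactly mean-zero since $\bbE[\psi(Z)U] = \bbE[\psi(Z)\,\bbE[U\mid Z]] = 0$, while the coupling with the empirical Gram matrices, including that diagonal endogeneity bias (of order $d/n$ before the $\tau_{\varphi,d}$ factor), must be shown to be higher order --- which is again where the concentration estimates and the condition $\tau_{\varphi,d}\zeta_{\varphi,\psi,d}\sqrt{(\log d)/n}=o(1)$ are consumed. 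Neither issue derails your plan; they mark exactly where the cited proof does its real work and where your argument currently stops short.
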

The first term captures the \emph{approximation error}, while the second term corresponds to the \emph{estimation error} arising from finite samples. When standard bases such as cosine, spline, or wavelet functions are used, the approximation error is well-understood. If the sieve ill-posedness \( \tau_{\varphi,d} \) grows at a polynomial or exponential rate in \( d \), one can choose \( d = d(n) \) to balance the two terms optimally \citep{blundell2007semi}. Under such growth conditions and smoothness assumptions on \( h_0 \) (\eg, \( h_0 \) belongs to a Sobolev ball), the resulting convergence rate is minimax optimal \citep{chen2011rate}. 

\vspace{-5pt}
\subsection{Convergence rates for Sieve 2SLS with spectral features}
\vspace{-5pt}

A drawback of using non-adaptive features (such as cosine or spline bases) is that verifying Assumption~\ref{asst:assumption_L_2_consistency} or characterizing the growth of \( \tau_{\varphi,d} \) is generally difficult. In contrast, we now consider features constructed from the SVD of \( \mathcal{T} \), for which both conditions can be exactly characterized. Moreover, while estimators based on universal bases can achieve minimax optimality under smoothness assumptions on \( \mathcal{T} \) and \( h_0 \), they are often outperformed in practice by estimators using data-dependent features. This highlights that minimax rates, though important, do not capture the full picture of empirical performance. Recall from \Cref{eq:svd_T} the singular value decomposition of \( \mathcal{T} \), and the definition of the top-\( d \) right eigenspace \( \mathcal{V}_d \). The following result shows that this subspace achieves the minimal possible measure of ill-posedness among all \( d \)-dimensional subspaces of \( L_2(X) \).

\begin{prop}[Lemma 1 \cite{blundell2007semi}]\label{prop:meas_ill_pos} Let \Cref{asst:dominated_joint} hold. For any $d \geq 1$, the smallest possible value of $\tau_{\varphi,d}$ is $\tau_{\varphi,d}=\sigma_d^{-1}$, achieved when $\cH_{\varphi,d} = \cV_{d}$.
\end{prop}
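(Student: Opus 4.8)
The plan is to recognize this as the Courant--Fischer max--min characterization of the singular values of $\cT$, applied to the self-adjoint compact operator $\cT^\star \cT$. First I would rewrite the optimization. By definition $\tau_{\varphi,d}^{-1} = \inf_{h \in \cH_{\varphi,d},~\|h\|_{L_2(X)}=1}\|\cT h\|_{L_2(Z)}$, and since $\cH_{\varphi,d}$ is a finite-dimensional subspace, this infimum is attained on the (compact) unit sphere and is in fact a minimum. Minimizing $\tau_{\varphi,d}$ over all $d$-dimensional subspaces $\cH_{\varphi,d} \subseteq L_2(X)$ is therefore equivalent to solving
$$
\max_{\dim S = d}~ \min_{h \in S,~\|h\|_{L_2(X)}=1} \|\cT h\|_{L_2(Z)}^2
$$
and taking the reciprocal square root. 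Writing $\|\cT h\|_{L_2(Z)}^2 = \inp{\cT^\star \cT h, h}_{L_2(X)}$, I would use that under \Cref{asst:dominated_joint} the operator $\cT^\star\cT$ is self-adjoint, compact and positive, diagonalized in the orthonormal basis $\{v_i\}$ of $L_2(X)$ (a genuine ONB because \Cref{asst:exist} gives injectivity) with eigenvalues $\sigma_i^2$ in nonincreasing order.

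Next I would establish the two matching inequalities. For achievability, take $S = \cV_d$; any unit $h = \sum_{i=1}^d c_i v_i$ satisfies $\|\cT h\|_{L_2(Z)}^2 = \sum_{i=1}^d \sigma_i^2 c_i^2 \geq \sigma_d^2 \sum_{i=1}^d c_i^2 = \sigma_d^2$, with equality at $h = v_d$. Hence the inner minimum over $\cV_d$ equals $\sigma_d^2$, giving $\tau_{\varphi,d} = \sigma_d^{-1}$ on this subspace. For optimality, I would show no subspace does better via dimension counting: since $\overline{\operatorname{span}}\{v_i : i \geq d\} = \cV_{d-1}^\perp$ has codimension $d-1$ in $L_2(X)$, any $d$-dimensional subspace $S$ must meet it in a nonzero vector; picking a unit $h = \sum_{i \geq d} c_i v_i$ in the intersection gives $\|\cT h\|_{L_2(Z)}^2 = \sum_{i \geq d}\sigma_i^2 c_i^2 \leq \sigma_d^2$. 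Thus the inner minimum over $S$ is at most $\sigma_d^2$, i.e. $\tau_{\varphi,d} \geq \sigma_d^{-1}$ for every choice of features, and combining the two bounds yields the claim.

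The series manipulations are routine; the main obstacle I anticipate is making the intersection step rigorous in the infinite-dimensional setting, namely justifying that a $d$-dimensional subspace meets the closed codimension-$(d-1)$ subspace $\cV_{d-1}^\perp$ in at least one nonzero vector. I would handle this by considering the bounded linear map $S \to \R^{d-1}$, $h \mapsto (\inp{h, v_1}, \ldots, \inp{h, v_{d-1}})$, whose kernel $S \cap \cV_{d-1}^\perp$ has dimension at least $d - (d-1) = 1$ by rank--nullity, and this kernel is precisely the subspace where the upper-bound computation applies. I would also note that the standing assumption $\sigma_d > \sigma_{d+1}$ is what makes the optimal subspace $\cV_d$ (and hence the attained minimizer) unambiguous.
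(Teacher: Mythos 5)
Your proof is correct and follows essentially the same route as the paper's: the paper likewise proves achievability by the direct series computation $\|\cT h\|_{L_2(Z)}^2 = \sum_{i=1}^{d} \sigma_i^2 \langle h, v_i\rangle^2 \geq \sigma_d^2$ on $\cV_d$, and proves optimality by exhibiting a unit vector in $\cH_{\varphi,d} \cap \cV_{d-1}^{\perp}$ via the same dimension-counting argument you make explicit with rank--nullity. The only cosmetic difference is that where you inline the spectral bound $\|\cT h\|^2 \leq \sigma_d^2$ on $\cV_{d-1}^{\perp}$ by expanding in the basis $\{v_i\}$ (which, as you note, uses injectivity from \Cref{asst:exist}, a standing assumption of the paper), the paper instead cites the min--max theorem for the compact self-adjoint operator $\cT^{\star}\cT$.
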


\begin{definition}[Spectral features]\label{def:spectral_features}
We say that the features \( \varphi \) and \( \psi \) are \emph{spectral features} if 
\[
\mathcal{H}_{\varphi,d} = \operatorname{span}\{v_1, \ldots, v_d\} = \mathcal{V}_d
\quad \text{and} \quad 
\mathcal{H}_{\psi,d} = \operatorname{span}\{u_1, \ldots, u_d\} = \mathcal{U}_d.
\]
\end{definition}
\vspace{-\parskip}
By Proposition~\ref{prop:meas_ill_pos}, spectral features minimize the sieve measure of ill-posedness, and therefore the estimation error term. This leads to the following corollary.

\begin{restatable}[Sieve 2SLS with spectral features]{corollary}{ratespectral}\label{cor:rate_spectral_2sls}

Let \( \hat{h} \) be the 2SLS estimator from \Cref{eq:2sls_estim} using spectral features. Let \Cref{asst:exist,asst:dominated_joint,asst:noise_sieve} hold and \( \sigma_d^{-1} \zeta_{\varphi, \psi, d} \sqrt{(\log d)/n} = o(1) \). Then:
\[
\left\|\hat{h} - h_0 \right\|_{L^2(X)} = O_p\left( \left\| (\Pi_{\mathcal{X},d})_\perp h_0 \right\|_{L^2(X)} + \sqrt{\frac{d}{n\sigma_d^2}} \right).
\]
\end{restatable}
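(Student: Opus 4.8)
The plan is to apply \Cref{th:sieve_chen} directly, reducing the task to verifying its hypotheses for spectral features and then substituting the resulting quantities into its conclusion. \Cref{asst:exist,asst:noise_sieve} are assumed in the statement, and \Cref{asst:dominated_joint} guarantees via \Cref{prop:T_HS} that $\cT$ is compact, so the SVD in \Cref{eq:svd_T} exists and the spectral features of \Cref{def:spectral_features} are well defined. By \Cref{prop:meas_ill_pos}, the choice $\cH_{\varphi,d}=\cV_d$ yields $\tau_{\varphi,d}=\sigma_d^{-1}$, so the scaling condition $\tau_{\varphi,d}\zeta_{\varphi,\psi,d}\sqrt{(\log d)/n}=o(1)$ required by \Cref{th:sieve_chen} is exactly the assumed $\sigma_d^{-1}\zeta_{\varphi,\psi,d}\sqrt{(\log d)/n}=o(1)$. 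The only substantive step is therefore to check that \Cref{asst:assumption_L_2_consistency} holds for spectral features.

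For part (i), I would exploit the defining relation $\cT v_i=\sigma_i u_i$. Any $h\in\cH_{\varphi,d}=\cV_d$ has the form $h=\sum_{i=1}^d c_i v_i$, so $\cT h=\sum_{i=1}^d c_i\sigma_i u_i\in\cU_d=\cH_{\psi,d}$. Hence $(\Pi_{\psi,d})_\perp\cT h=0$ identically, the supremum in \Cref{asst:assumption_L_2_consistency}-(i) is exactly zero, and the condition $o_d(\tau_{\varphi,d}^{-1})$ holds trivially. Intuitively, spectral features align the Stage-1 and Stage-2 subspaces perfectly, so no error is incurred in representing the image of the treatment features under $\cT$.

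For part (ii), I would use orthonormality of the singular functions together with the monotonicity of the singular values. Since $\Pi_{\varphi,d}=\Pi_{\cX,d}$ for spectral features, we have $(\Pi_{\varphi,d})_\perp h_0=\sum_{i>d}\inp{h_0,v_i}v_i$ and hence $\cT(\Pi_{\varphi,d})_\perp h_0=\sum_{i>d}\sigma_i\inp{h_0,v_i}u_i$. Taking $L_2$ norms and using $\sigma_i\le\sigma_{d+1}\le\sigma_d=\tau_{\varphi,d}^{-1}$ for every $i>d$ gives $\|\cT(\Pi_{\varphi,d})_\perp h_0\|_{L^2(Z)}\le\sigma_d\|(\Pi_{\varphi,d})_\perp h_0\|_{L^2(X)}$, so the link condition holds with constant $C=1$.

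With every hypothesis verified, I would substitute $\tau_{\varphi,d}=\sigma_d^{-1}$ and $\Pi_{\varphi,d}=\Pi_{\cX,d}$ into the conclusion of \Cref{th:sieve_chen}: the estimation term $\tau_{\varphi,d}\sqrt{d/n}$ becomes $\sqrt{d/(n\sigma_d^2)}$, and the approximation term is $\|(\Pi_{\cX,d})_\perp h_0\|_{L^2(X)}$, which is the claimed bound. I do not anticipate a genuine obstacle; the only delicate point is recognizing that the exact SVD structure makes \Cref{asst:assumption_L_2_consistency} hold with these sharp constants, and indeed with equality to zero in part (i), rather than merely asymptotically as for generic sieve bases.
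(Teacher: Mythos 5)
Your proposal is correct and matches the paper's proof essentially step for step: both verify \Cref{asst:assumption_L_2_consistency}-(i) by observing that $\cT$ maps $\cV_d$ into $\cU_d$ so the orthogonal projection term vanishes identically, both establish the link condition (ii) with constant $C=1$ via the SVD expansion of $\cT(\Pi_{\varphi,d})_\perp h_0$, and both conclude by substituting $\tau_{\varphi,d}=\sigma_d^{-1}$ from \Cref{prop:meas_ill_pos} into \Cref{th:sieve_chen}. No gaps.
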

\vspace{-\parskip}
We postpone the proof to Appendix~\ref{sec:proofs_sec_2sls}.

\textbf{The good, the bad, and the ugly.} The bound in \Cref{cor:rate_spectral_2sls} decomposes the generalization error into two terms: \emph{spectral alignment}, encoded in the approximation error \( \left\| (\Pi_{\mathcal{X},d})_\perp h_0 \right\| \), which measures how well \( h_0 \) lies in the top-$d$ singular space; and \emph{ill-posedness}, captured by the inverse singular value \( \sigma_d^{-1} \). In the \emph{good} regime, \( h_0 \) lies mostly in \( \mathcal{U}_d \) and \( \sigma_d \) decays slowly, so both terms are small. In the \emph{bad} regime, alignment is favorable but fast spectral decay inflates the estimation error. In the \emph{ugly} regime, \( h_0 \) is misaligned with the top eigenspaces, so the approximation error dominates and estimation fails regardless of \( \sigma_d \) or \( n \). In \Cref{sec:rate_optimality}, we show that the bound in Corollary 1 is tight under a source condition and a singular value decay condition.
\vspace{-7pt}
\section{Sieve 2SLS with learned spectral features} \label{sec:spectral_2SLS}
\vspace{-7pt}


In the previous section, we showed that spectral features minimize the sieve measure of ill-posedness and lead to fast generalization rates when the structural function \( h_0 \) aligns well with the top eigenspaces of the conditional expectation operator \( \mathcal{T} \). This motivates learning such features directly from data, especially in the case where the conditional distribution of \(X\) given \(Z\) is informative (\ie, the instrument is strong) and the alignment with the spectrum of \( \mathcal{T} \) is favorable. We now demonstrate that the contrastive learning approach of \cite{sun25speciv} can be equivalently seen as explicitly targeting the leading eigenspaces of $\cT$. Consider the following objective:
\begin{equation} \label{eq:pop_loss_HS}
    \mathcal{L}_d(\varphi, \psi) \doteq \normHS{\cT_d(\varphi,\psi) - \cT}^2, \qquad \cT_d(\varphi,\psi) \doteq \sum_{i=1}^d \psi_i \otimes \varphi_i,
\end{equation}
where \( \varphi_i \in L_2(X) \), \( \psi_i \in L_2(Z) \), $i=1,\ldots,d$. As any rank-$d$ operator can be decomposed as $\cT_d(\varphi,\psi)$, we have the following version of the Eckart–Young–Mirsky Theorem for the Hilbert–Schmidt norm: 
\begin{theorem}[Eckart--Young--Mirsky Theorem] \label{thm:eckart_young_mirsky}
Let \Cref{asst:dominated_joint} hold and let \( d \geq 1 \) be such that \( \sigma_d > \sigma_{d+1} \). For all feature maps \( \varphi: \mathcal{X} \to \mathbb{R}^d \), \( \psi: \mathcal{Z} \to \mathbb{R}^d \), we have
\[
\mathcal{L}_d(\varphi, \psi) \geq \normHS{\mathcal{T} - \mathcal{T}_d}^2 = \sum_{i > d} \sigma_i^2,
\]
where equality holds if and only if \( \mathcal{T}_d(\varphi, \psi) = \mathcal{T}_d \), in which case \( \mathcal{H}_{\varphi,d} = \mathcal{V}_d \), \( \mathcal{H}_{\psi,d} = \mathcal{U}_d \). \( \mathcal{T}_d \) is the rank-\(d\) truncation of \( \mathcal{T} \) introduced in \Cref{sec:background}.
\end{theorem}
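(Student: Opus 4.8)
The plan is to recognize that the statement is the operator form of the Eckart--Young--Mirsky theorem. As the pair \((\varphi,\psi)\) ranges over all feature maps, \(\cT_d(\varphi,\psi)=\sum_{i=1}^d \psi_i\otimes\varphi_i\) ranges over exactly the bounded operators \(L_2(X)\to L_2(Z)\) of rank at most \(d\): any such \(B\) has a range of dimension \(r\le d\), and choosing an orthonormal basis \(\psi_1,\dots,\psi_r\) of \(\overline{\cR(B)}\) gives \(B=\sum_{i=1}^r \psi_i\otimes(\cT^{\text{-like}})\)-terms with \(\varphi_i\doteq B^\star\psi_i\), padded by zeros. Hence minimizing \(\mathcal{L}_d\) is the same as minimizing \(\normHS{\cT-B}^2\) over all \(B\) with \(\operatorname{rank}(B)\le d\). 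I would prove the inequality and the equality case from scratch rather than merely citing the theorem, since the statement also asserts the identifications \(\cH_{\varphi,d}=\cV_d\) and \(\cH_{\psi,d}=\cU_d\).

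First I would fix the range. For a rank-\(\le d\) operator \(B\), set \(W\doteq\overline{\cR(B)}\subseteq L_2(Z)\), so \(\dim W\le d\) and \(\Pi_W B=B\). Because the Hilbert--Schmidt inner product satisfies \(\inp{\Pi_W A, \Pi_{W^\perp} A'}_{\HS}=\tr(A^\star \Pi_W\Pi_{W^\perp}A')=0\), the decomposition \(\cT-B=\Pi_W(\cT-B)+\Pi_{W^\perp}\cT\) (using \(\Pi_{W^\perp}B=0\)) is HS-orthogonal, so \(\normHS{\cT-B}^2=\normHS{\Pi_W(\cT-B)}^2+\normHS{\Pi_{W^\perp}\cT}^2\ge \normHS{\Pi_{W^\perp}\cT}^2\), with equality iff \(B=\Pi_W\cT\). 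Thus for each admissible \(W\) the optimal \(B\) is \(\Pi_W\cT\), and the problem reduces to maximizing \(\normHS{\Pi_W\cT}^2=\tr(\cT^\star\Pi_W\cT)=\tr(\Pi_W\,\cT\cT^\star)\) over subspaces \(W\) of dimension \(\le d\).

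Next I would diagonalize. From \Cref{eq:svd_T} one computes \(\cT\cT^\star=\sum_{i\ge1}\sigma_i^2\,u_i\otimes u_i\), a positive compact self-adjoint operator on \(L_2(Z)\) with eigenvalues \(\sigma_1^2\ge\sigma_2^2\ge\cdots\) and eigenvectors \(u_i\). Writing an orthonormal basis \(\{w_1,\dots,w_d\}\) of \(W\), we have \(\tr(\Pi_W\,\cT\cT^\star)=\sum_{k=1}^d\inp{w_k,\cT\cT^\star w_k}=\sum_{k=1}^d\sum_{i\ge1}\sigma_i^2\abs{\inp{w_k,u_i}}^2\), and the Ky Fan maximum principle gives the sharp bound \(\sum_{k=1}^d\inp{w_k,\cT\cT^\star w_k}\le\sum_{i=1}^d\sigma_i^2\). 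Therefore \(\min_{\operatorname{rank}(B)\le d}\normHS{\cT-B}^2=\normHS{\cT}^2-\sum_{i=1}^d\sigma_i^2=\sum_{i>d}\sigma_i^2=\normHS{\cT-\cT_d}^2\), which is the claimed inequality. Crucially, the gap \(\sigma_d>\sigma_{d+1}\) (which also forces \(\sigma_d>0\)) makes the top-\(d\) eigenspace of \(\cT\cT^\star\) unique, so the unique maximizing subspace is \(W=\cU_d\), and the corresponding optimal operator is \(\Pi_{\cU_d}\cT=\Pi_{\cZ,d}\cT=\sum_{i\le d}\sigma_i\,u_i\otimes v_i=\cT_d\).

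Finally I would translate the equality condition into the feature-space statement: equality in \(\mathcal{L}_d\) forces \(\cT_d(\varphi,\psi)=\cT_d\). Since \(\cR(\cT_d)=\cU_d\) has dimension \(d\) and is contained in \(\cH_{\psi,d}=\spanf\{\psi_1,\dots,\psi_d\}\) (dimension \(\le d\)), we get \(\cH_{\psi,d}=\cU_d\); applying the same argument to the adjoints, \(\cR(\cT_d^\star)=\cV_d\subseteq\cH_{\varphi,d}\) forces \(\cH_{\varphi,d}=\cV_d\). I expect the main obstacle to be the careful justification of the Ky Fan step and its equality case in infinite dimensions: establishing both that no \(\le d\)-dimensional subspace exceeds \(\sum_{i\le d}\sigma_i^2\) and that, under the strict spectral gap, \(\cU_d\) is the \emph{unique} optimizer (hence \(\cT_d\) the unique minimizing operator). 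The orthogonal HS decomposition and the range-dimension bookkeeping are routine by comparison.
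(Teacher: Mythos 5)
Your proof is correct, but note that the paper itself contains no proof of this theorem: it simply remarks, just before the statement, that ``any rank-$d$ operator can be decomposed as $\cT_d(\varphi,\psi)$'' and then invokes the classical Eckart--Young--Mirsky theorem for the Hilbert--Schmidt norm as a known result. Your opening step --- identifying the feasible set $\{\cT_d(\varphi,\psi)\}$ with all operators $B$ of rank at most $d$, via $\varphi_i \doteq B^\star \psi_i$ for an orthonormal basis $\{\psi_i\}$ of $\cR(B)$ --- is precisely the paper's (unproved) remark; the remainder is a clean reconstruction of the classical argument: the HS-orthogonal splitting $\cT - B = \Pi_W(\cT - B) + \Pi_{W^\perp}\cT$ showing the best $B$ with range in $W$ is $\Pi_W \cT$, the reduction to maximizing $\tr(\Pi_W \cT\cT^\star)$, and the Ky Fan bound. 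What your version buys, beyond self-containedness, is an actual proof of the equality clause that the paper only asserts: the gap $\sigma_d > \sigma_{d+1}$ (forcing $\sigma_d > 0$, hence $\dim W = d$ at equality) makes $\cU_d$ the unique trace-maximizing subspace and $\cT_d = \Pi_{\cU_d}\cT$ the unique minimizer, after which your range/adjoint-range dimension count gives $\cH_{\psi,d} = \cU_d$ and $\cH_{\varphi,d} = \cV_d$. The step you flagged as delicate does go through in infinite dimensions: writing $\tr(\Pi_W \cT\cT^\star) = \sum_{i\geq 1} \sigma_i^2\, c_i$ with $c_i \doteq \|\Pi_W u_i\|_{L_2(Z)}^2 \in [0,1]$ and $\sum_i c_i \leq d$, the linear-programming bound shows equality forces $c_i = 1$ for every $i \leq d$ (since $\sigma_i^2 > \sigma_{d+1}^2$ there), i.e.\ $\cU_d \subseteq W$, and the dimension constraint gives $W = \cU_d$. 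So there is no gap; your write-up supplies the details the paper delegates to a citation.
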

\vspace{-\parskip}
While the Eckart--Young--Mirsky objective in \Cref{eq:pop_loss_HS} offers a clean objective to learn spectral features, it appears impractical due to the unknown nature of the operator \( \mathcal{T} \). However, this objective admits an equivalent formulation as a spectral contrastive learning loss \citep{tsai2020neural}:
\begin{equation} \label{eq:pop_loss_contrastive}
\mathcal{L}_d(\varphi, \psi) = \mathbb{E}_{X} \mathbb{E}_{Z} \left[\left(\varphi(X)\transpose \psi(Z)\right)^2\right] - 2\mathbb{E}_{X,Z} \left[\varphi(X)\transpose\psi(Z)\right] + \normHS{\mathcal{T}}^2,
\end{equation}
where the last term is independent of \((\varphi, \psi)\). We provide a proof of this equivalence in Appendix~\ref{sec:proofs_sec_spectral}. The population loss can be estimated from samples \( \tilde{\mathcal{D}}_m = \{(\tilde{x}_i, \tilde{z}_i)\}_{i=1}^m \) as
\begin{equation} \label{eq:emp_loss_contrastive}
\hat{\mathcal{L}}_d(\varphi, \psi) \doteq \frac{1}{m(m-1)} \sum_{i \neq j} \left( \varphi(\tilde{x}_i)\transpose \psi(\tilde{z}_j) \right)^2 - \frac{2}{m} \sum_{i=1}^m \varphi(\tilde{x}_i)\transpose \psi(\tilde{z}_i).
\end{equation}
\vspace{-10pt}

The spectral contrastive loss is the basis of the recent algorithm proposed in \cite{sun25speciv}, where \( \varphi \) and \( \psi \) are parametrized by neural networks and optimized via stochastic gradient descent. While \cite{sun25speciv} motivate this objective heuristically, \Cref{eq:pop_loss_HS} shows that it directly targets the leading singular structure of \( \mathcal{T} \). The use of objectives similar to \Cref{eq:pop_loss_HS} for operator learning or \Cref{eq:pop_loss_contrastive} for representation learning has a rich history that we detail in Appendix~\ref{sec:extended_related_work}.

\textbf{Comparison to \cite{sun25speciv}.} 
To justify the use of the contrastive loss in \Cref{eq:emp_loss_contrastive}, \cite{sun25speciv} assume that the density \( p(x,z) \) factorizes as
\begin{equation} \label{eq:ren_fact_joint}
p(x,z) = \varphi(x)\transpose \psi(z), \quad \varphi: \mathcal{X} \to \mathbb{R}^d, \quad \psi: \mathcal{Z} \to \mathbb{R}^d.
\end{equation}
We argue that this assumption is overly strong: it holds if and only if \( \mathcal{T} \) has rank at most \( d \), in which case the spectral features perfectly capture the structure of \( \mathcal{T} \). This result is formalized in \Cref{prop:equiv_finite_rank_T_factorized_density}, Appendix~\ref{sec:proofs_sec_spectral}. We should instead interpret Eq.~\eqref{eq:ren_fact_joint} as an approximation rather than a literal assumption. In practice, the learned features \( \varphi, \psi \) provide a rank-\( d \) approximation of \( \mathcal{T} \), and the quality of this approximation governs the performance of the resulting Sieve 2SLS estimator.

Let $\hat \varphi, \hat \psi$ be $d$-dimensional features trained with loss \Cref{eq:emp_loss_contrastive} with $\tilde{D}_m$ and define $\hat \cT_d \doteq \sum_{i=1}^d \hat \psi_i \otimes \hat \varphi_i$. We make the following assumption:
\begin{asst} \label{asst:linear_indep}
$\{\hat \psi_1, \ldots, \hat \psi_d\}$ and $\{\hat \varphi_1, \ldots, \hat \varphi_d\}$ form linearly independent families.
\end{asst}
\vspace{-\parskip}
This assumption is only made to simplify the discussion. If $\{\hat \psi_1, \ldots, \hat \psi_d\}$ or $\{\hat \varphi_1, \ldots, \hat \varphi_d\}$ is not linearly independent, one can replace $d$ in the following by the linear dimension of the family. Consider $\sigma_d > \sigma_{d+1}$ so that the unique minimizer of the population loss is given by $\cT_d$. We quantify how the distance between $\hat \cT_d$ and $\cT_d$ affects the approximation error and the sieve measure of ill-posedness $\tau_{\hat \varphi, d}$. We quantify how the distance between $\hat \cT_d$ and $\cT_d$ affects the generalization error of the 2SLS estimator.


\begin{restatable}{theorem}{measillpossievencp}\label{th:rate_approx_spec_iv}
Let \Cref{asst:exist,asst:dominated_joint,asst:noise_sieve,asst:linear_indep} hold and let $\hat{\varepsilon}_d \doteq \normop{\hat \cT_d - \cT_d}$ be such that $\hat{\varepsilon}_d < (1-1/\sqrt{2})\sigma_d$.
\begin{itemize}
    \item[i)] $\sigma_d^{-1} \leq \tau_{\hat \varphi,d} \leq (\sigma_d - 2\hat{\varepsilon}_d)^{-1}$;
    \item[ii)] Let $\hat h$ be the 2SLS estimator from \Cref{eq:2sls_estim}, using features $\hat \varphi: \cX \to \Rd$ and $\hat \psi: \cZ \to \Rd$. Suppose $(\sigma_d - 2 \hat{\varepsilon}_d)^{-1} \zeta_{\hat \varphi,\hat \psi,d} \sqrt{(\log d) / n}=o(1)$ and $\hat{\varepsilon}_d = o_d(\sigma_d^2)$, then 
    $$
\left\|\hat h-h_0\right\|_{L^2(X)}=O_p\left(\left\| (\Pi_{\cX,d})_{\perp}h_0\right\|_{L_2(X)} + \frac{\sqrt{2} \hat{\varepsilon}_d\|h_0\|_{L_2(X)}}{\sigma_d}+  \frac{1}{\sigma_d - 2 \hat{\varepsilon}_d}\sqrt{\frac{d}{n}}\right)\,.
    $$
\end{itemize}
\end{restatable}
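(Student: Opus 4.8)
The plan is to derive everything from the single perturbation quantity $\hat\varepsilon_d=\normop{\hat\cT_d-\cT_d}$ and then feed the resulting control on $\tau_{\hat\varphi,d}$ and on the learned feature subspaces into the generic Sieve 2SLS bound of \Cref{th:sieve_chen}, whose remaining hypotheses I must check for $\hat\varphi,\hat\psi$. By \Cref{asst:linear_indep}, $\hat\cT_d=\sum_{i=1}^d\hat\psi_i\otimes\hat\varphi_i$ has rank exactly $d$, so its right and left singular subspaces are $\cH_{\hat\varphi,d}=\cR(\hat\cT_d^\star)$ and $\cH_{\hat\psi,d}=\cR(\hat\cT_d)$; writing $\hat\sigma_i\doteq\sigma_i(\hat\cT_d)$, one has $\norm{\hat\cT_d h}\geq\hat\sigma_d\norm{h}$ for $h\in\cH_{\hat\varphi,d}$. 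Weyl's inequality gives $|\hat\sigma_d-\sigma_d|\leq\hat\varepsilon_d$, and the hypothesis $\hat\varepsilon_d<(1-1/\sqrt2)\sigma_d$ then yields the key numerical fact $\hat\sigma_d\geq\sigma_d-\hat\varepsilon_d>\sigma_d/\sqrt2>0$, which is the source of the constant $\sqrt2$.

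For (i), the lower bound $\tau_{\hat\varphi,d}\geq\sigma_d^{-1}$ is exactly \Cref{prop:meas_ill_pos}. For the upper bound it suffices to show $\norm{\cT h}_{L_2(Z)}\geq\sigma_d-2\hat\varepsilon_d$ for every unit $h\in\cH_{\hat\varphi,d}$. Since $\cT-\cT_d=\sum_{i>d}\sigma_i u_i\otimes v_i$ has range orthogonal to $\cR(\cT_d)=\cU_d$, the split $\cT h=\cT_d h+(\cT-\cT_d)h$ is orthogonal and Pythagoras gives $\norm{\cT h}\geq\norm{\cT_d h}$. Then $\norm{\cT_d h}\geq\norm{\hat\cT_d h}-\hat\varepsilon_d\geq\hat\sigma_d-\hat\varepsilon_d\geq\sigma_d-2\hat\varepsilon_d$, using $h\in\cR(\hat\cT_d^\star)$ and Weyl. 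Reciprocating the infimum (positive since $\hat\varepsilon_d<\sigma_d/2$) gives $\tau_{\hat\varphi,d}\leq(\sigma_d-2\hat\varepsilon_d)^{-1}$.

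For (ii), I first quantify how $\cH_{\hat\varphi,d},\cH_{\hat\psi,d}$ deviate from $\cV_d,\cU_d$. A Wedin $\sin\Theta$ bound applied to $\cT_d$ and $\hat\cT_d=\cT_d+(\hat\cT_d-\cT_d)$, whose retained singular values are separated from the discarded (zero) ones by $\hat\sigma_d$, gives
\[
\normop{(\Pi_{\hat\varphi,d})_\perp\Pi_{\cX,d}}=\normop{(\Pi_{\cX,d})_\perp\Pi_{\hat\varphi,d}}\leq\frac{\hat\varepsilon_d}{\hat\sigma_d}\leq\frac{\sqrt2\,\hat\varepsilon_d}{\sigma_d},
\]
and the analogous bound for the left subspaces $\Pi_{\hat\psi,d},\Pi_{\cZ,d}$, the final inequality using $\hat\sigma_d>\sigma_d/\sqrt2$. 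I then verify \Cref{asst:assumption_L_2_consistency} for $\hat\varphi,\hat\psi$ (the other hypotheses of \Cref{th:sieve_chen}, namely \Cref{asst:exist,asst:noise_sieve}, are assumed, and \Cref{asst:dominated_joint} supplies the SVD). For part (i) of that assumption, split $\cT h=\cT_d h+(\cT-\cT_d)h$ for unit $h\in\cH_{\hat\varphi,d}$: the term $\cT_d h\in\cU_d$ contributes at most $\normop{(\Pi_{\hat\psi,d})_\perp\Pi_{\cZ,d}}\normop{\cT_d}\leq\sqrt2\,\hat\varepsilon_d/\sigma_d$ (crucially $\normop{\cT_d}=\sigma_1=1$), while $(\cT-\cT_d)h$ contributes at most $\sigma_{d+1}\norm{(\Pi_{\cX,d})_\perp h}\leq\sqrt2\,\hat\varepsilon_d$; both are $o_d(\sigma_d)=o_d(\tau_{\hat\varphi,d}^{-1})$ exactly under $\hat\varepsilon_d=o_d(\sigma_d^2)$. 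For part (ii) of the assumption, for $g=(\Pi_{\hat\varphi,d})_\perp h_0\in\cN(\hat\cT_d)$ the same orthogonal split yields $\norm{\cT g}\leq(\hat\varepsilon_d+\sigma_{d+1})\norm{g}\leq C(\sigma_d-2\hat\varepsilon_d)\norm{g}\leq C\tau_{\hat\varphi,d}^{-1}\norm{g}$ for a constant $C$ under $\hat\varepsilon_d<(1-1/\sqrt2)\sigma_d$. The growth condition of \Cref{th:sieve_chen} follows from the stated hypothesis on $(\sigma_d-2\hat\varepsilon_d)^{-1}$ together with (i), so \Cref{th:sieve_chen} applies and gives $\norm{\hat h-h_0}_{L_2(X)}=O_p\!\big(\norm{(\Pi_{\hat\varphi,d})_\perp h_0}+\tau_{\hat\varphi,d}\sqrt{d/n}\big)$.

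It remains to translate these two terms. The estimation term is handled by (i): $\tau_{\hat\varphi,d}\sqrt{d/n}\leq(\sigma_d-2\hat\varepsilon_d)^{-1}\sqrt{d/n}$. For the approximation term, split $h_0=\Pi_{\cX,d}h_0+(\Pi_{\cX,d})_\perp h_0$ and use $\normop{\identity-\Pi_{\hat\varphi,d}}\leq1$ to get $\norm{(\Pi_{\hat\varphi,d})_\perp h_0}\leq\norm{(\Pi_{\cX,d})_\perp h_0}+\normop{(\Pi_{\hat\varphi,d})_\perp\Pi_{\cX,d}}\norm{h_0}\leq\norm{(\Pi_{\cX,d})_\perp h_0}+\tfrac{\sqrt2\,\hat\varepsilon_d}{\sigma_d}\norm{h_0}$, which are precisely the first two terms of the claim. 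The main obstacle is \Cref{asst:assumption_L_2_consistency}-(i): naively the contribution of $\cT_d h$ carries a factor $\normop{\cT_d}/\sigma_d=\sigma_1/\sigma_d$, and it is only because $\sigma_1=1$ (the constant-function triple) that this collapses to $\hat\varepsilon_d/\sigma_d$ and is killed by $\hat\varepsilon_d=o_d(\sigma_d^2)$; controlling the $\sin\Theta$ constant so it matches $\sqrt2$ rather than a $\sigma_1/\sigma_d$ blow-up is the delicate step, and is exactly what forces the threshold $\hat\varepsilon_d<(1-1/\sqrt2)\sigma_d$.
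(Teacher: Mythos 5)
Your proposal is correct, and its skeleton matches the paper's: part (i) is proved identically (Weyl's inequality, the reverse triangle inequality, and the Pythagorean fact $\norm{\cT h} \geq \norm{\cT_d h}$), and part (ii) uses the same key ingredients — the Wedin $\sin\Theta$ bound $\normop{(\Pi_{\hat\varphi,d})_\perp \Pi_{\cX,d}} \leq \sqrt{2}\hat\varepsilon_d/\sigma_d$ under $\hat\varepsilon_d < (1-1/\sqrt{2})\sigma_d$, the null-space identity $\hat\cT_d(\Pi_{\hat\varphi,d})_\perp = 0$ for the link condition, the condition $\hat\varepsilon_d = o_d(\sigma_d^2)$ at exactly the same junction, and the same final translation of the approximation and estimation terms. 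The one genuine routing difference is how the instrument-side stability condition is discharged: you verify \Cref{asst:assumption_L_2_consistency}-(i) verbatim, by splitting $\cT h = \cT_d h + (\cT - \cT_d)h$ and bounding each piece by $O(\hat\varepsilon_d/\sigma_d)$, which lets you invoke \Cref{th:sieve_chen} entirely as a black box; the paper instead declines to verify that assumption literally and proves the surrogate statement $s_{\hat\psi,\hat\varphi,d}^{-1} = O(\tau_{\hat\varphi,d})$, arguing that this is the only consequence of \Cref{asst:assumption_L_2_consistency}-(i) used inside Chen and Christensen's proof of their Theorem B.1 — a step that requires opening up the cited proof. Your route is self-contained at the level of the stated theorem and arguably cleaner; the paper's route is more conservative about what the cited result actually needs. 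Both arrive at the same bound under the same hypotheses, so there is no gap.
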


We postpone the proof to Appendix~\ref{sec:proofs_sec_spectral}. The result quantifies how close the learned features must be to \( \mathcal{T}_d \) in order to preserve the desirable properties of spectral 2SLS. \Cref{th:rate_approx_spec_iv}-i) shows that as long as the learned operator \( \hat{\mathcal{T}}_d \) is a good approximation of the true rank-\(d\) truncation \( \mathcal{T}_d \), the learned features inherit a favorable measure of ill-posedness. \Cref{th:rate_approx_spec_iv}-ii) further ensures that 
if contrastive learning yields a near-optimal rank-\(d\) approximation of \( \mathcal{T} \), the downstream Sieve 2SLS estimator retains strong statistical guarantees. Importantly, the spectral features generalization error \( \hat{\varepsilon}_d\) encapsulates the statistical and optimization errors in learning spectral features. It depends on the number of samples used to train the contrastive loss, the expressivity and architecture of the neural networks parametrizing \(\varphi\) and \(\psi\), and the complexity of the leading eigensubspaces of \(\mathcal{T}\). As $\mathcal{L}_d(\varphi, \psi)$ converges to $\cL_d^\mathrm{min} \doteq \normHS{\cT - \cT_d}^2,$ $\hat{\varepsilon}_d$ converges to $0$ by \Cref{thm:eckart_young_mirsky}. Precisely controlling \(\hat{\varepsilon}_d\) lies beyond the scope of this paper and remains an important question for future work. 

\section{Estimating alignment and spectral decay from data} \label{sec:estimation_spectral}

Having established the good-bad-ugly taxonomy based on spectral properties and its interaction with the structural function $h_0$, we now discuss how to estimate these spectral properties from data. Namely, we present a methodology to estimate the spectral decay of the operator $\cT$ as well as spectral alignment with the structural function $h_0$. Recall that $\cT$ is compact and admits an SVD of the form \Cref{eq:svd_T}, with the key relationship that $\sigma_i v_i = \cT^* u_i$. Therefore, for all $i \geq 1$, we have
\begin{equation}
\label{eqn:alignment_adjoint}
\langle v_i,h_0 \rangle_{L_2(X)} = \frac 1{\sigma_i}\langle \cT^* u_i,h_0\rangle_{L_2(X)} = \frac 1{\sigma_i}\langle  u_i,r_0\rangle_{L_2(Z)}=\frac 1{\sigma_i}\mathbb E[Y\cdot u_i(Z)],
\end{equation}
where we used $\cT h_0 = r_0$, the definition of $r_0$ and the tower property of the conditional expectation. This relationship shows that the alignment coefficients $\langle v_i,h_0 \rangle_{L_2(X)}$ can be estimated from data if we have estimates of the singular functions $u_i$ and singular values $\sigma_i$. In \Cref{sec:estimation_spectral}, we provide a practical estimator for \Cref{eqn:alignment_adjoint} using learned features and derive its estimation guarantees, which depend on the operator error $\hat\varepsilon_d$. 

We apply this procedure in \Cref{sec:experiments} to diagnose the spectral properties of the dSprites dataset \cite{dsprites17}, a popular benchmark in the nonparametric IV and proxy literature \cite{xu2020learning,xu2021deep,sun25speciv}.

\begin{figure}[!htb]
    \centering
    \includegraphics[width=0.9\linewidth]{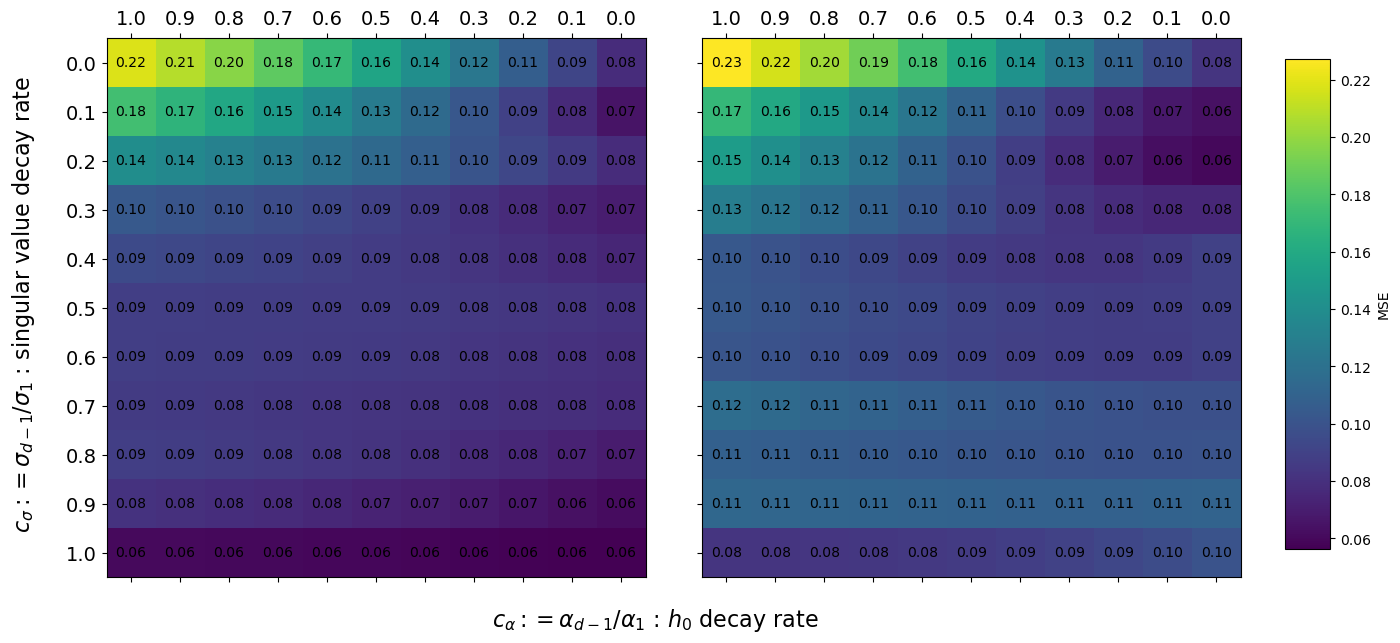}
    \caption{Dependence of the MSE of instrumental variable regression on the decay rates of the spectrum and coefficients of the structural function. IV fitting and the MSE evaluations were repeated 500 times per parameter set, rendering the standard error of these estimates negligible. \textbf{Left:} Oracle spectral features; \textbf{Right:} Learned features.}
    \label{fig:mse-grid}
\end{figure}

\vspace{-7pt}
\section{Experiments} \label{sec:experiments}
\vspace{-7pt}
\subsection{Synthetic datasets}
We evaluate the main theoretical insight of the paper: Sieve 2SLS with spectral features performs best when \( h_0 \) aligns with the top eigenspaces of \( \mathcal{T} \) and its singular values decay slowly. Performance degrades with faster decay and weaker alignment. We design a synthetic NPIV setting where we control the conditional expectation operator \( \mathcal{T} \) and its spectral decay and well as the alignment of \( h_0 \) with its eigenspaces. To simulate such a setting, we rely on the following procedure for generating samples from the NPIV model:

\begin{prop} \label{prop:sample_npiv}
    Let $\pi_{XZ}$ be a probability distribution on $\cX \times \cZ$ with marginals $\pi_X$ and $\pi_Z$. Let \( \cT : L_2(\cX, \pi_X) \to L_2(\cZ, \pi_Z) \) be the conditional expectation operator associated with \( \pi_{XZ} \) and let $h_0 \in L_2(\cX, \pi_X)$. Sample $(X,Z) \sim \pi_{XZ}$, $V \sim \cN(0, \sigma^2)$ and set $U = \cT h_0(Z) - h_0(X) + V$ and $Y = h_0(X) + U$. Then $(U,Z,X,Y)$ is a sample from the NPIV model of \Cref{eq:npiv}.
\end{prop}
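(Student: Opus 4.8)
The plan is to verify directly that the tuple $(U, Z, X, Y)$ produced by the sampling procedure satisfies each of the three defining conditions of the NPIV model in \Cref{eq:npiv}. Two of these are essentially immediate and require no computation: the structural equation $Y = h_0(X) + U$ holds by the very definition of $Y$, and the relevance condition $Z \not\!\perp\!\!\!\perp X$ is inherited from the dependence structure of the base distribution $\pi_{XZ}$, since the sampled pair $(X,Z)$ has law exactly $\pi_{XZ}$ and we take this distribution to make $X$ and $Z$ dependent. The only substantive step is establishing the moment (exogeneity) condition $\E[U \mid Z] = 0$.

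Before computing, I would first confirm integrability so that the conditional expectation is well-defined. Since $h_0 \in L_2(\cX, \pi_X)$ we have $h_0(X) \in L_2$; since $\cT$ is non-expansive, $\normop{\cT} \leq 1$, its image $\cT h_0$ lies in $L_2(\cZ, \pi_Z)$, so $\cT h_0(Z) \in L_2$; and $V \sim \cN(0,\sigma^2)$ lies in $L_2$. Hence $U \in L_2$ and $\E[U \mid Z]$ exists. The core computation then decomposes $U$ term by term using linearity of conditional expectation:
\[
\E[U \mid Z] = \E[\cT h_0(Z) \mid Z] - \E[h_0(X) \mid Z] + \E[V \mid Z].
\]
The first term equals $\cT h_0(Z)$ because $\cT h_0(Z)$ is a measurable function of $Z$ alone; the second term equals $\cT h_0(Z)$ by the very definition of the conditional expectation operator, $\cT h_0(Z) = \E[h_0(X) \mid Z]$ (which is valid precisely because the sampled $(X,Z)$ has law $\pi_{XZ}$, the distribution with respect to which $\cT$ is defined); and the third term vanishes since $V$ is drawn independently of $(X,Z)$ with zero mean, giving $\E[V \mid Z] = \E[V] = 0$. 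The first two terms cancel, yielding $\E[U \mid Z] = 0$, as required.

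There is no genuine obstacle here, but the point demanding the most care is making explicit the independence of $V$ from $(X,Z)$ that is only implicit in the phrasing ``Sample $(X,Z) \sim \pi_{XZ}$, $V \sim \cN(0,\sigma^2)$'': this independence is exactly what licenses $\E[V \mid Z] = \E[V] = 0$ and hence the cancellation. I would state this assumption at the outset. It is also worth observing that the role of the additive $V$ is purely to inject exogenous noise into the outcome; the term $\cT h_0(Z) - h_0(X)$ already satisfies the moment condition on its own, so the construction remains valid for any mean-zero noise independent of $(X,Z)$, with the Gaussian choice being merely a convenient instantiation for the synthetic experiments.
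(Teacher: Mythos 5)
Your proposal is correct. The paper in fact states \Cref{prop:sample_npiv} without providing any proof, so there is nothing to compare against; your direct verification---$Y = h_0(X) + U$ by construction, $\E[U \mid Z] = \cT h_0(Z) - \E[h_0(X) \mid Z] + \E[V \mid Z] = 0$ using $Z$-measurability of $\cT h_0(Z)$, the definition of $\cT$ under $(X,Z) \sim \pi_{XZ}$, and independence of the mean-zero $V$---is precisely the intended argument. Your two side remarks are also well taken: the independence of $V$ from $(X,Z)$ and the dependence of $X$ and $Z$ under $\pi_{XZ}$ are indeed left implicit in the statement, and flagging them makes the claim airtight.
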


Let $d=11$, $\cX = [0,2\pi]$, $\cZ = [0,2\pi]$, $\sigma^2 = 0.1$, $f(x) = (\sin(x), \sin(2x), \ldots, \sin((d-1)x))$, $g(z) = (\sin(z), \sin(2z), \ldots, \sin((d-1)z))$, and let $P,Q$ be two orthogonal $(d-1) \times (d-1)$ matrices. Define $v: \cX \to \R^{d-1}$ and $u: \cZ \to \R^{d-1}$ as $v(x) = P f(x)$ and $u(z) = Q g(z)$. We set $\cT = \mathbbm{1}_{\cZ} \otimes \mathbbm{1}_{\cX} + \sum_{i=1}^{d-1} \sigma_i u_i \otimes v_i$. $(X,Z)$ is jointly sampled with rejection sampling. $Z,X$ are uniform on $[0,2\pi]$ and $X$ given $Z$ admits $\cT$ as conditional expectation operator. We then sample $Y$ following \Cref{prop:sample_npiv}. The multiplication of the trigonometric functions by the orthogonal matrices $P,Q$ ensures that the consecutive singular functions of $\cT$ are comparably complex. 
By keeping the difficulty constant per singular function we can isolate the influence of the singular value decay on the performance of the final estimator.
We set the singular values $\{\sigma_i\}_{i=1}^{d-1}$ to decay linearly, from some initial value $\sigma_1$ to a $\sigma_{d-1} = c_\sigma\sigma_1$, $c_\sigma\in[0,1]$. To control spectral alignment, we set $h_0=\sum_{i=1}^{d-1}\alpha_i v_i$ subject to $\|\alpha\|_{\ell_2}=1$, and allow $\{\alpha_i\}_{i=1}^{d-1}$ to decay linearly from $\alpha_1$ to $\alpha_{d-1}=c_\alpha\alpha_1$, $c_\alpha\in[0,1]$. Each value of $c_\alpha \in [0,1]$ controls the alignment of $h_0$ with the top singular functions of $\cT$: larger values correspond to weaker alignment. Similarly, $c_\sigma \in [0,1]$ controls the ill-posedness of the problem: smaller values indicate faster decay of singular values and hence more severe ill-posedness.

\begin{figure}[!htb]
  \centering
  \begin{minipage}[t]{0.32\textwidth}
    \centering
    \includegraphics[width=\linewidth]{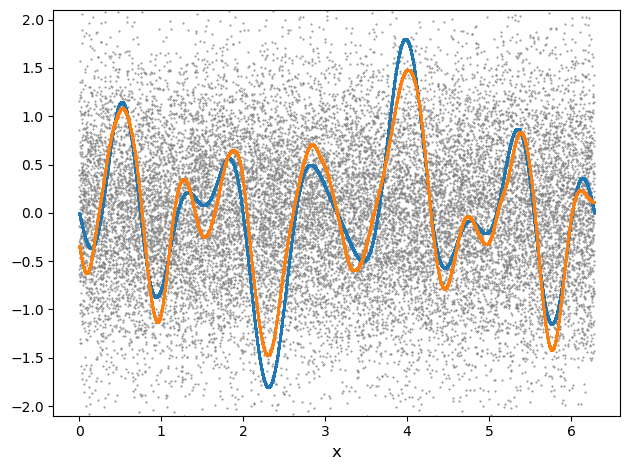}
    \smallskip
    \small Good/bad
  \end{minipage}%
  \hspace{-0.25em}
  \begin{minipage}[t]{0.32\textwidth}
    \centering
    \includegraphics[width=\linewidth]{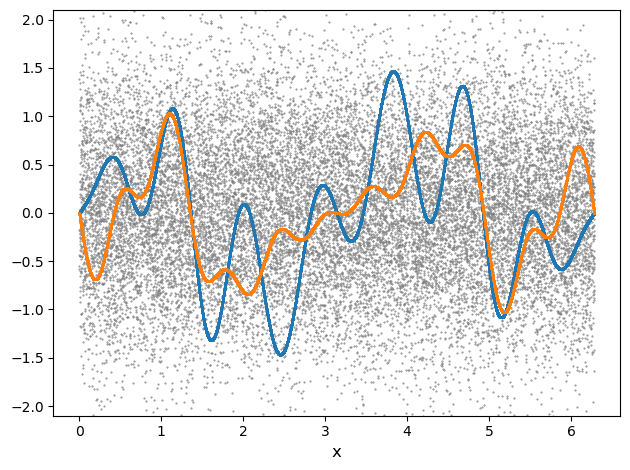}
    \smallskip
    \small Ugly
  \end{minipage}%
  \hspace{-0.25em}
  \begin{minipage}[t]{0.32\textwidth}
    \centering
    \includegraphics[width=\linewidth]{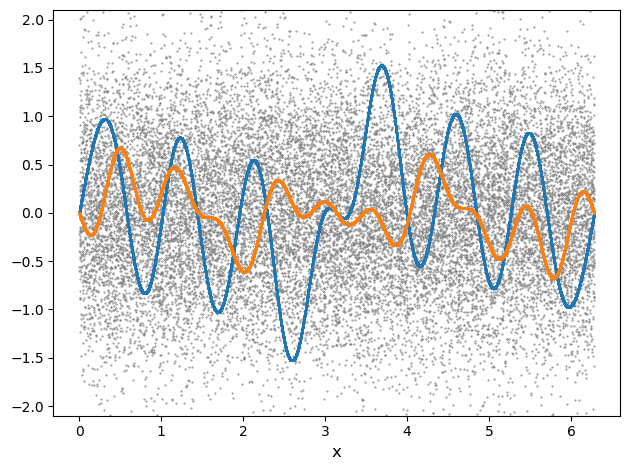}
    \smallskip
    \small Very ugly
  \end{minipage}
  \caption{Comparison of the spectral methods' performance depending on the alignment of $h_0$ with the singular functions of $\cT$. \textbf{Blue:} True $h_0$; \textbf{Orange:} 2SLS estimate.}
  \label{fig:good-bad-ugly-h-fits}
\end{figure}

In \Cref{fig:mse-grid}, we report the mean squared error (MSE) of two 2SLS estimators across a range of spectral decay rates ($c_\sigma$) and structural function decay rates ($c_\alpha$). The left panel uses 2SLS with fixed features $f$ and $g$ that form the exact spectral features. This represents an oracle setting. The MSE decreases from top to bottom as the problem becomes better conditioned (i.e., as $c_\sigma$ increases). Additionally, the MSE decreases from left to right, as smaller values of $c_\alpha$ correspond to stronger spectral alignment: more of $h_0$ is concentrated on the top singular functions of $\cT$. These trends are fully consistent with the bound established in~\Cref{cor:rate_spectral_2sls}. In the right panel, we learn 50 features from data by minimizing the empirical contrastive loss~\Cref{eq:emp_loss_contrastive} using a neural network. We then build a 2SLS estimator on top of the learned features. The same qualitative trends are observed: MSE decreases with slower singular value decay and better alignment. Moreover, when $c_\sigma$ is small (top rows), learning the entire eigenspaces becomes difficult. In this regime, performance is most sensitive to $c_\alpha$. In particular, if $h_0$ is concentrated on the top singular functions (small $c_\alpha$), good recovery is still possible. In contrast, when $c_\sigma$ is large (bottom rows), the singular functions are equally important and more easily learned, rendering the alignment of $h_0$ less critical. The mismatch between the oracle and learned settings is because we do not exactly recover the singular feature spaces; as predicted by~\Cref{th:rate_approx_spec_iv}. It is also worth noting that when $h_0$ is concentrated on high singular values, it may sometimes be beneficial for the singular values to decay quickly, by making it easier for the feature-learning model to learn the top singular functions well. This may explain the slight top-bottom increase in losses on the right end of the right panel.

The previous experiment assumes $h_0$ lies in the span of the singular functions of $\cT$. \Cref{fig:good-bad-ugly-h-fits} illustrates three regimes. The left panel shows a well-aligned case (``good/bad''), where $h_0$ is entirely supported on the singular functions of $\cT$, allowing for exact recovery as long as enough data is available. The middle panel represents a ``partial alignment'' regime, where $h_0$ has some overlap with the singular functions; in this case, recovery is imperfect, but the estimator can still capture some signal. Finally, the right panel illustrates the ``misaligned'' or ``very ugly'' regime, where $h_0$ is orthogonal to the singular functions. The instrument provides no information about the signal, and all spectral methods fail. We refer to Appendix~\ref{sec:experiment_details} for further discussion of these scenarios and the experimental setup.

\subsection{Dsprites dataset}

\begin{wrapfigure}{r}{0.35\textwidth}
    \centering
    \vspace{-40pt}
    \includegraphics[width=0.3\textwidth]{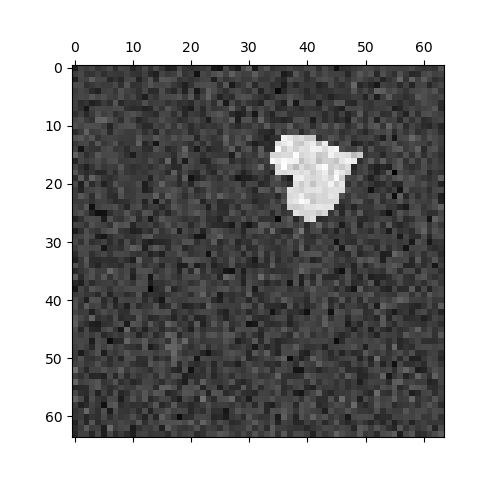}
    \vspace{-10pt}
    \caption{Example image from the dSprites dataset.}
    \label{fig:dsprites_example}
\end{wrapfigure}

The dSprites dataset, as employed in the evaluation of IV models, consists of $64\times 64$ noisy images of hearts, with varying $x,y$ positions, orientations and sizes. The full setting is as follows; $X\in\R^{64\times 64}$ is the raw image, $Z\in\R^3$ consists of the $x$ position, orientation and scale of the heart in the image, and the output is constructed as\footnote{The original formulation of the dSprites model for IV used $A_{i,j} \sim \cU(0,1)$ \cite{xu2020learning}. However, it is noted in \cite{xu2021deep} (arxiv version 18/06/2024, Appendix E.3) that it leads to identifiability issues and that the choice $A_{ij}=|32-j|/32$ is preferred.} \[
Y=h_0(X)+32\cdot(y\text{ position}-32)+U,\,U \sim \cN(0,1),
\]
where \[
h_0(X)=(\|A\circ X\|^2-3000)/500,\,A_{ij}=|32-j|/32.
\]
It was observed by \citet{sun25speciv} that spectral methods significantly outperform alternatives in this setting. We argue that it is because the structural function employed in this benchmark lies in the ``good'' regime. Utilising the alignment estimation methods outlined in \Cref{sec:estimation_spectral}, we are able to provide evidence for this claim. We observe that the structural function is spanned by the leading singular functions of the conditional expectation operator associated with this model. 
The details of this experiment can be found in \Cref{sec:estimation_spectral_appendix}.

\newpage

\section*{Acknowledgements}
D.M., J.W., and A.G. are supported by the Gatsby Charitable Foundation. V.R.K. is supported from NextGenerationEU and MUR PNRR
project PE0000013 CUP J53C22003010006 “Future Artificial Intelligence
Research (FAIR)”. A.M. has received funding from the European Research Council (ERC), under the European Union's Horizon 2020 research and innovation programme (Grant agreement No. 950180), and travel support from ELSA (Project ID: 101070617).

\bibliography{ref}
\bibliographystyle{plainnat}

\newpage
\appendix

\section*{Supplementary Material: Demystifying Spectral Feature Learning for Instrumental Variable Regression}

\section{Background on Spectral Theory}
\label{app:spectral}

For completeness, we briefly recall key notions from spectral theory used throughout the paper. 
Our goal is to provide intuition and the minimal mathematical tools required to interpret concepts such as 
compactness, singular value decomposition (SVD) of operators, and eigensubspaces.

Let $H_1$ and $H_2$ be separable Hilbert spaces with inner products 
$\langle \cdot, \cdot \rangle_{H_i}$ and norms $\|\cdot\|_{H_i}$, $i=1,2$.

\paragraph{Hilbert spaces and bounded operators.} A linear map $A : H_1 \to H_2$ is said to be \emph{bounded} if there exists $C > 0$ such that
$\|A h\|_{H_2} \le C \|h\|_{H_1}$ for all $h \in H_1$. The smallest such constant is the \emph{operator norm}, denoted $\normop{A}$.

\paragraph{Adjoint and self-adjoint operators.}
Every bounded operator $A : H_1 \to H_2$ admits an \emph{adjoint}
$A^\ast : H_2 \to H_1$ defined by
$\langle A h, g \rangle_{H_2} = \langle h, A^\ast g \rangle_{H_1}$.
If $H_1 = H_2$ and $A = A^\ast$, the operator is \emph{self-adjoint}.

\paragraph{Compact operators.}
A linear operator $A$ is called \emph{compact} if it maps bounded sets in $H_1$ to relatively compact sets in $H_2$, that is, the image of any bounded sequence has a convergent subsequence in $H_2$. All compact operators are bounded, but not all bounded operators are compact. Any compact operator is a limit (in operator norm) of finite-rank operators, so that the class of compact operators can be defined alternatively as the closure of the set of finite-rank operators in the operator norm topology. The conditional expectation operator $\cT$ is a canonical example of a compact operator (see \Cref{prop:T_HS} below).

\paragraph{Hilbert--Schmidt operators and norm.} Let $A : H_1 \to H_2$ be a bounded linear operator. Choose any orthonormal basis $\{e_j\}_{j\ge1}$ of $H_1$. The operator $A$ is called \emph{Hilbert--Schmidt} if
\[
    \normHS{A}^2 \coloneqq \sum_{j\ge1} \|A e_j\|_{H_2}^2 < \infty.
\]
The value of the sum above is independent of the chosen basis of \(H_1\). Hence the quantity $\langle A, B\rangle_{\scriptscriptstyle \mathrm{HS}} \coloneqq \sum_{j\ge1} \langle A e_j, B e_j\rangle_{H_2}$
defines an inner product on the space of Hilbert--Schmidt operators, and $\|\cdot\|_{\scriptscriptstyle \mathrm{HS}}$ is the induced norm.

\paragraph{Singular Value Decomposition (SVD).} When $A : H_1 \to H_2$ is compact (not necessarily self-adjoint), it admits a \emph{singular value decomposition}:
\[
    A = \sum_{i\ge1} \sigma_i\, u_i \otimes v_i,
\]
where $\sigma_1 \ge \sigma_2 \ge \cdots \ge 0$ are the singular values of $A$ such that $\sigma_i \to 0$,
and $\{v_i\} \subset H_1$, $\{u_i\} \subset H_2$
form orthonormal bases satisfying
$A v_i = \sigma_i u_i$ and $A^\ast u_i = \sigma_i v_i$.
This generalizes the matrix SVD to infinite-dimensional spaces. Important properties we use below:
\begin{itemize}
    \item The sum above is finite if and only if $A$ has finite rank.
    \item $\normop{A} = \sigma_1$.
    \item If $A$ is Hilbert--Schmidt then $A$ is compact such that $\normHS{A}^2 = \sum_i \sigma_i^2 < \infty$, hence $\normop{A} \le \normHS{A}$. Thus Hilbert--Schmidt operators are bounded, but the converse need not hold.
\end{itemize}


\section{Learning Rate with Spectral Features} \label{sec:rate_optimality}
In this section, we formalize the learning rate that can be obtained from \Cref{cor:rate_spectral_2sls}. To do so, we employ two standard assumptions from the nonparametric IV literature \citep{chen2011rate}: a source condition and a singular value decay assumption on the operator $\cT$.

\begin{asst}[Singular Value Decay] \label{asst:sv_decay}
    Let $p \geq 1$. There are constants $0 < c_{1} \leq c_2 < \infty$ such that for all $i \geq 1$, $c_{1} i^{-p} \leq \sigma_i \leq c_{2} i^{-p}$.
\end{asst}

This assumption is standard in the nonparametric IV literature \citep{chen2011rate} and characterizes the difficulty of the inverse problem defined by the operator $\cT$. A larger $p$ indicates faster decay and thus greater ill-posedness. Alternative ill-posedness characterizations can be considered, such as exponential decay \citep{chen2011rate}.

\begin{asst}[Source Condition] \label{asst:source_condition}
    Let $r > 0$. There is a constant $B < \infty$ such that 
    \[
    \|h_0\|_r^2 := \big\| (\mathcal{T}^* \mathcal{T})^{-r/2} h_0 \big\|_{L^2(X)}^2 
    = \sum_{i \ge 1} \frac{\langle h_0, u_i \rangle_{L^2(X)}^2}{\sigma_i^{2r}} \leq B.
    \]
\end{asst}

This condition quantifies how ``smooth'' $h_0$ is relative to the spectral decomposition of $\mathcal{T}$, or equivalently, how quickly the alignment coefficients $\langle h_0, u_i \rangle_{L^2(X)}$ decay relative to the singular values. A larger $r$ implies $h_0$ is ``smoother'' or better aligned with the less ill-posed directions of $\mathcal{T}$. Note that under the ``bad'' scenario where the eigen-decay is fast ($p$ is large), a stronger alignment is necessary for $\|h_0\|_r^2$ to be finite for a large value of $r$. While in the ``good'' scenario with $p \sim 1$, $\|h_0\|_r^2 < +\infty$ is more likely to hold with a large value of $r$.

Under Assumption~\ref{asst:source_condition}, we have
\[
\|\left(\Pi_{\cX,d}\right)_{\perp} h_0 \|_{L_2(X)}^2 =  \sum_{i > d} \langle h_0, u_i \rangle_{L^2(X)}^2 
\le \sigma_d^{2r} \sum_{i > d} \frac{|\langle h_0, u_i \rangle_{L^2(X)}|^2}{\sigma_i^{2r}}
\le \sigma_d^{2r} \|h_0\|_r^2.
\]
Combined with Assumption~\ref{asst:sv_decay}, we obtain from \Cref{cor:rate_spectral_2sls} that:
\[
\|\hat{h} - h_0\|_{L^2(X)}
= \mathcal{O}_p \left( \|h_0\|_r d^{-rp} + \sqrt{\frac{d^{1+2p}}{n}} \right).
\]
Balancing both terms in $d$ as a function of $n$, we obtain
\[
\|\hat{h} - h_0\|_{L^2(X)}
= \mathcal{O}_p \left( n^{-\frac{r}{2(r+1)+p^{-1}}} \right),
\]
with 
\[
d(n) = n^{\frac{1}{2p(r+1)+1}}.
\]
This bound is tight as a matching lower bound is obtained in Theorem 1-(i) \citep{chen2011rate}. From this tight rate, we see that both the ill-posedness ($p$) and the relative smoothness ($r$) fundamentally capture the effectiveness of 2SLS with spectral features, and their impacts are inherently intertwined.

\begin{itemize}
    \item A low relative smoothness (small $r$) means $h_0$ has significant components aligned with directions corresponding to very small singular values, even after accounting for the singular value decay. This also directly slows the rate, effectively rendering the problem very hard or even impossible ($r \to 0_+$ makes the rate vacuous, representing our ``ugly'' scenario).
    \item A fast decay (large $p$) directly slows down the rate regardless of $r$. It means singular values drop quickly, making it fundamentally harder to resolve higher-frequency components of $h_0$. When $p$ is large we are either in the bad or in the ugly scenario. A large $p$ also slows down the rate indirectly by making it harder to achieve a high relative smoothness $r$.
\end{itemize}

\paragraph{When working with learned features.} We see from \Cref{th:rate_approx_spec_iv} that when we learn the spectral features using contrastive learning, we incur an additional penalty related to $\hat{\varepsilon}_d /\sigma_d$ in the upper bound. This error shows that a faster eigendecay makes the 2SLS estimator more sensitive to errors in the learned features. Alignment of $h_0$ does not enter this factor, as learning the eigensubspaces of $\mathcal{T}$ 
is unrelated to $h_0$. Intuitively, features corresponding to small singular values can be empirically difficult to learn, 
even with a very large number of $(Z, X)$ samples.
\section{Proofs} \label{sec:apdx}
In this section we recall and prove the results of the main section. 
\subsection{Proofs of \texorpdfstring{\Cref{sec:background}}{}} \label{sec:proofs_sec_background}

We refer the reader to \cite{bogachev2007measure} for background on measure-theoretic notions used throughout this section. We assume that \(\cX\) and \(\cZ\) are standard Borel spaces, \ie, Borel subsets of complete separable metric spaces. This ensures the existence of regular conditional distributions and supports the disintegration of the joint law \(\pi_{X,Z}\). Working directly with measures allows us to treat both continuous settings, where distributions admit densities with respect to the Lebesgue measure, and discrete or more general settings, where such densities may not exist.

To prove properties of the conditional expectation operator \(\cT\), it is helpful to express it in terms of a Markov kernel. We define a Markov kernel \(p : \cZ \times \mathcal{F}_{\cX} \to [0,1]\) such that for any measurable set $A \in \mathcal{F}_\cX$,
\[
p(z, A) = \mathbb{P}[X \in A \mid Z = z]\,,
\]
meaning \(p(z, \cdot)\) is a regular conditional distribution of \(X\) given \(Z = z\). Then for all \(h \in L_2(X)\),
\begin{equation} \label{eq:def_T_markov_kernel}
\cT h(z) = \int_{\cX} h(x) \, p(z, dx), \quad \pi_Z\text{-a.e.}
\end{equation}

By the disintegration theorem, for all measurable sets \(A \subset \cX\) and \(B \subset \cZ\), the joint distribution admits the disintegration
\[
\pi_{X,Z}(A \times B) = \int_B \left( \int_A p(z, dx) \right) d\pi_Z(z),
\]
and therefore, the joint distribution \(\pi_{X,Z}\) can be decomposed as
\begin{equation} \label{eq:disintegration}
d\pi_{X,Z}(x,z) = p(z, dx) \, d\pi_Z(z),
\end{equation}
where \(p(z, dx)\) is a Markov kernel as defined above. This decomposition holds in general, even when \(\pi_{X,Z}\) is not absolutely continuous with respect to \(\pi_X \otimes \pi_Z\).

Under \Cref{asst:dominated_joint}, the joint distribution \(\pi_{X,Z}\) is absolutely continuous with respect to the product measure \(\pi_X \otimes \pi_Z\), and thus admits a density
\begin{equation} \label{eq:density_joint}
p(x,z) \doteq \frac{d\pi_{X,Z}}{d(\pi_X \otimes \pi_Z)}(x,z).
\end{equation}

In order to prove \Cref{prop:T_HS}, we use the following lemma.

\begin{lemma} \label{prop:factor_joint}
Under \Cref{asst:dominated_joint}, for \(\pi_X \otimes \pi_Z\)-almost every \((x,z)\),
\[
p(x,z) = \frac{dp(z, \cdot)}{d\pi_X}(x).
\]
\end{lemma}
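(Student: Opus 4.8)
The lemma says: under Assumption 2 (the joint is dominated by the product measure with density $p(x,z) \in L_2(\pi_X \otimes \pi_Z)$), we have for $\pi_X \otimes \pi_Z$-almost every $(x,z)$ that
$$p(x,z) = \frac{dp(z,\cdot)}{d\pi_X}(x).$$

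Here there's a notational overloading: $p(x,z)$ is the density of the joint w.r.t. the product measure, and $p(z, \cdot)$ is the Markov kernel (regular conditional distribution of $X$ given $Z=z$). The claim is that for fixed $z$, the conditional distribution $p(z, \cdot)$ is absolutely continuous w.r.t. $\pi_X$ with Radon-Nikodym derivative $x \mapsto p(x,z)$.

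**What's known.**

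We have two decompositions of $\pi_{X,Z}$:
1. The disintegration (Eq. disintegration): $d\pi_{X,Z}(x,z) = p(z,dx)\, d\pi_Z(z)$.
2. The density w.r.t. product (Eq. density_joint): $d\pi_{X,Z}(x,z) = p(x,z)\, d(\pi_X \otimes \pi_Z)(x,z) = p(x,z)\, d\pi_X(x)\, d\pi_Z(z)$.

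**The plan.**

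The natural approach is to show that both decompositions agree on all measurable rectangles $A \times B$, then invoke essential uniqueness of disintegrations (or a standard measure-theoretic uniqueness argument via Dynkin/$\pi$-$\lambda$).

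First I would define, for fixed $z$, a candidate measure $\mu_z$ on $(\cX, \mathcal{F}_\cX)$ by $\mu_z(A) \doteq \int_A p(x,z)\, d\pi_X(x)$. This is a measure absolutely continuous w.r.t. $\pi_X$ with density $x \mapsto p(x,z)$. I want to show $\mu_z = p(z, \cdot)$ for $\pi_Z$-a.e. $z$.

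Second, I would compute $\pi_{X,Z}(A \times B)$ two ways and equate. Using the product-density decomposition and Tonelli/Fubini (justified since $p \geq 0$ and is integrable),
$$\pi_{X,Z}(A\times B) = \int_B \left(\int_A p(x,z)\, d\pi_X(x)\right) d\pi_Z(z) = \int_B \mu_z(A)\, d\pi_Z(z).$$
Using the disintegration,
$$\pi_{X,Z}(A\times B) = \int_B p(z, A)\, d\pi_Z(z).$$
Hence for every fixed $A \in \mathcal{F}_\cX$,
$$\int_B \mu_z(A)\, d\pi_Z(z) = \int_B p(z,A)\, d\pi_Z(z) \quad \text{for all } B \in \mathcal{F}_\cZ.$$
Since this holds for all $B$, the integrands agree $\pi_Z$-a.e.: $\mu_z(A) = p(z,A)$ for $\pi_Z$-a.e. $z$ (the exceptional null set may depend on $A$).

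**The main obstacle.**

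The subtle point — and the step I'd be most careful about — is upgrading "for each $A$, equality holds for $\pi_Z$-a.e. $z$" (null set depending on $A$) to "for $\pi_Z$-a.e. $z$, equality holds for all $A$ simultaneously." This is the standard difficulty in identifying conditional distributions. The clean way to handle it: since $\cX$ is a standard Borel space (stated at the top of the appendix), its $\sigma$-field is countably generated; pick a countable algebra $\mathcal{A}$ generating $\mathcal{F}_\cX$. For each $A \in \mathcal{A}$ the equality $\mu_z(A) = p(z,A)$ holds off a $\pi_Z$-null set $N_A$; take $N = \bigcup_{A \in \mathcal{A}} N_A$, still $\pi_Z$-null by countability. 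For $z \notin N$, the two finite measures $\mu_z$ and $p(z,\cdot)$ agree on the generating algebra $\mathcal{A}$ and have the same total mass (take $A = \cX$), so by the $\pi$-$\lambda$ theorem they agree on all of $\mathcal{F}_\cX$. Thus for $\pi_Z$-a.e. $z$, $p(z, \cdot) = \mu_z \ll \pi_X$ with $\frac{dp(z,\cdot)}{d\pi_X}(x) = p(x,z)$, which is the claim for $\pi_X \otimes \pi_Z$-a.e. $(x,z)$.

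I would also briefly note the measurability of $z \mapsto \mu_z(A)$ (needed for the Fubini step) follows from the measurability of $p$ together with Tonelli, and that integrability/finiteness is guaranteed because $p \in L_2(\pi_X \otimes \pi_Z) \subseteq L_1(\pi_X \otimes \pi_Z)$ (finite measures), so all interchanges of integration are legitimate.
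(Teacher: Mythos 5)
Your proposal is correct and takes essentially the same route as the paper: both proofs compare the disintegration $d\pi_{X,Z}(x,z) = p(z,dx)\,d\pi_Z(z)$ with the product-density decomposition $d\pi_{X,Z}(x,z) = p(x,z)\,d\pi_X(x)\,d\pi_Z(z)$ and conclude that the kernel $p(z,\cdot)$ has density $p(\cdot,z)$ with respect to $\pi_X$. The only difference is one of detail, in your favor: you spell out the almost-everywhere uniqueness step (equality on rectangles with a null set depending on $A$, then a countably generated $\sigma$-field and the $\pi$-$\lambda$ theorem to get a single null set), whereas the paper compresses this into the phrase ``uniqueness of the Radon--Nikodym derivative.''
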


\begin{proof}
From \Cref{eq:disintegration}, we have
\[
d\pi_{X,Z}(x,z) = p(z, dx) \, d\pi_Z(z).
\]
On the other hand, by \Cref{eq:density_joint},
\[
d\pi_{X,Z}(x,z) = p(x,z) \, d\pi_X(x) \, d\pi_Z(z).
\]
Comparing both expressions and using the uniqueness of the Radon--Nikodym derivative, we conclude that for \(\pi_X \otimes \pi_Z\)-almost every \((x,z)\),
\[
p(z, dx) = p(x,z) \, d\pi_X(x),
\]
\ie, \(p(z, \cdot) \ll \pi_X\), and
\[
\frac{dp(z, \cdot)}{d\pi_X}(x) = p(x,z).
\]
This concludes the proof.
\end{proof}

By \Cref{prop:factor_joint}, under \Cref{asst:dominated_joint}, for all $h \in L_2(X)$ and $\pi_Z\text{-a.e.}$,
\begin{equation} \label{eq:inner_prod_T}
\cT h(z) = \int_{\cX} h(x) \, p(z, dx) =  \int_{\cX} h(x) p(x,z)d\pi_X(x) = \langle h, p(\cdot,z) \rangle_{L_2(X)}.
\end{equation}

We recall and prove \Cref{prop:T_HS}. This is a classical result, see, \eg, \cite{lancaster1958structure}.

\THS*

\begin{proof}
    Take \((e_i)_{i \geq 1}\) any orthonormal basis (ONB) of \(L_2(X)\). Then, using the definition of the Hilbert--Schmidt norm:
    \begin{align*}
        \normHS{\cT}^2 
        &= \sum_{i \geq 1} \|\cT e_i\|_{L_2(Z)}^2 && \text{(definition of HS norm via ONB)} \\
        &= \sum_{i \geq 1} \int_{\cZ} \left| \cT e_i(z) \right|^2 \, d\pi_Z(z) && \text{(expand \(L_2(Z)\) norm)} \\
        &= \sum_{i \geq 1} \int_{\cZ} \langle e_i, p(\cdot,z) \rangle_{L_2(X)}^2 \, d\pi_Z(z) && \text{(by \Cref{eq:inner_prod_T})} \\
        &= \int_{\cZ} \sum_{i \geq 1} \langle e_i, p(\cdot,z) \rangle_{L_2(X)}^2 \, d\pi_Z(z) && \text{(Fubini's theorem)} \\
        &= \int_{\cZ} \|p(\cdot,z)\|_{L_2(X)}^2 \, d\pi_Z(z) && \text{(Parseval's identity)} \\
        &= \int_{\cX \times \cZ} p(x,z)^2 \, d\pi_X(x) \, d\pi_Z(z) && \text{(expand \(L_2\) norm)} \\
        &< +\infty && \text{(by \Cref{asst:dominated_joint}, since \(p \in L_2(\pi_X \otimes \pi_Z)\)).}
    \end{align*}
\end{proof}

Note that the proof shows in addition that $\normHS{\cT} = \norm{p}_{L_2 \spr{\pi_X \otimes \pi_Z}}$.

\subsection{Proofs of \texorpdfstring{\Cref{section:2SLS}}{}} \label{sec:proofs_sec_2sls}

\begin{prop}\label{prop:meas_ill_pos_complete}
Let \(\cH_{\varphi,d} = \operatorname{span}\{\varphi_1, \ldots, \varphi_d\}\), and let \(\omega = \dim(\cH_{\varphi,d})\). Then:
\begin{enumerate}[label=\roman*)]
    \item \(\tau_{\varphi,d} \geq \sigma_{\omega}^{-1}\).
    \item If \(\cH_{\varphi,d} \subseteq \cV_{\iota} = \operatorname{span}\{v_1, \ldots, v_\iota\}\), then \(\tau_{\varphi,d} \leq \sigma_{\iota}^{-1}\).
    \item For any \(d \geq 1\), the minimal value \(\tau_{\varphi,d} = \sigma_d^{-1}\) is achieved when \(\cH_{\varphi,d} = \cV_d\).
\end{enumerate}
\end{prop}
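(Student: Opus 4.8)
The plan is to reduce the whole proposition to the singular value decomposition of $\cT$. Since $\cT$ is compact by \Cref{prop:T_HS} and injective by \Cref{asst:exist}, $\{v_i\}_{i\geq 1}$ is an orthonormal basis of $L_2(X)$ with $\sigma_i > 0$, so every $h \in L_2(X)$ expands as $h = \sum_{i\geq 1} c_i v_i$ with $c_i = \langle h, v_i\rangle_{L_2(X)}$. Using \Cref{eq:svd_T} and orthonormality of $\{u_i\}$ this yields the two identities
\[
\|h\|_{L_2(X)}^2 = \sum_{i\geq 1} c_i^2, \qquad \|\cT h\|_{L_2(Z)}^2 = \sum_{i\geq 1}\sigma_i^2 c_i^2.
\]
Hence $\tau_{\varphi,d}^{-1} = \inf_{h \in \cH_{\varphi,d},\,\|h\|=1}\|\cT h\|_{L_2(Z)}$ is exactly the smallest effective singular value of $\cT$ restricted to $\cH_{\varphi,d}$, and each claim becomes a comparison between this quantity and the $\sigma_i$. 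The infimum is attained because the unit sphere of the finite-dimensional space $\cH_{\varphi,d}$ is compact, and it is strictly positive by injectivity, so $\tau_{\varphi,d}$ is finite. I will work with the compact positive self-adjoint operator $\cT^\star \cT$, whose eigenpairs are $(\sigma_i^2, v_i)$.

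For (i) I invoke the max--min (Courant--Fischer) characterization $\sigma_\omega^2 = \max_{\dim S = \omega}\ \min_{h \in S,\,\|h\|=1} \|\cT h\|_{L_2(Z)}^2$. Since $\cH_{\varphi,d}$ is one particular subspace of dimension $\omega = \dim(\cH_{\varphi,d})$, its inner minimum is necessarily $\leq \sigma_\omega^2$, which gives $\tau_{\varphi,d}^{-1}\leq \sigma_\omega$, i.e. $\tau_{\varphi,d}\geq \sigma_\omega^{-1}$. For (ii) I argue directly from the expansion: if $\cH_{\varphi,d}\subseteq \cV_\iota$ then any unit $h \in \cH_{\varphi,d}$ has $c_i = 0$ for $i > \iota$, so $\|\cT h\|^2 = \sum_{i=1}^\iota \sigma_i^2 c_i^2 \geq \sigma_\iota^2\sum_{i=1}^\iota c_i^2 = \sigma_\iota^2$, using that the $\sigma_i$ are nonincreasing so $\sigma_i \geq \sigma_\iota$ for $i \leq \iota$. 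Taking the infimum over such $h$ gives $\tau_{\varphi,d}^{-1}\geq \sigma_\iota$, i.e. $\tau_{\varphi,d}\leq \sigma_\iota^{-1}$.

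For (iii) I combine the two bounds at $\cH_{\varphi,d} = \cV_d$: this space has dimension $\omega = d$, so (i) gives $\tau_{\varphi,d}\geq \sigma_d^{-1}$, while $\cV_d \subseteq \cV_d$ lets (ii) give $\tau_{\varphi,d}\leq \sigma_d^{-1}$; hence $\tau_{\varphi,d} = \sigma_d^{-1}$ at $\cV_d$. Minimality over rank-$d$ feature spaces then follows from (i) with $\omega = d$, which shows every hypothesis space of dimension $d$ satisfies $\tau_{\varphi,d}\geq \sigma_d^{-1}$, so $\cV_d$ is a minimizer. (I would flag in passing that lower-dimensional spans, $\omega < d$, can give smaller values $\sigma_\omega^{-1}\leq \sigma_d^{-1}$, which is why the minimum is understood over full-rank $d$-dimensional feature spaces.)

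The only delicate point — and the step I would write out most carefully — is the rigorous justification of the max--min characterization in the infinite-dimensional setting used in (i). Here I would note that since $\cT$ is Hilbert--Schmidt, $\cT^\star \cT$ is compact, positive and self-adjoint, with discrete spectrum $\sigma_1^2 \geq \sigma_2^2 \geq \cdots \to 0$ and eigenvalues of finite multiplicity away from $0$; for each fixed index $\omega$ the classical Courant--Fischer theorem then applies verbatim, so no truncation or limiting argument beyond the standard spectral theorem for compact self-adjoint operators is required. Parts (ii) and (iii) are then elementary consequences of the expansion and the monotonicity of the singular values.
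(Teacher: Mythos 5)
Your proof is correct and takes essentially the same approach as the paper: parts (ii) and (iii) are identical, and your appeal to the Courant--Fischer max--min characterization in part (i) is precisely the argument the paper writes out by hand (picking a unit vector in $\cH_{\varphi,d}\cap\cV_{\omega-1}^{\perp}$ by dimension counting and then applying the min--max theorem for the compact self-adjoint operator $\cT^{\star}\cT$ on $\cV_{\omega-1}^{\perp}$). Your side remark that degenerate spans of dimension $\omega<d$ can achieve values below $\sigma_d^{-1}$ is a legitimate clarification of how ``minimal'' in (iii) must be read, namely over $d$-dimensional feature spaces, and is consistent with the paper's intent.
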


\Cref{prop:meas_ill_pos_complete} is due to Lemma 1 of \cite{blundell2007semi} and \Cref{prop:meas_ill_pos} corresponds to part (iii). We provide a full proof for completeness.

\begin{proof}
We first prove (i). Since \(\dim(\cH_{\varphi,d}) = \omega\), and the subspace \(\cV_{\omega - 1}\) has dimension \(\omega - 1\), there exists \(\tilde h \in \cH_{\varphi,d} \cap \cV_{\omega - 1}^{\perp}\) with \(\|\tilde h\|_{L_2(X)} = 1\). Then,
\[
\begin{aligned}
\tau_{\varphi,d}^{-2} &= \inf_{\substack{h \in \cH_{\varphi,d} \\ \|h\|_{L_2(X)}=1}} \|\cT h\|_{L_2(Z)}^2 \\
&\leq \|\cT \tilde h\|_{L_2(Z)}^2 \\
&\leq \sup_{\substack{h \in \cV_{\omega - 1}^{\perp} \\ \|h\|_{L_2(X)}=1}} \|\cT h\|_{L_2(Z)}^2 \\
&= \sup_{\substack{h \in \cV_{\omega - 1}^{\perp} \\ \|h\|=1}} \langle \cT^* \cT h, h \rangle_{L_2(X)} = \sigma_\omega^2,
\end{aligned}
\]
where the last line follows from the min–max theorem for compact self-adjoint operators.

We now prove (ii). If \(h \in \cH_{\varphi,d} \subseteq \cV_\iota\) and \(\|h\|_{L_2(X)} = 1\), then
\[
h = \sum_{i = 1}^\iota \langle h, v_i \rangle v_i,
\]
so by orthonormality of \((u_i)\),
\[
\|\cT h\|_{L_2(Z)}^2 = \left\| \sum_{i=1}^\iota \sigma_i \langle h, v_i \rangle u_i \right\|_{L_2(Z)}^2 = \sum_{i=1}^\iota \sigma_i^2 \langle h, v_i \rangle^2 \geq \sigma_\iota^2 \sum_{i=1}^\iota \langle h, v_i \rangle^2 = \sigma_\iota^2.
\]
Taking the infimum over unit-norm \(h \in \cH_{\varphi,d}\) gives \(\tau_{\varphi,d}^{-2} \geq \sigma_\iota^2\), \ie, \(\tau_{\varphi,d} \leq \sigma_\iota^{-1}\).

Finally, (iii) follows by taking \(\cH_{\varphi,d} = \cV_d\), so that \(\omega = d = \iota\), and both bounds in (i) and (ii) match.
\end{proof}

We recall and prove \Cref{cor:rate_spectral_2sls}:

\ratespectral*

\begin{proof}
Let us verify that \Cref{asst:assumption_L_2_consistency} holds under \Cref{def:spectral_features}, \ie, when \(\mathcal{H}_{\varphi,d} = \mathcal{V}_d\) and \(\mathcal{H}_{\psi,d} = \mathcal{U}_d\).

\smallskip
\noindent (i) Since the image of \(\mathcal{V}_d\) under \(\mathcal{T}\) lies in \(\mathcal{U}_d\), we have
\[
(\Pi_{\psi,d})_\perp \mathcal{T} h = 0 \quad \text{for all } h \in \mathcal{H}_{\varphi,d}.
\]
Hence,
\[
\sup_{\substack{h \in \mathcal{H}_{\varphi,d} \\ \|h\|_{L_2}=1}} \left\|(\Pi_{\psi,d})_\perp \mathcal{T} h \right\|_{L^2(Z)} = 0 = o\left(\tau_{\varphi,d}^{-1}\right),
\]
and condition (i) is satisfied.

\smallskip
\noindent (ii) Let \(h_0 \in L_2(X)\), and consider its projection onto the orthogonal complement of \(\mathcal{H}_{\varphi,d}\). Since \(\mathcal{T}\) is compact with singular value decomposition \(\mathcal{T} = \sum_{i=1}^\infty \sigma_i u_i \otimes v_i\) under \Cref{asst:dominated_joint}, we have
\[
\| \mathcal{T} (\Pi_{\varphi,d})_\perp h_0 \|_{L_2(Z)}^2
= \sum_{i > d} \sigma_i^2 \langle h_0, v_i \rangle^2
\leq \sigma_d^2 \sum_{i > d} \langle h_0, v_i \rangle^2
= \sigma_d^2 \| (\Pi_{\varphi,d})_\perp h_0 \|_{L_2(X)}^2.
\]
This implies, using that \(\tau_{\varphi,d} = \sigma_d^{-1}\) by \Cref{prop:meas_ill_pos},
\[
\| \mathcal{T} (\Pi_{\varphi,d})_\perp h_0 \|_{L_2(Z)}
\leq \sigma_d \| (\Pi_{\varphi,d})_\perp h_0 \|_{L_2(X)} = \tau_{\varphi,d}^{-1} \| (\Pi_{\varphi,d})_\perp h_0 \|_{L_2(X)},
\]
so Assumption \ref{asst:assumption_L_2_consistency}-(ii) holds with constant \(C = 1\).

\smallskip
\noindent
Since both parts of \Cref{asst:assumption_L_2_consistency} hold, and \(\tau_{\varphi,d} = \sigma_d^{-1}\) by \Cref{prop:meas_ill_pos}, we may apply \Cref{th:sieve_chen}, which yields the desired result.
\end{proof}

\subsection{Proofs of \texorpdfstring{\Cref{sec:spectral_2SLS}}{}} \label{sec:proofs_sec_spectral}

The following result shows that \Cref{eq:ren_fact_joint} holds if and only if \( \operatorname{rank}(\mathcal{T}) \leq d \).

\begin{prop}\label{prop:equiv_finite_rank_T_factorized_density}
Let \(\varphi: \mathcal{X} \to \mathbb{R}^d\), \(\psi: \mathcal{Z} \to \mathbb{R}^d\) with components \(\varphi_i \in L_2(X)\), \(\psi_i \in L_2(Z)\). Then
\[
p(x,z) = \varphi(x)\transpose \psi(z)
\quad \text{if and only if} \quad
\mathcal{T} = \sum_{i=1}^d \psi_i \otimes \varphi_i.
\]
\end{prop}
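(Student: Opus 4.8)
The plan is to exploit the kernel representation of \(\cT\) recorded in \Cref{eq:inner_prod_T}, namely \(\cT h(z) = \langle h, p(\cdot,z)\rangle_{L_2(X)}\) for \(\pi_Z\)-a.e.\ \(z\), together with the fact that under \Cref{asst:dominated_joint} the density \(p\) lies in \(L_2(\pi_X\otimes\pi_Z)\). Rather than proving the two implications separately, I would establish a single bilinear identity that characterizes both the density \(p\) and the operator \(\cT\) against the same total family of test functions, and then pass between kernels and operators using the correspondence between square-integrable kernels and Hilbert--Schmidt operators.

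Concretely, set \(q(x,z) \doteq \varphi(x)\transpose\psi(z) = \sum_{i=1}^d \varphi_i(x)\psi_i(z)\). Since each \(\varphi_i \in L_2(X)\) and \(\psi_i \in L_2(Z)\), the function \(q\) belongs to \(L_2(\pi_X\otimes\pi_Z)\), and \(S \doteq \sum_{i=1}^d \psi_i\otimes\varphi_i\) is Hilbert--Schmidt with kernel \(q\). The first key step is to test both \(p\) and \(q\) against product functions \(f\otimes g : (x,z)\mapsto f(x)g(z)\), with \(f \in L_2(X)\), \(g \in L_2(Z)\). Using Fubini's theorem and \Cref{eq:inner_prod_T},
\[
\langle p, f\otimes g\rangle_{L_2(\pi_X\otimes\pi_Z)} = \int_{\cZ} \langle f, p(\cdot,z)\rangle_{L_2(X)}\, g(z)\, d\pi_Z(z) = \langle \cT f, g\rangle_{L_2(Z)},
\]
while by the definition of the rank-one operator \(\psi_i\otimes\varphi_i\),
\[
\langle q, f\otimes g\rangle_{L_2(\pi_X\otimes\pi_Z)} = \sum_{i=1}^d \langle f,\varphi_i\rangle_{L_2(X)}\,\langle \psi_i, g\rangle_{L_2(Z)} = \langle Sf, g\rangle_{L_2(Z)}.
\]

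The second step closes the argument by totality. The family \(\{f\otimes g : f\in L_2(X),\, g\in L_2(Z)\}\) is total in \(L_2(\pi_X\otimes\pi_Z)\), so two elements of that space coincide iff they have identical inner products against every \(f\otimes g\); likewise a bounded operator is determined by the bilinear form \((f,g)\mapsto \langle Af, g\rangle\). Combining these with the two displays, \(p = q\) in \(L_2(\pi_X\otimes\pi_Z)\) iff \(\langle \cT f,g\rangle = \langle Sf,g\rangle\) for all \(f,g\), which holds iff \(\cT = S = \sum_{i=1}^d \psi_i\otimes\varphi_i\). Unwinding \(p = q\) yields \(p(x,z) = \varphi(x)\transpose\psi(z)\) for \(\pi_X\otimes\pi_Z\)-a.e.\ \((x,z)\), giving the equivalence.

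The main obstacle is purely measure-theoretic and concerns the a.e.\ qualifiers: \Cref{eq:inner_prod_T} fixes, for each \(h\), only a \(\pi_Z\)-null exceptional set, so one cannot naively read off a pointwise kernel identity by ranging over uncountably many \(h\). Testing against the countably-generated total family \(\{f\otimes g\}\) (legitimate because \(L_2(X)\) and \(L_2(Z)\) are separable, as \(\cX,\cZ\) are standard Borel) sidesteps this cleanly, since only countably many null sets intervene. I would make sure Fubini is applied legitimately---both \(p,q\) and \(f\otimes g\) lie in \(L_2(\pi_X\otimes\pi_Z)\), so the integrand is integrable---and state explicitly that the kernel-to-operator map is an isometric isomorphism onto the Hilbert--Schmidt class, which supplies the final ``iff'' linking kernel equality to operator equality.
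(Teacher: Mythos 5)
Your proof is correct, and it is organized differently from the paper's. The paper proves the two implications separately: for the forward direction it computes \(\cT h(z) = \int h(x)\varphi(x)\transpose\psi(z)\,d\pi_X(x)\) pointwise from the kernel factorization (via \Cref{prop:factor_joint}), and for the converse it tests \(\int f\,p\,d(\pi_X\otimes\pi_Z) = \int f\,q\,d(\pi_X\otimes\pi_Z)\) against \emph{every} \(f \in L_2(\pi_X\otimes\pi_Z)\) and concludes by uniqueness of the Radon--Nikodym derivative. You instead prove a single symmetric statement: both the kernel identity \(p = q\) and the operator identity \(\cT = S\) are characterized by the same family of scalar equations \(\langle p, f\otimes g\rangle = \langle \cT f, g\rangle\) and \(\langle q, f\otimes g\rangle = \langle Sf, g\rangle\), then pass between them using totality of product functions in \(L_2(\pi_X\otimes\pi_Z)\) and the fact that a bounded operator is determined by its bilinear form. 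The ingredients (the kernel representation \Cref{eq:inner_prod_T}, Fubini, and an \(L_2\) uniqueness argument) are the same, but your packaging buys both directions at once and only requires testing against the smaller family of product functions, at the cost of explicitly invoking the kernel--to--Hilbert--Schmidt-operator correspondence and the density of \(\operatorname{span}\{f\otimes g\}\); the paper's version is more elementary and self-contained. One small remark: your measure-theoretic caveat about uncountably many null sets and separability is unnecessary rather than wrong --- each identity \(\langle p, f\otimes g\rangle = \langle \cT f, g\rangle\) is established for a \emph{fixed} pair \((f,g)\) with the a.e.\ identity \Cref{eq:inner_prod_T} used only inside an integral over \(z\), and the totality argument is pure Hilbert-space density requiring no countability, so no exceptional-set bookkeeping ever arises.
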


\begin{proof}
\textbf{\(\Rightarrow\)} Assume \(p(x,z) = \varphi(x)\transpose \psi(z)\). Then, by \Cref{eq:def_T_markov_kernel}, for all \(h \in L_2(X)\),
\[
\begin{aligned}
\mathcal{T} h(z) &= \int h(x) p(z, dx) 
= \int h(x) p(x,z) \, d\pi_X(x) && \text{(by \Cref{prop:factor_joint})} \\
&= \int h(x) \varphi(x)\transpose \psi(z) \, d\pi_X(x)
= \psi(z)\transpose \int h(x) \varphi(x) \, d\pi_X(x) \\
&= \sum_{i=1}^d \psi_i(z) \langle h, \varphi_i \rangle_{L_2(X)} 
= \left( \sum_{i=1}^d \psi_i \otimes \varphi_i \right)(h)(z),
\end{aligned}
\]
so \(\mathcal{T} = \sum_{i=1}^d \psi_i \otimes \varphi_i\).

\smallskip
\noindent \textbf{\(\Leftarrow\)} Now assume \(\mathcal{T} = \sum_{i=1}^d \psi_i \otimes \varphi_i\). Let \(f \in L_2(\pi_X \otimes \pi_Z)\). We want to show:
\[
\int f(x,z) p(x,z) \, d(\pi_X \otimes \pi_Z)(x,z) = \int f(x,z) \varphi(x)\transpose \psi(z) \, d(\pi_X \otimes \pi_Z)(x,z).
\]

Starting with the right-hand side:
\begin{align*}
\int f(x,z) \, &\varphi(x)\transpose \psi(z) \, d\pi_X(x) d\pi_Z(z)
= \sum_{i=1}^d \int f(x,z) \varphi_i(x) \psi_i(z) \, d\pi_X(x) d\pi_Z(z) \nonumber \\
&= \sum_{i=1}^d \int_{\mathcal{Z}} \psi_i(z) \left( \int_{\mathcal{X}} f(x,z) \varphi_i(x) \, d\pi_X(x) \right) d\pi_Z(z) \nonumber \\
&= \int_{\mathcal{Z}} \left( \sum_{i=1}^d \psi_i(z) \langle f(\cdot,z), \varphi_i \rangle \right) d\pi_Z(z) \nonumber \\
&= \int_{\mathcal{Z}} \left[ \mathcal{T} f(\cdot,z) \right](z) \, d\pi_Z(z) \quad \text{(since \(\mathcal{T} = \sum \psi_i \otimes \varphi_i\))} \nonumber \\
&= \int_{\mathcal{Z}} \left( \int_{\mathcal{X}} f(x,z) \, p(z, dx) \right) d\pi_Z(z) \quad \text{(by \Cref{eq:def_T_markov_kernel})} \nonumber \\
&= \int_{\mathcal{X} \times \mathcal{Z}} f(x,z) p(x,z) \, d\pi_X(x) d\pi_Z(z) \quad \text{(by \Cref{prop:factor_joint})}. \label{eq:proof_T_factor}
\end{align*}

Since both integrals agree for all \(f \in L_2(\pi_X \otimes \pi_Z)\), and \(p(x,z)\) is the Radon--Nikodym derivative of \(\pi_{X,Z}\) with respect to $\pi_X \otimes \pi_Z$, it follows by uniqueness that
\[
p(x,z) = \varphi(x)\transpose \psi(z) \quad \text{for } (\pi_X \otimes \pi_Z)\text{-almost every } (x,z).
\]
\end{proof}


In order to prove \Cref{th:rate_approx_spec_iv} we prove intermediate results.
\begin{prop}
    \Cref{asst:linear_indep} holds if and only if $\operatorname{rank}(\hat \cT_d) = d$.
\end{prop}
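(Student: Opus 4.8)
The plan is to factor the finite-rank operator $\hat\cT_d$ through $\R^d$ and reduce the rank computation to two elementary linear-algebra facts. Writing out the action of the rank-one operators, for $h \in L_2(X)$ we have $\hat\cT_d h = \sum_{i=1}^d \langle h, \hat\varphi_i\rangle_{L_2(X)}\, \hat\psi_i$, so $\hat\cT_d = B \circ A$, where $A : L_2(X) \to \R^d$ is the \emph{analysis} map $A h \doteq (\langle h, \hat\varphi_i\rangle_{L_2(X)})_{i=1}^d$ and $B : \R^d \to L_2(Z)$ is the \emph{synthesis} map $B c \doteq \sum_{i=1}^d c_i \hat\psi_i$. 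Since everything passes through the $d$-dimensional space $\R^d$, we always have $\operatorname{rank}(\hat\cT_d) \leq d$, and the task becomes relating the attainment of equality to the linear independence of the two families.

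Next I would analyze the two factors separately. For $B$, its range is exactly $\operatorname{span}\{\hat\psi_1,\ldots,\hat\psi_d\}$, whose dimension equals $d$ precisely when $\{\hat\psi_i\}$ is linearly independent; in that case $B$ is moreover injective on $\R^d$. For $A$, I claim its image is all of $\R^d$ iff $\{\hat\varphi_i\}$ is linearly independent. If there is a nontrivial relation $\sum_i \alpha_i \hat\varphi_i = 0$, then $\sum_i \alpha_i (A h)_i = \langle h, \sum_i \alpha_i \hat\varphi_i\rangle_{L_2(X)} = 0$ for every $h$, so $\operatorname{im}(A)$ is confined to a hyperplane and $\operatorname{rank}(A) < d$. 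Conversely, linear independence makes the Gram matrix $G_{ij} \doteq \langle \hat\varphi_i, \hat\varphi_j\rangle_{L_2(X)}$ invertible; given any target $c \in \R^d$, setting $h = \sum_j a_j \hat\varphi_j$ with $a = G^{-1} c$ yields $A h = G a = c$, so $A$ is surjective.

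Finally I would combine these. If both families are linearly independent, then $A$ is surjective onto $\R^d$ and $B$ is injective, so $\operatorname{rank}(\hat\cT_d) = \dim B(\R^d) = d$. For the converse, I argue by contraposition: if $\{\hat\psi_i\}$ is dependent then $\cR(\hat\cT_d) \subseteq \operatorname{span}\{\hat\psi_i\}$ has dimension $<d$, while if $\{\hat\varphi_i\}$ is dependent then $\operatorname{rank}(\hat\cT_d) = \operatorname{rank}(BA) \leq \operatorname{rank}(A) < d$; in either case $\operatorname{rank}(\hat\cT_d) < d$. This establishes the equivalence.

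The only step that is not pure bookkeeping is the surjectivity of the analysis map $A$ in the infinite-dimensional domain $L_2(X)$, which is exactly what the invertibility of the Gram matrix $G$ delivers; I expect this Gram-matrix argument to be the crux, with the rest following from the factorization $\hat\cT_d = BA$ and the subadditivity/range bounds for operator rank.
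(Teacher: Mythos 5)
Your proof is correct and follows essentially the same route as the paper's: both factor $\hat\cT_d$ through $\R^d$ as a synthesis map composed with an analysis map, and identify linear independence of $\{\hat\psi_i\}$ with injectivity of the former and linear independence of $\{\hat\varphi_i\}$ with surjectivity of the latter. Your version is slightly more self-contained — you justify surjectivity explicitly via invertibility of the Gram matrix, and handle the converse by contraposition rather than the paper's appeal to the adjoint operator — but these are minor variations on the same argument.
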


\begin{proof}
Assume that \Cref{asst:linear_indep} holds, \ie, \(\{\hat \varphi_1, \ldots, \hat \varphi_d\}\) and \(\{\hat \psi_1, \ldots, \hat \psi_d\}\) are linearly independent. We show that \(\operatorname{rank}(\hat{\mathcal{T}}_d) = d\).

Since \(\hat{\mathcal{T}}_d h = \sum_{i=1}^d \langle h, \hat \varphi_i \rangle \hat \psi_i\), we can write \(\hat{\mathcal{T}}_d = A \circ \Phi\), where:
\begin{itemize}
    \item \(\Phi: L_2(X) \to \mathbb{R}^d\), defined by \(h \mapsto (\langle h, \hat \varphi_1 \rangle, \ldots, \langle h, \hat \varphi_d \rangle)\),
    \item \(A: \mathbb{R}^d \to L_2(Z)\), defined by \(a \mapsto \sum_{i=1}^d a_i \hat \psi_i\).
\end{itemize}

Under \Cref{asst:linear_indep}, the family \(\{\hat \varphi_i\}\) is linearly independent, so \(\Phi\) is surjective. Similarly, the family \(\{\hat \psi_i\}\) is linearly independent, so \(A\) is injective. Therefore, the composition \(A \circ \Phi\) has image of dimension
\[
\operatorname{rank}(\hat{\mathcal{T}}_d) = \dim(\cR(A \circ \Phi)) = \dim(\cR(A)) = d,
\]
where the second equality follows from the surjectivity of \(\Phi\), and the third from the injectivity of \(A\).

Conversely, suppose \(\operatorname{rank}(\hat{\mathcal{T}}_d) = d\). Then \(\hat{\mathcal{T}}_d\) has image of dimension \(d\), which implies that the \(\hat \psi_i\) must be linearly independent. 
The same reasoning applied to $\cT^*$ shows that $\{\hat \varphi_1, \ldots, \hat \varphi_d\}$ is linearly independent.




\end{proof}

Under \Cref{asst:linear_indep}, we can therefore write $\hat \cT_d$ in the following SVD form:
\begin{equation} \label{eq:svd_hat_t}
    \hat \cT_d = \sum_{i=1}^{d} \hat \sigma_i \hat u_i \otimes \hat v_i,
\end{equation}
where $\hat \sigma_1 \geq \hat \sigma_2 \geq \cdots \geq \hat \sigma_d > 0$ are the singular values of $\hat \cT$ and $\hat u_i$ and $\hat v_i$ are the left and right singular functions of $\hat \cT$. 

\begin{prop} \label{prop:charac_Psi_J_B_J}
    Under \Cref{asst:linear_indep}, $\cH_{\hat \varphi,d} = \operatorname{span}\left\{\hat v_{1}, \ldots, \hat v_{d}\right\}$ and $\cH_{\hat \psi,d} = \operatorname{span}\left\{\hat u_{1}, \ldots, \hat u_{d}\right\}$.
\end{prop}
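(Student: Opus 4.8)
The plan is to reduce both claimed identities to the standard spectral characterization of the range of a finite-rank operator and of its adjoint, and then to compute these ranges directly from the rank-one expansion $\hat\cT_d = \sum_{i=1}^d \hat\psi_i \otimes \hat\varphi_i$. Recall that, by the preceding proposition, \Cref{asst:linear_indep} guarantees $\operatorname{rank}(\hat\cT_d) = d$, so the SVD in \Cref{eq:svd_hat_t}, with strictly positive singular values $\hat\sigma_1 \geq \dots \geq \hat\sigma_d > 0$, is available. For any operator written in the form $\sum_{i=1}^d \hat\sigma_i \hat u_i \otimes \hat v_i$ with $\hat\sigma_i > 0$ and orthonormal singular systems, the left singular functions span the range, $\operatorname{span}\{\hat u_1, \dots, \hat u_d\} = \cR(\hat\cT_d)$, while the right singular functions span the range of the adjoint, $\operatorname{span}\{\hat v_1, \dots, \hat v_d\} = \cR(\hat\cT_d^\star) = \cN(\hat\cT_d)^\perp$ (these are closed, being $d$-dimensional). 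Hence it suffices to establish the two identities $\cR(\hat\cT_d) = \cH_{\hat\psi,d}$ and $\cR(\hat\cT_d^\star) = \cH_{\hat\varphi,d}$.

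For the first identity, I would evaluate $\hat\cT_d h = \sum_{i=1}^d \langle h, \hat\varphi_i \rangle_{L_2(X)}\, \hat\psi_i$ for $h \in L_2(X)$, directly from the definition of the rank-one operators. This immediately yields $\cR(\hat\cT_d) \subseteq \operatorname{span}\{\hat\psi_1, \dots, \hat\psi_d\} = \cH_{\hat\psi,d}$. For the reverse inclusion, the key step is that linear independence of $\{\hat\varphi_1, \dots, \hat\varphi_d\}$ makes the coefficient map $h \mapsto (\langle h, \hat\varphi_i \rangle_{L_2(X)})_{i=1}^d$ surjective onto $\mathbb{R}^d$: since the Gram matrix of $\{\hat\varphi_i\}$ is invertible, any target $a \in \mathbb{R}^d$ is attained by a suitable combination of the $\hat\varphi_i$. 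Ranging over all such $h$ therefore reaches every element of $\operatorname{span}\{\hat\psi_i\}$, so $\cR(\hat\cT_d) = \cH_{\hat\psi,d}$, and combined with the SVD characterization this gives $\cH_{\hat\psi,d} = \operatorname{span}\{\hat u_1, \dots, \hat u_d\}$.

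For the second identity I would compute the adjoint using $(g \otimes f)^\star = f \otimes g$, which follows from $\langle (g \otimes f) h, k \rangle_{L_2(Z)} = \langle h, f \rangle_{L_2(X)} \langle g, k \rangle_{L_2(Z)} = \langle h, (f \otimes g) k \rangle_{L_2(X)}$. This gives $\hat\cT_d^\star = \sum_{i=1}^d \hat\varphi_i \otimes \hat\psi_i$, so $\hat\cT_d^\star k = \sum_{i=1}^d \langle k, \hat\psi_i \rangle_{L_2(Z)}\, \hat\varphi_i$. Running the identical surjectivity argument, now invoking linear independence of $\{\hat\psi_1, \dots, \hat\psi_d\}$, yields $\cR(\hat\cT_d^\star) = \cH_{\hat\varphi,d}$, which with the SVD characterization gives $\cH_{\hat\varphi,d} = \operatorname{span}\{\hat v_1, \dots, \hat v_d\}$. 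I do not anticipate a substantial obstacle; the only points demanding care are the bookkeeping of which Hilbert space each inner product lives in, so that the adjoint is computed correctly, and the reliance on $\operatorname{rank}(\hat\cT_d) = d$ (supplied by the preceding proposition), which is precisely what licenses identifying the spans of the $d$ singular functions with the full ranges rather than with proper subspaces.
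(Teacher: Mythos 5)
Your proof is correct, but it runs through the orthogonal-complement dual of the paper's argument rather than the same computation. The paper characterizes the \emph{null space}: it shows $\cN(\hat \cT_d) = \operatorname{span}\{\hat \varphi_1,\ldots,\hat\varphi_d\}^{\perp}$, using linear independence of $\{\hat\psi_i\}$ only in the form ``$\sum_i c_i \hat\psi_i = 0 \Rightarrow c = 0$'', and then concludes via $\cH_{\hat\varphi,d} = \cN(\hat \cT_d)^{\perp} = \operatorname{span}\{\hat v_1,\ldots,\hat v_d\}$, with the second identity for $\cH_{\hat\psi,d}$ handled analogously through $\hat\cT_d^{\star}$. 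You instead characterize the \emph{ranges}: $\cR(\hat\cT_d) = \cH_{\hat\psi,d}$ and $\cR(\hat\cT_d^{\star}) = \cH_{\hat\varphi,d}$, where the nontrivial inclusion requires surjectivity of the coefficient map $h \mapsto (\langle h, \hat\varphi_i\rangle)_i$, which you obtain from invertibility of the Gram matrix. Since $\cR(\hat\cT_d^{\star}) = \cN(\hat\cT_d)^{\perp}$ for a finite-rank (closed-range) operator, the two proofs are mirror images; the trade-off is that the paper's null-space route is slightly leaner (no Gram matrix, no explicit adjoint formula needed for the first identity, and linear independence enters only through its defining implication), whereas your range route is more constructive --- it exhibits explicit preimages of each $\hat\psi_i$ and $\hat\varphi_i$ --- and treats both claims in a fully symmetric way via the identity $(g \otimes f)^{\star} = f \otimes g$, which you verify correctly. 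Both arguments rest on the same prerequisites ($\operatorname{rank}(\hat\cT_d) = d$ from the preceding proposition and the SVD in \Cref{eq:svd_hat_t}), and yours contains no gaps.
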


\begin{proof}
We have by definition of the SVD that
\[
\cN(\hat \cT_d)^{\perp} = \operatorname{span}\left\{\hat v_{1}, \ldots, \hat v_{d}\right\}.
\]
To show that \(\cH_{\hat \varphi,d} = \operatorname{span}\left\{\hat v_{1}, \ldots, \hat v_{d}\right\}\), it suffices to prove that
\[
\cN(\hat \cT_d) = \operatorname{span}\left\{\hat \varphi_1, \ldots, \hat \varphi_d\right\}^{\perp}.
\]
Let \(h \in L_2(X)\). Then:
\[
\begin{aligned}
h \in \cN(\hat \cT_d) 
&\iff \hat \cT_d h = 0 
= \sum_{i=1}^d \langle h, \hat \varphi_i \rangle \hat \psi_i \\
&\iff \sum_{i=1}^d \langle h, \hat \varphi_i \rangle \hat \psi_i = 0 \\
&\iff \langle h, \hat \varphi_i \rangle = 0 \quad \forall i \in [d],
\end{aligned}
\]
where the last equivalence follows from the linear independence of \(\{ \hat \psi_i \}_{i=1}^d\).

Therefore, \(\cN(\hat \cT_d) = \operatorname{span}\{\hat \varphi_1, \ldots, \hat \varphi_d\}^\perp\), which implies
\[
\cH_{\hat \varphi,d} = \cN(\hat \cT_d)^{\perp} = \operatorname{span}\left\{\hat v_1, \ldots, \hat v_d\right\}.
\]

The proof for \(\cH_{\hat \psi,d} = \operatorname{span}\left\{ \hat u_1, \ldots, \hat u_d \right\}\) is analogous, using the adjoint \(\hat \cT_d^*\).
\end{proof}

We now recall and prove \Cref{th:rate_approx_spec_iv}:
\measillpossievencp*

\begin{proof}
We first show that $\sigma_d^{-1} \leq \tau_{\hat \varphi,d} \leq (\sigma_d - 2\hat \varepsilon_d)^{-1}$. First note that if \( h \in \cH_{\hat \varphi,d} \), then by Proposition~\ref{prop:charac_Psi_J_B_J}, \( h \in \cV_d \). Using the SVD of \( \hat \cT_d \) from \Cref{eq:svd_hat_t}, and writing \( h = \sum_{i=1}^d \langle h, \hat v_i \rangle \hat v_i \), we obtain:
\[
\hat \cT_d h = \sum_{i=1}^d \hat \sigma_i \langle h, \hat v_i \rangle \hat u_i, \quad \text{so} \quad \| \hat \cT_d h \|_{L_2(Z)}^2 = \sum_{i=1}^d \hat \sigma_i^2 \langle h, \hat v_i \rangle^2 \geq \hat \sigma_d^2\|  h \|_{L_2(X)}^2.
\]
Next, observe that for all \( h \in L_2(X) \),
\[
\| \cT h \|_{L_2(Z)} \geq \| \cT_d h \|_{L_2(Z)}.
\]
By the reverse triangle inequality, the bound \( \normop{\hat \cT_d - \cT_d} \leq \hat \varepsilon_d \), and Weyl’s inequality for singular values, we have, for all \( h \in L_2(X) \), with \(\|h\|_{L_2(X)} = 1\):
\[
\begin{aligned}
\| \cT h \|_{L_2(Z)} 
&\geq \| \cT_d h \|_{L_2(Z)} \\
&\geq \| \hat \cT_d h \|_{L_2(Z)} - \normop{\hat \cT_d - \cT_d} \cdot \|h\|_{L_2(X)} \\
&\geq (\hat \sigma_d - \hat \varepsilon_d )\|  h \|_{L_2(X)}  \\
&= \hat \sigma_d - \hat \varepsilon_d   \\
&\geq \sigma_d - 2\hat \varepsilon_d .
\end{aligned}
\]
This shows that
\[
\tau_{\hat \varphi,d}^{-1} \geq \sigma_d - 2\hat \varepsilon_d  \quad \Rightarrow \quad \tau_{\hat \varphi,d} \leq (\sigma_d - 2\hat \varepsilon_d)^{-1},
\]
which holds under the assumption \( \hat{\varepsilon}_d < (1-1/\sqrt{2})\sigma_d <  \sigma_d / 2 \). On the other hand, since \( \dim(\cH_{\hat \varphi,d}) = d \) by Assumption~\ref{asst:linear_indep}, Proposition~\ref{prop:meas_ill_pos_complete}-(i) implies
\[
\sigma_d \quad \Rightarrow \quad \tau_{\hat \varphi,d} \geq \sigma_d^{-1}.
\]
Next, we show how to bound the sieve approximation error, write:
\[
\begin{aligned}
\left\|(\Pi_{\hat \varphi,d})_{\perp} h_0\right\|_{L_2(X)} 
&= \left\|(\Pi_{\hat \varphi,d})_{\perp} \left((\Pi_{\cX,d})_{\perp} h_0 + \Pi_{\cX,d} h_0 \right) \right\|_{L_2(X)} \\
&\leq \left\|(\Pi_{\cX,d})_{\perp} h_0 \right\|_{L_2(X)} + \left\|(\Pi_{\hat \varphi,d})_{\perp} \Pi_{\cX,d} h_0 \right\|_{L_2(X)} \\
&\leq \left\|(\Pi_{\cX,d})_{\perp} h_0 \right\|_{L_2(X)} + \normop{(\Pi_{\hat \varphi,d})_{\perp} \Pi_{\cX,d}} \cdot \| h_0 \|_{L_2(X)}.
\end{aligned}
\]

By Wedin’s sin–\(\Theta\) theorem (Theorem 2.9 and Eq. (2.26a) in \cite{chen2021spectral}), and under the assumption \( \hat{\varepsilon}_d < (1 - 1/\sqrt{2}) \sigma_d \), we have:
\[
\normop{(\Pi_{\hat \varphi,d})_{\perp} \Pi_{\cX,d}} \leq \frac{\sqrt{2}\hat \varepsilon_d}{\sigma_d}.
\]
Therefore,
\[
\left\|(\Pi_{\hat \varphi,d})_{\perp} h_0\right\|_{L_2(X)} \leq \left\|(\Pi_{\cX,d})_{\perp} h_0 \right\|_{L_2(X)} + \frac{\sqrt{2}\hat \varepsilon_d\|h_0\|_{L_2(X)}}{\sigma_d},
\]
To conclude we now check \Cref{asst:assumption_L_2_consistency}. Let us introduce 
$$
s_{\hat \psi, \hat \varphi,d}^{-1} =  \sup_{h \in \cH_{\hat \varphi,d},~h \ne 0 } \frac{\|h\|_{L_2(X)}}{\| \Pi_{\hat \psi,d} \cT h\|_{L_2(Z)}} \geq \tau_{\hat \varphi,d}.
$$
The unique use of \Cref{asst:assumption_L_2_consistency}-i) in the proof of Theorem B.1 \cite{chen2018optimal} is to prove that $s_{\hat \psi, \hat \varphi,d}^{-1} = O(\tau_{\hat \varphi,d})$ as $d \to +\infty$, which we now directly prove. First, note that for any $h \in L_2(X),$
\[
\begin{aligned}
\left\|\cT \Pi_{\hat \varphi,d} h\right\|_{L_2(Z)} 
&= \left\|\cT \left( \Pi_{\hat \varphi,d} + \Pi_{\cX,d} - \Pi_{\cX,d}  \right) h\right\|_{L_2(Z)} \\
&\leq \left\|\cT_d h \right\|_{L_2(Z)} + \normop{\Pi_{\hat \varphi,d} -  \Pi_{\cX,d}}\left\| h \right\|_{L_2(X)} \\
&\leq \left\|\cT_d h \right\|_{L_2(Z)} + \frac{\sqrt{2}\hat \varepsilon_d\|h\|_{L_2(X)}}{\sigma_d},
\end{aligned}
\]
where we used $\normop{\cT} \leq 1$, the definition of $\cT_d$ and Wedin’s sin–\(\Theta\) theorem. On the other hand, 
\[
\begin{aligned}
\left\|\Pi_{\hat \psi,d}\cT \Pi_{\hat \varphi,d} h\right\|_{L_2(Z)} 
&\geq  \left\|\Pi_{\hat \psi,d}\cT \Pi_{\varphi,d} h\right\|_{L_2(Z)} - \left\|\Pi_{\hat \psi,d}\cT \left( \Pi_{\hat \varphi,d}  - \Pi_{\cX,d}  \right) h\right\|_{L_2(Z)} \\
&\geq \left\|\Pi_{\psi,d}\cT \Pi_{\varphi,d} h\right\|_{L_2(Z)} - \left\|(\Pi_{\psi,d} - \Pi_{\hat \psi,d} )\cT \Pi_{\varphi,d} h\right\|_{L_2(Z)} - \frac{\sqrt{2}\hat \varepsilon_d\|h\|_{L_2(X)}}{\sigma_d} \\
&\geq \left\|\cT_d h \right\|_{L_2(Z)} - \frac{2\sqrt{2}\hat \varepsilon_d\|h\|_{L_2(X)}}{\sigma_d}.
\end{aligned}
\]
We therefore obtain that,
$$
s_{\hat \psi, \hat \varphi,d}^{-1} \leq \sup_{h \in \cH_{\hat \varphi,d},~h \ne 0 } \frac{1}{\frac{\left\|\cT \Pi_{\hat \varphi,d} h\right\|_{L_2(Z)}}{\|h\|_{L_2(X)}} - \frac{3\sqrt{2}\hat \varepsilon_d}{\sigma_d}} \leq \tau_{\hat \varphi,d} \times \frac{1}{1 - \frac{3\sqrt{2}\hat \varepsilon_d \tau_{\hat \varphi,d}}{\sigma_d}},
$$
where the inequality is valid for $d$ large enough as long as $3\sqrt{2}\hat \varepsilon_d \tau_{\hat \varphi,d}\sigma_d^{-1} = o(1)$. Re-using that $\tau_{\hat \varphi,d} \leq (\sigma_d - 2\hat \varepsilon_d)^{-1}$, a sufficient condition is 
$$
3\sqrt{2} \frac{\hat \varepsilon_d}{\sigma_d}\frac{1}{\sigma_d - 2\hat \varepsilon_d} = o(1),
$$
which is satisfied if $\hat \varepsilon_d = o(\sigma_d^2)$. We therefore conclude that if $\hat \varepsilon_d = o(\sigma_d^2)$ then $s_{\hat \psi, \hat \varphi,d}^{-1} = O(\tau_{\hat \varphi,d})$.
Finally, we check \Cref{asst:assumption_L_2_consistency}-ii). First note that by definition of $\Pi_{\hat \varphi,d}$ and $\hat \cT_d$, we have $ \hat \cT_d(\Pi_{\hat \varphi,d})_{\perp} = 0$, therefore,
$$
\begin{aligned}
    \tau_{\hat \varphi,d}\left\|\cT(\Pi_{\hat \varphi,d})_{\perp} h_0\right\| &= \tau_{\hat \varphi,d}\left\|(\cT- \hat \cT_d)(\Pi_{\hat \varphi,d})_{\perp}h_0\right\| \\ &\leq  \tau_{\hat \varphi,d} \left(\left\|(\cT- \cT_d)(\Pi_{\hat \varphi,d})_{\perp}h_0\right\| + \left\|(\cT_d - \hat \cT_d)(\Pi_{\hat \varphi,d})_{\perp}h_0\right\| \right) \\ &\leq \tau_{\hat \varphi,d}(\sigma_{d+1} + \hat \varepsilon_d) \left\|(\Pi_{\hat \varphi,d})_{\perp}h_0\right\| \\ &\leq \frac{\sigma_{d+1} + \hat \varepsilon_d}{\sigma_d - 2\hat \varepsilon_d} \left\|(\Pi_{\hat \varphi,d})_{\perp}h_0\right\|,
\end{aligned}
$$
We conclude using that $\hat \varepsilon_d = o(\sigma_d^2)$ implies $\hat \varepsilon_d = o(\sigma_d)$ which implies $(\sigma_{d+1} + \hat \varepsilon_d)(\sigma_d - 2\hat \varepsilon_d) = O(1)$.
\end{proof}

We now prove the equivalence between \Cref{eq:pop_loss_HS} and \Cref{eq:pop_loss_contrastive}. While the proof strategy follows that of \citet[][Theorem 1]{kostic2024ncp}, we include our own version for completeness, as the parametrization of the learned operator differs. Specifically, \cite{kostic2024ncp} directly approximate the truncated operator \(\cT_d\) using a singular value decomposition of the form \(\mathbbm{1}_{\cZ} \otimes \mathbbm{1}_{\cX} + \sum_{i=2}^d \hat{\sigma}_i \hat{\psi}_i \otimes \hat{\varphi}_i\), where the singular values \(\hat{\sigma}_i\) are learned explicitly via a separate network. In contrast, our approach learns only the feature maps \((\varphi_i, \psi_i)\) and defines the approximation \(\cT_d(\hat \varphi, \hat \psi) = \sum_{i=1}^d \hat \psi_i \otimes \hat \varphi_i\). As such, we provide a short self-contained derivation adapted to our setting for clarity. Recall that the Eckart--Young--Mirsky formulation of the objective is defined as 
\begin{equation*}
    \mathcal{L}_d(\varphi, \psi) = \normHS{\cT_d(\varphi,\psi) - \cT}^2 = \normHS{\cT_d(\varphi,\psi)}^2 - 2 \langle \cT_d(\varphi,\psi), \cT \rangle_{\scriptscriptstyle \mathrm{HS}} + \normHS{\cT_d(\varphi,\psi)}^2\,.
\end{equation*}

\begin{prop}It holds that,
    $$\normHS{\cT_d(\varphi,\psi)}^2 - 2 \langle \cT_d(\varphi,\psi), \cT \rangle_{\scriptscriptstyle \mathrm{HS}} = \mathbb{E}_{X} \mathbb{E}_{Z} \left[\left(\varphi(X)\transpose\psi(Z)\right)^2\right] - 2\mathbb{E}_{X,Z} \left[\varphi(X)\transpose\psi(Z)\right].$$
\end{prop}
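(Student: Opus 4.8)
The plan is to expand both sides over the feature indices $i,j \in \{1,\dots,d\}$ using the bilinearity of the Hilbert--Schmidt inner product together with the algebra of the rank-one operators $\psi_i \otimes \varphi_i$ that make up $\cT_d(\varphi,\psi) = \sum_{i=1}^d \psi_i \otimes \varphi_i$, and then match terms. First I would record the two elementary identities that drive everything: for $f_1,f_2 \in L_2(X)$ and $g_1,g_2 \in L_2(Z)$ one has $\langle g_1 \otimes f_1, g_2 \otimes f_2\rangle_{\scriptscriptstyle \mathrm{HS}} = \langle g_1,g_2\rangle_{L_2(Z)}\langle f_1,f_2\rangle_{L_2(X)}$, and for any Hilbert--Schmidt operator $B\colon L_2(X)\to L_2(Z)$, $\langle g\otimes f, B\rangle_{\scriptscriptstyle \mathrm{HS}} = \langle g, Bf\rangle_{L_2(Z)}$ (the latter obtained by evaluating the HS inner product against an orthonormal basis of $L_2(X)$).

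For the quadratic term, applying the first identity gives
\[
\normHS{\cT_d(\varphi,\psi)}^2 = \sum_{i,j=1}^d \langle \psi_i,\psi_j\rangle_{L_2(Z)}\,\langle \varphi_i,\varphi_j\rangle_{L_2(X)} = \sum_{i,j=1}^d \mathbb{E}_Z[\psi_i(Z)\psi_j(Z)]\,\mathbb{E}_X[\varphi_i(X)\varphi_j(X)].
\]
I would then expand the right-hand side of the claim by writing $(\varphi(X)\transpose\psi(Z))^2 = \sum_{i,j}\varphi_i(X)\varphi_j(X)\psi_i(Z)\psi_j(Z)$ and using that $\mathbb{E}_X\mathbb{E}_Z$ is integration against the product measure $\pi_X\otimes\pi_Z$, so the expectation factorizes and reproduces exactly the same double sum; this settles the first terms on both sides.

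For the cross term I would invoke the second identity with $B=\cT$, which yields $\langle \cT_d(\varphi,\psi),\cT\rangle_{\scriptscriptstyle \mathrm{HS}} = \sum_{i=1}^d \langle \psi_i, \cT\varphi_i\rangle_{L_2(Z)} = \sum_{i=1}^d \mathbb{E}_Z\!\big[\psi_i(Z)\,\mathbb{E}[\varphi_i(X)\mid Z]\big]$, using the definition $\cT\varphi_i(Z)=\mathbb{E}[\varphi_i(X)\mid Z]$. The decisive step is the tower property, $\mathbb{E}_Z[\psi_i(Z)\,\mathbb{E}[\varphi_i(X)\mid Z]] = \mathbb{E}_{X,Z}[\varphi_i(X)\psi_i(Z)]$, which collapses the nested conditional expectation into a joint one, giving $\langle \cT_d(\varphi,\psi),\cT\rangle_{\scriptscriptstyle \mathrm{HS}} = \mathbb{E}_{X,Z}[\varphi(X)\transpose\psi(Z)]$, so $-2\langle \cT_d(\varphi,\psi),\cT\rangle_{\scriptscriptstyle \mathrm{HS}}$ matches the claimed term. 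The computation is otherwise routine, and the only real subtlety — which is precisely what makes the contrastive loss estimable from samples — is the bookkeeping distinction between $\mathbb{E}_X\mathbb{E}_Z$ (the product measure $\pi_X\otimes\pi_Z$, making the quadratic double sum factorize) and $\mathbb{E}_{X,Z}$ (the joint law $\pi_{XZ}$, entering through $\cT$ and the tower property in the cross term); keeping these two measures cleanly separate is the main thing to get right.
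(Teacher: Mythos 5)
Your proof is correct and takes essentially the same route as the paper's: the paper's factorization $\cT_d(\varphi,\psi) = \Psi\Phi^*$ and trace identities $\normHS{\Psi\Phi^*}^2 = \operatorname{Tr}(\Psi^*\Psi\,\Phi^*\Phi)$ and $\langle \cT_d(\varphi,\psi), \cT \rangle_{\scriptscriptstyle \mathrm{HS}} = \operatorname{Tr}(\Psi^*\cT\Phi)$ are exactly your double sum $\sum_{i,j}\E_Z[\psi_i(Z)\psi_j(Z)]\,\E_X[\varphi_i(X)\varphi_j(X)]$ and your sum $\sum_{i}\langle \psi_i, \cT\varphi_i\rangle_{L_2(Z)}$ written in matrix-trace notation. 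The substantive steps coincide as well: factorization of $\E_X\E_Z$ over the product measure handles the quadratic term, and the tower property collapses $\E_Z\big[\psi_i(Z)\,\E[\varphi_i(X)\mid Z]\big]$ into $\E_{X,Z}[\varphi_i(X)\psi_i(Z)]$ for the cross term.
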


\begin{proof} Let us introduce 
$$
\Phi: \mathbb{R}^d \to L_2(X), \quad c \mapsto \sum_{i=1}^d c_i \varphi_i, \qquad \Psi: \mathbb{R}^d \to L_2(Z), \quad c \mapsto \sum_{i=1}^d c_i \psi_i,
$$
such that $\cT_d(\varphi,\psi) = \Psi \Phi^*$, $\Psi^* \Psi = \E[\psi(Z)\psi(Z)\transpose]$ and $\Phi^* \Phi = \E[\varphi(X)\varphi(X)\transpose]$. On one hand, we have:
$$
\begin{aligned}
\normHS{\cT_d(\varphi,\psi)}^2 &= \normHS{\Psi\Phi^*}^2 \\ &= \operatorname{Tr}\left(\Psi^*\Psi\Phi^*\Phi \right) \\ &= \operatorname{Tr}\left(\E[\psi(Z)\psi(Z)\transpose]\E[\varphi(X)\varphi(X)\transpose] \right) \\ &= \E_X \E_Z \left[\operatorname{Tr}\left(\psi(Z)\psi(Z)\transpose\varphi(X)\varphi(X)\transpose\right) \right] \\ &=\E_X \E_Z \left[\left(\psi(Z)\transpose\varphi(X)\right)^2 \right]. 
\end{aligned}
$$
On the other hand, we have:
$$
\begin{aligned}
\langle \cT_d(\varphi,\psi), \cT \rangle_{\scriptscriptstyle \mathrm{HS}} &= \operatorname{Tr}\left(\Psi^*\cT \Phi \right) \\ &= \operatorname{Tr}\left(\E\left[\psi(Z)\E\left[\varphi(X)\mid Z\right]\transpose\right] \right) \\ &= \E_{XZ}\left[\operatorname{Tr}\left(\psi(Z)\varphi(X)\transpose\right) \right] \\ &=\E_{XZ} \left[\psi(Z)\transpose\varphi(X) \right],
\end{aligned}
$$
which conclude the proof.
\end{proof}
As noted in \cite{kostic2024ncp}, the above result does not require $\cT$ to be Hilbert-Schmidt, or even compact. Indeed as $\cT_d(\varphi,\psi)$ is a finite rank operator, $\langle \cT_d(\varphi,\psi), \cT \rangle_{\scriptscriptstyle \mathrm{HS}}$ is always well defined.

\newpage

\section{Extended Related Work}
\label{sec:extended_related_work}

We now discuss the various ideas that have been proposed to solve NPIV problems. Extensive benchmarking for these methods has already been conducted in prior work; refer, for instance, to \cite{sun25speciv}.

As mentioned in Section~\ref{sec:intro}, a central challenge in NPIV estimation is that it constitutes an ill-posed inverse problem. The literature has evolved along several methodological lines to address it. Two broad classes of estimation strategies have become particularly prominent:

\begin{itemize}[leftmargin=20pt]
    \item \textbf{Two-Stage Least-Squares (2SLS).} This classical approach and its various generalizations involves a sequential estimation procedure. Typically, the first stage estimates the conditional expectation of the endogenous variable (or its features) given the instrument, and the second stage uses these predictions to estimate the structural function.

    \item \textbf{Saddle-point optimization.} These methods, often rooted in a generalized method of moments (GMM) framework or duality principles, reformulate the NPIV estimation as finding an equilibrium in a min-max game. This often involves optimizing an objective function over a hypothesis space for the structural function and a test function space for moment conditions.
\end{itemize}

These two methodologies developed with distinct focuses. The 2SLS methods, with roots in classical econometrics \citep{newey2003instrumental, darolles2011nonparametric}, have progressively incorporated more sophisticated nonparametric techniques. Saddle-point methods, on the other hand, have gained traction with the rise of machine learning, offering powerful tools for handling complex optimization problems arising from conditional moment restrictions \citep{dikkala2020minimax,liao2020provably,bennett2023minimax}.

\paragraph{2SLS approaches.} Early and influential nonparametric extensions of 2SLS employed sieve or series estimators. These methods approximate the unknown functions using a finite linear combination of basis functions, such as polynomials, splines, or Fourier series. The number of basis functions, or the ``sieve dimension'', is allowed to increase with the sample size, enabling consistent estimation of the nonparametric functions.

A seminal contribution, by \cite{newey2003instrumental}, provided identification results and a consistent nonparametric estimator for conditional moment restrictions. Their proposed NPIV estimator is an analog of 2SLS, where the first stage involves nonparametric estimation of conditional means of basis functions of $X$ given $Z$, and the second stage uses a series approximation for the structural function based on these first-stage predictions. Regularization is achieved by controlling the number of terms in the series approximation.

Building on similar principles, \cite{hall2005nonparametric} proposed nonparametric methods based on both kernel\footnote{"Kernel" is meant here in the sense of density estimation, and not reproducing kernel: see \citep[][eq. (2.4)]{hall2005nonparametric}.} techniques and orthogonal series for estimating regression functions with instrumental variables. For their orthogonal series estimator, they transformed the marginal distributions of $X$ and $Z$ to be uniform and used Fourier expansions. The estimated Fourier coefficients of the structural function were obtained via a regularized regression involving estimated coefficients from the first stage. They were among the first to derive optimal convergence rates for this class of problems, explicitly linking these rates to the ``difficulty'' of the ill-posed inverse problem, which is characterized by the eigenvalues of the underlying integral operator.

\cite{darolles2011nonparametric} proposed an estimation procedure based on Tikhonov regularization. This involves regularizing the inverse of the integral operator $\cT$ (or its empirical counterpart) to stabilize the solution. Specifically, denoting $\hat{\cT}$ and $\hat{r}_0$ empirical estimates (in their case, computed with kernel density estimators), the Tikhonov regularized solution is of the form $h_\alpha = \spr{\hat{\cT}^\star \hat{\cT} + \alpha I}^{-1} \hat{\cT}^\star \hat{r}_0$, where $\alpha > 0$. They presented asymptotic properties of their estimator, including consistency and convergence rates, which depend on the smoothness of the structural function and the degree of ill-posedness of $\cT$.

\cite{singh2019kernel} introduced Kernel Instrumental Variable Regression (KIV), a direct nonparametric generalization of 2SLS. KIV models the relationships between the different variables as nonlinear functions in reproducing kernel Hilbert spaces (RKHSs). In stage 1, it learns a conditional mean embedding $\mu(z) = \bbE \sbr{\varphi(X) \given Z = z}$ where $\varphi(X)$ represents features of $X$ mapped into an RKHS $\cH_X$. This learning is framed as a vector-valued kernel ridge regression, effectively estimating the conditional expectation operator $E: \cH_X \to \cH_Z$. In stage 2, a scalar-valued kernel ridge regression of the outcome $Y$ on the estimated means $\hat{\mu}(Z)$ is performed to estimate the structural function. The authors prove the consistency of KIV in the projected norm under mild conditions and derive conditions under which KIV achieves minimax optimal convergence rates. The analysis was later improved by \cite{meunier2024nonparametric}, who established convergence in $L_2$ norm rather than the projected norm.

Another related approach, ``fast IV'' (FIV), was proposed in \cite{wang2022fast}. FIV studies nonlinear IV with high-dimensional instruments and proposes a two-stage pipeline that keeps the outcome model in a fixed RKHS/GP space but learns the first-stage kernel from a black-box adaptive regressor (\eg, a neural network) distilled into a compact kernel basis for estimating the conditional expectation operator. The resulting estimator plugs this learned kernel into standard kernelized IV schemes, achieving rates that adapt to the dimensionality of informative instrument features. Their work focuses on learning a kernel basis for the instrument. In contrast, our method learns paired spectral features for both the instrument and the treatment to approximate the top singular subspaces of the conditional expectation operator.

\paragraph{Saddle-point approaches.} These arise from reformulating the conditional moment restrictions that define the NPIV problem (Eq.~\ref{eq:npiv}), frequently conceptualized as a zero-sum game. One player selects a function $h$ from a hypothesis space $\cH$ to minimize the objective. The other player, the ``adversary'' or ``witness'', selects a test function $g$ from a test function space $\cG$ to maximize the objective. The adversary's role is to select the function that maximizes the violation of the moment condition by the current choice of $h$. A notable advantage of this formulation is that it can allow bypassing the direct estimation of conditional expectations. Saddle-point objectives in NPIV regression can be derived in different ways.

One can frame the NPIV problem as finding a solution $h_0$ that minimizes a certain criterion (\eg, a norm) subject to satisfying the moment condition $\cT h_0 = r_0$. The Lagrangian of this constrained problem then leads to a minimax objective. \cite{bennett2023minimax} target the least-norm solution
\begin{equation*}
    h_0 = \argmin_{h \in L_2(X)} \frac12 \norm{h}_{L_2(X)}^2 \quad\text{subject to}\quad \cT h = r_0\,.
\end{equation*}
The corresponding Lagrangian is given by $L \spr{f, g} = \frac12 \norm{h}_{L_2(X)}^2 + \inp{r_0 - \cT h, g}_{L_2(Z)}$ for $g \in L_2(Z)$. Their method achieves strong $L_2$ error rates under a source condition and realizability assumptions. Notably, their approach does not require the often-problematic closedness condition or uniqueness of the IV solution. \cite{liao2020provably}  consider a similar approach, focusing on the setting where the function classes are formed by neural networks, but use different techniques to analyze their method (online learning and neural tangent kernel theory).

Another approach to deriving a saddle-point problem involves using Fenchel duality to transform a squared error loss involving conditional expectations. This is the approach taken by \cite{sun25speciv}, which we study in this paper.

A third method is to directly use the unconditional moment formulation $\bbE \sbr{\spr{Y - h(X)} g(Z)} = 0$. \cite{dikkala2020minimax} define their criterion as the maximum moment deviation,
\begin{equation*}
    h_0 = \arginf_{h \in \cH} \sup_{g \in \cG} \bbE \sbr{\spr{Y - h(X)} g(Z)} = \Psi \spr{h, g}\,,
\end{equation*}
and define the estimator $\hat{h} = \argmin_h \sup_g \bbE_n \sbr{\spr{Y - h(X)} g(Z)} + \mu R_1(h) - \lambda R_2(g)$, where $R_1$, $R_2$ are regularizers. Their key theoretical result is that the statistical estimation rate, in terms of projected mean squared error $\|\cT (\hat{h} - h_0)\|_2$, scales with the critical radii of the hypothesis space $\cH$ and the test function space $\cG$. This holds under some closedness assumption, namely that $\bbE \sbr{h(X) - h'(X) \given Z} \in \cG$ for any $h, h' \in \cH$. We note that the method introduced by \cite{zhang2023instrumental} defines a risk functional in terms of the squared moment deviation and is thus related to GMM.

The role and interpretation of the ``adversary'' function $g \in \cG$ and its associated objective function vary subtly across different saddle-point formulations, which in turn influences the types of assumptions required. This distinction is important: if $g$ is intended to approximate $\bbE \sbr{Y - h(X) \given Z}$, then the space $\cG$ must be sufficiently rich to do so, which is reflected via a closedness condition. If $g$ is primarily a Lagrange multiplier, its existence and properties are more directly tied to conditions like a source condition. This difference helps explain why \cite{bennett2023minimax} can dispense with the closedness assumption while other methods may require it.


\paragraph{About the contrastive loss.} The contrastive loss used to learn spectral features derives its name from the foundational concept of contrastive learning, which traces back to the early 1990s. Notably, \cite{bromley1993signature} introduced the Siamese network architecture for signature verification, which utilized a form similar to Eq.\eqref{eq:emp_loss_contrastive} to measure the distance between input pairs. Later, \cite{chopra2005learning} formalized the contrastive loss function for face verification tasks. This loss is, in turn, linked to a truncated conditional operator (see Eq.~\eqref{eq:pop_loss_HS}). The underlying principle has been frequently redeveloped under various names, such as correspondence analysis \citep{greenacre1984theory}, principal inertia components \citep{hsu2019correspondence, hsu2021generalizing} for finite alphabets, the contrastive kernel \citep{haochen2021provable, deng2022neuralef}, and pointwise dependence \citep{tsai2020neural} in self-supervised representation learning. The objective in Eq.~\eqref{eq:emp_loss_contrastive} was also proposed and studied by \cite{wang2022spectral} and used as a local approximation to the log-loss of classification deep neural networks \cite{xu2022information}. More recently, \cite{kostic2024ncp} linked this same objective to the SVD of the conditional expectation operator.
\section{Experimental Details}
In this section, we provide additional details for the experiments presented in the main text. Let \((Z,X)\) be generated as described in \Cref{sec:experiments} with $n=100000$. \Cref{fig:pdf-true-ncp} displays both the true data-generating density and the density corresponding to a set of learned spectral features. Even for \(d = 11\), the resulting densities are complex. While it is feasible to conduct analogous experiments with higher values of \(d\), we observed no qualitative changes in the outcomes. However, training and hyperparameter tuning became increasingly challenging as \(d\) grew.

\label{sec:experiment_details}
\
\begin{figure}[!htb]
    \centering
    \includegraphics[width=1\linewidth]{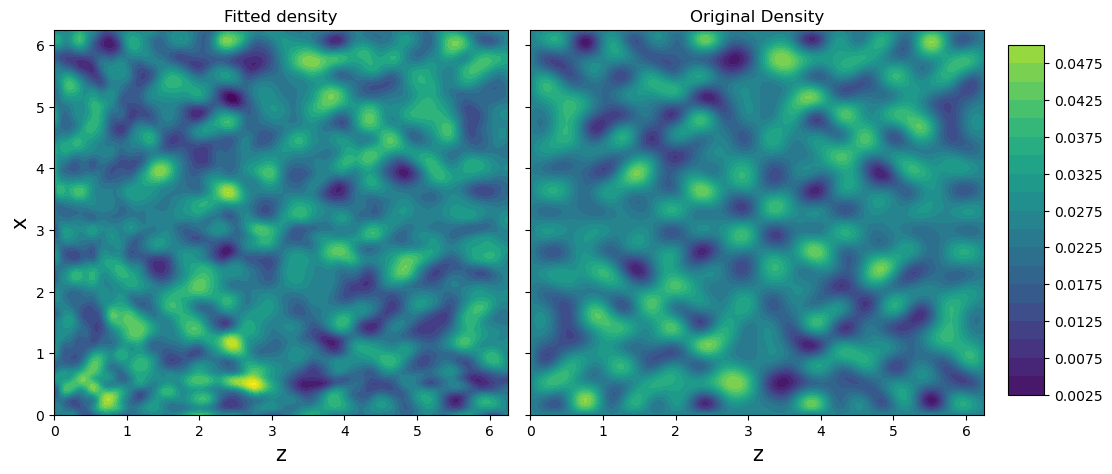}
    \caption{Comparison of a density corresponding to a set of learned spectral features \textbf{(left)} and the true data generating density \textbf{(right)}.}
    \label{fig:pdf-true-ncp}
\end{figure}

\subsection{Models Employed}

The features were learned using two-hidden-layer neural networks. All models shared the same architecture, with layer widths \([1, 50, 50, 50]\): the input is one-dimensional, and the final layer outputs \(50\) learned features. To encourage the models to learn more oscillatory functions, the first layer used the activation \(x \mapsto x + \sin^2(x)\), as introduced by \cite{ziyin2020neural}, followed by GELU-activated hidden layers and a final linear layer.

We note that the first singular eigenfunctions of the conditional expectation operator \(\cT\) are always the constant functions (see \Cref{sec:background}). Therefore, we hard-code the constant feature \(\mathbbm{1}_{\cZ} \otimes \mathbbm{1}_{\cX}\) into the model and restrict the following learned features to be mean-zero.

In addition, we included a regularization term to penalize both feature collinearity and large feature norms. This regularizer is a sample-based approximation of the quantity defined in Equation (10) of \cite{kostic2024ncp}:
\[
\mathbb{E}\left[\|\varphi(X)\varphi(X)\transpose - I\|^2\right] + \mathbb{E}\left[\|\psi(Z)\psi(Z)\transpose - I\|^2\right] + 2\mathbb{E}\left[\|\varphi(X)\|^2\right] + 2\mathbb{E}\left[\|\psi(Z)\|^2\right].
\]
The spectral features were trained on \(100{,}000\) samples of \((Z, X)\), while the 2SLS estimator built from the learned features used a separate dataset of \(10{,}000\) samples of \((Z, X, Y)\).

\subsection{Expanded example of the ugly scenario}
As noted in the main text, the difficulty of recovering \( h_0 \) increases when it is not well supported on the singular functions of the conditional expectation operator \( \mathcal{T} \). If the projection of \( h_0 \) onto the span of the learned singular functions is sufficiently large, the missing components may not significantly harm the quality of the estimate. We illustrate this property with a controlled experiment designed to vary the amount of support \( h_0 \) has in the singular space of \( \mathcal{T} \). 

We fix \( d - 1 \) orthonormal features \( u_i \) for \( Z \) and \( v_i \) for \( X \), and define the operator:
\[
\mathcal{T} = \mathbbm{1}_{\mathcal{Z}} \otimes \mathbbm{1}_{\mathcal{X}} + \sum_{i=1}^{d-1} \sigma_i u_i \otimes v_i,
\]
mirroring the setup from \Cref{sec:experiments}. We vary the spectrum by setting \( \sigma_{1:k} = c \) and \( \sigma_{k+1:d-1} = 0 \) for \( k \in \{1, \ldots, d-1\} \). We then define the target function:
\[
h_0 = \frac{1}{\sqrt{d - 1}} \sum_{i=1}^{d - 1} v_i,
\]
which is uniformly spread across the feature directions. As \( k \) increases, the projection of \( h_0 \) onto the singular space of \( \mathcal{T} \) increases in discrete steps from 0 to 1. This results in a corresponding qualitative improvement in the accuracy of the 2SLS estimator for \( h_0 \). \Cref{fig:ugly_spectrum} illustrates this behavior: for small values of \( k \), where \( h_0 \) is largely orthogonal to the singular functions of \( \mathcal{T} \), the estimate fails to recover $h_0$. As \( k \) grows, and more of \( h_0 \)'s energy lies in the span of these singular functions, the 2SLS estimate increasingly aligns with the true function.

\begin{figure}[!htb]
    \centering
    \includegraphics[width=0.9\linewidth]{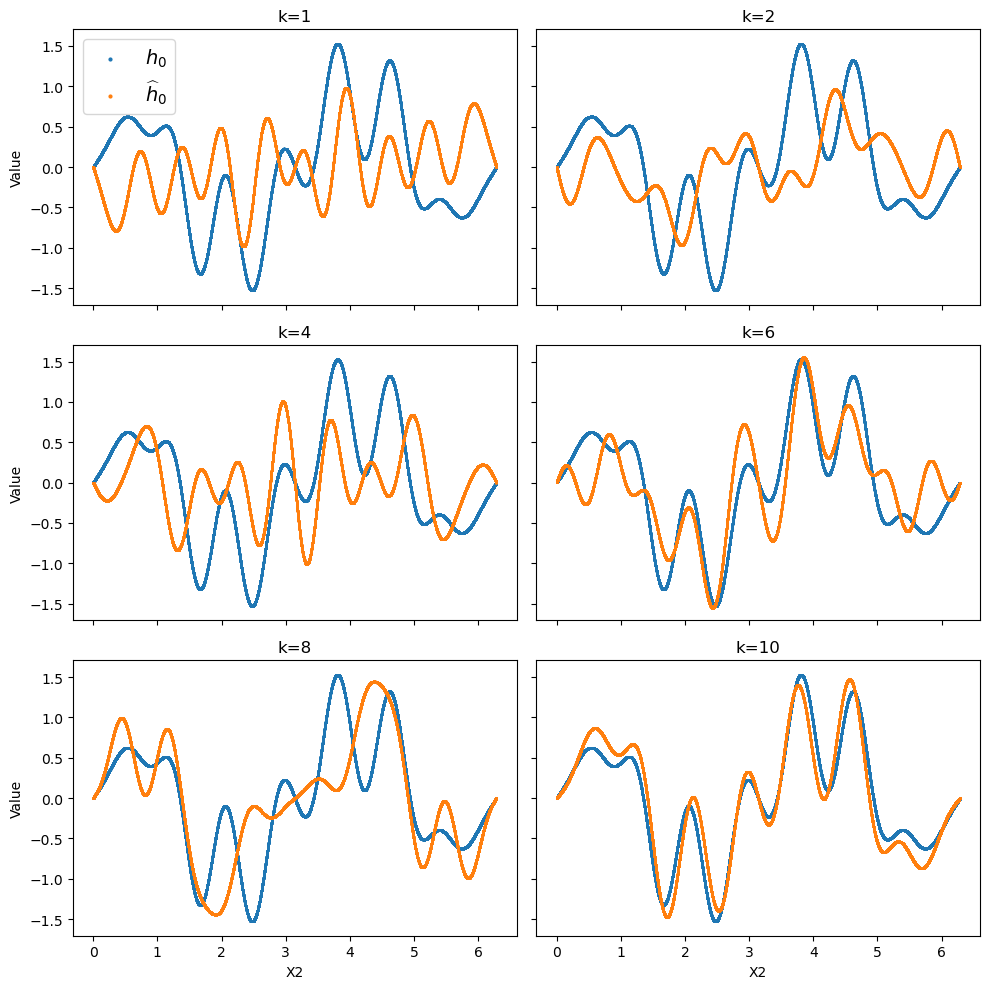}
    \caption{
    Qualitative improvement in the 2SLS estimate of \( h_0 \) as $k$ increases. When \( k = 1 \), \( h_0 \) is orthogonal to the singular functions of \( \mathcal{T} \); when \( k = d - 1 \), it is fully contained in their span.} 
    \label{fig:ugly_spectrum}
\end{figure}

\section{Estimating spectral alignment} \label{sec:estimation_spectral_appendix}

In this section, we present the details of the methodology introduced in \Cref{sec:estimation_spectral}. We recall the central formula provided in \Cref{eqn:alignment_adjoint}.
\begin{equation}
\langle v_i,h_0 \rangle_{L_2(X)} = \frac 1{\sigma_i}\mathbb E[Y\cdot u_i(Z)].
\end{equation}
To evaluate spectral alignment, we shall approximate the RHS of the above. Suppose we have learned an operator $\widehat{\cT}_d=\sum_{i=1}^d \hat \psi_i\otimes \hat \phi_i=\Psi\Phi^*$ which approximates the true conditional mean operator $\cT$. Here $\Psi\colon \R^d \to L^2(Z)$ and $\Phi\colon \R^d \to L^2(X)$ are the maps sending $\alpha\stackrel{\Psi}\mapsto\sum_{i=1}^d \alpha_i\hat\psi_i$ and $\alpha\stackrel{\Phi}\mapsto\sum_{i=1}^d \alpha_i\hat\phi_i$. One can compute the SVD of $\widehat{\cT}_d$ in two steps. We start with the derivations assuming access to population quantities. Let $C_Z$, and $C_X$ be the $d\times d$ population covariance matrices of the learned $Z$, and $X$ features, respectively. For any $z$ and $x$, we then introduce the whitened feature vectors $\psi_i'(z)= (C_Z^{-1/2}\hat \psi(z))_i$ and $\phi_i'(x)= (C_X^{-1/2}\hat \phi(x))_i$. The corresponding operators, $\Psi' = \Psi C_Z^{-1/2}$ and $\Phi' = \Phi C_X^{-1/2}$, are isometries. We can then write \[
\widehat{\cT}_d=\Psi' C_{Z}^{1/2}C_{X}^{1/2}(\Phi')^*.
\]
$C_Z^{1/2}C_X^{1/2}$ is a $d\times d$ matrix, let its SVD be $C_Z^{1/2}C_X^{1/2}=O\Sigma_{\phi,\psi} P^*$, where $\Sigma_{\phi,\psi}$ is a positive-definite diagonal matrix with diagonal entries $\sigma_{\phi,\psi,i}$. Now we can write \[
\widehat{\cT}_d=(\Psi'O) \Sigma_{\phi,\psi}(\Phi' P)^*.
\]
Finally letting $u_{\phi,\psi,i}=\Psi'Oe_i$, and $v_{\phi,\psi,i}=\Phi'Pe_i$, with the corresponding operators $U_{\phi,\psi}\colon \R^d \to L^2(Z)$, and $V_{\phi,\psi}\colon \R^d \to L^2(X)$, we get the SVD\[
\widehat{\cT}_d=U_{\phi,\psi}\Sigma_{\phi,\psi} V_{\phi,\psi}^*.
\]
In practice, given a $(Z,X)$ dataset, possibly the same one as was used to train $\widehat{\cT}_d$, one can perform sample-based counterparts to the above procedures to get an approximate SVD.

\paragraph{Rationale for the alignment approximation method.} We are interested in evaluating the length of the projection of $h_0$ onto the leading learned singular functions of $\widehat{\cT}_d$, that is of $\langle v_{\phi,\psi,i},h_0\rangle$. Following \Cref{eqn:alignment_adjoint}, we have 
\begin{align*}
    \langle v_{\phi,\psi,i},h_0\rangle &= \sigma_{\phi,\psi,i}^{-1}\langle \widehat{\cT}_d^*u_{\phi,\psi,i},h_0\rangle =\sigma_{\phi,\psi,i}^{-1}\langle u_{\phi,\psi,i},\widehat{\cT}_d h_0\rangle
    \\
    &=\sigma_{\phi,\psi,i}^{-1}\langle u_{\phi,\psi,i},\cT h_0\rangle + \sigma_{\phi,\psi,i}^{-1}\langle u_{\phi,\psi,i},(\widehat{\cT}_d-\cT) h_0\rangle.
\end{align*}
The first term on the RHS above can be equivalently written as $\sigma_{\phi,\psi,i}^{-1}\langle u_{\phi,\psi,i},\cT h_0\rangle=\sigma_{\phi,\psi,i}^{-1}\E[u_{\phi,\psi,i}(Z)Y]$, for which it is easy to compute a sample-based approximation. This approximation should be close to the true value of $\langle v_{\phi,\psi,i},h_0\rangle$ provided that the second RHS term is small. Letting $\cT_d$ be the rank-$d$ SVD truncation of $\cT$ we further telescope the second term on the RHS to get 
\begin{equation}
\label{eqn:alignment_error_decomp_term}
    \sigma_{\phi,\psi,i}^{-1}\left(\langle u_{\phi,\psi,i},(\widehat{\cT}_d-\cT_d)h_0\rangle+\langle u_{\phi,\psi,i},(\cT_d-\cT)h_0\rangle \right).
\end{equation}
Since $\widehat{\cT}_d$ should converge to $\cT_d$ for a sufficiently flexible feature-learning model class, it is sensible to bound the norm of the first term above with $\sigma_{\phi,\psi,i}^{-1}\|\widehat{\cT}_d-\cT_d\|\|h_0\|$. For any $i\le d$, this upper bound should converge to 0 with the size of the feature-learning sample size. To upper-bound the second term of \Cref{eqn:alignment_error_decomp_term}, note that by Wedin Sin-$\Theta$ Theorem, we have \[
\|\Pi_{u_{1:d,\phi,\psi}}-\Pi_{u_{1:d}}\|\le\frac{\|\cT_d-\widehat\cT_d\|}{\sigma_d}.
\]
Thus \[
\left|\langle u_{\phi,\psi,i},(\cT_d-\cT)h_0\rangle\right| \leq \|\Pi_{u_{\phi,\psi,1:d}}(\cT_d-\cT)h_0\|\le \underbrace{\|\Pi_{u_{1:d}}(\cT_d-\cT)h_0\|}_{= 0} + \frac{\|\cT_d-\widehat\cT_d\|}{\sigma_d}\cdot \|(\cT_d-\cT)h_0\|.
\]

By construction, $\mathrm{Span}(u_{1:d})$ is orthogonal to the image of $\cT_d-\cT$. Hence the first term on the RHS vanishes, leaving us with 
\begin{equation} \label{eqn:alignment_estimation_error_bound}
    \left|\langle v_{\phi,\psi,i},h_0\rangle - \sigma_{\phi,\psi,i}^{-1}\E[Yu_{\phi,\psi,i}]\right|\le\sigma_{\phi,\psi,i}^{-1}\|\widehat{\cT}_d-\cT_d\|\|h_0\|+\sigma_d^{-1}\sigma_{\phi,\psi,i}^{-1}\|\cT_d-\widehat{\cT}_d\|\|(\cT_d-\cT)h_0\|.
\end{equation}
The essential takeaway from the bound above is that as long as the feature-learning sample size can increase to ensure $\|\widehat{\cT}_d-\cT_d\|$ is small, approximating the alignment with $\sigma_{\phi,\psi,i}^{-1}\E[Yu_{\phi,\psi,i}]$ is consistent. Moreover, the bound becomes tighter when, \begin{enumerate}
    \item $\sigma_{\phi,\psi,i}$ is big (close to 1), which corresponds to a slow spectral decay or looking at the top singular function.
    \item $h_0$ lies in the top of the spectrum of $\cT$ so that $\|(\cT_d-\cT)h_0\|$ is small.
\end{enumerate}
We can additionally compare how close is $\sigma_{\phi,\psi,i}^{-1}\E[Yu_{\phi,\psi,i}]$ to the alignment with the true eigenfunction of $\cT$: $\langle v_{i},h_0\rangle$, so that we can evaluate if we are in the ``good'' or ``bad'' scenario. For $i \leq d$, denote $\alpha_i = \langle v_{\phi,\psi,i}, h_0 \rangle$ and $\hat{\alpha}_i = \sigma_{\phi,\psi,i}^{-1}\E[Yu_{\phi,\psi,i}(Z)]$.
The previous bound shows that
$$
|\alpha_i - \hat{\alpha}_i| \le \text{Err}_i, \qquad \text{Err}_i = \sigma_{\phi,\psi,i}^{-1}\|\widehat{\cT}_d-\cT_d\|\|h_0\|+\sigma_d^{-1}\sigma_{\phi,\psi,i}^{-1}\|\cT_d-\widehat{\cT}_d\|\|(\cT_d-\cT)h_0\|.
$$
Then
\begin{align*}
\left|\| \Pi_{v_{\phi,\psi,1:d}}h_0\|^2 - \sum_{i=1}^d \sigma_{\phi,\psi,i}^{-2}|\E[Yu_{\phi,\psi,i}(Z)]|^2\right| & = \left| \sum_{i=1}^d(\alpha_i^2 - \hat{\alpha}_i^2) \right| = \left| \sum_{i=1}^d (\alpha_i - \hat{\alpha}_i)(\alpha_i + \hat{\alpha}_i) \right| \\
&\le \sum_{i=1}^d |\alpha_i - \hat{\alpha}_i| \cdot |\alpha_i + \hat{\alpha}_i|
\end{align*}
Next,
$$
|\alpha_i + \hat{\alpha}_i| = |2\alpha_i - (\alpha_i - \hat{\alpha}_i)| \le 2|\alpha_i| + |\alpha_i - \hat{\alpha}_i| \le 2\|h_0\| + \text{Err}_i,
$$
and
\begin{align*}
\left|\| \Pi_{v_{\phi,\psi,1:d}}h_0\|^2 - \sum_{i=1}^d \sigma_{\phi,\psi,i}^{-2}|\E[Yu_{\phi,\psi,i}(Z)]|^2\right| & \leq \sum_{i=1}^d \text{Err}_i \cdot (2\|h_0\| + \text{Err}_i) \\
&= 2\|h_0\| \sum_{i=1}^d \text{Err}_i + \sum_{i=1}^d (\text{Err}_i)^2
\end{align*}
Finally, to assess how well we get the alignment to the eigenfunctions of $\cT$ observe that
$$
\left|\| \Pi_{v_{\phi,\psi,1:d}}h_0\| -\|  \Pi_{v_{1:d}}h_0\| \right| \leq \left\|\left(\Pi_{v_{\phi,\psi,1:d}} - \Pi_{v_{1:d}}\right)h_0\right\| \leq \frac{\|\cT_d-\widehat\cT_d\|}{\sigma_{d}} \|h_0\|.
$$

\paragraph{Practical considerations.}
In practice, we want to perform sample-based approximations of the procedure described above, and while the top singular functions of $\cT$ and the corresponding singular values can be reliably estimated, learning the bottom of the spectrum is more unreliable. Given that we divide $\E[Y u_{\phi,\psi,i}(Z)]$ by $\sigma_{i,\phi,\psi}$, for singular values close to 0, a small error in estimating them is inflated by the inversion. Hence, we resort to a heuristic that allows us to decide which features and singular functions are learned reliably.
Let $\widehat{\cT}_d=\widehat{U}_{\phi,\psi}\widehat{\Sigma}_{\phi,\psi}\widehat{V}_{\phi,\psi}$ be a finite-sample approximation of the SVD of $\widehat{\cT}_d^*$. That is, we perform the SVD computation procedures of the preceding paragraph but relying on sample feature covariance matrices $\widehat{C}_X$ and $\widehat{C}_Z$.
After fitting the SVD once, we recompute the covariance of $\widehat{u}_{\phi,\psi,i}$ and $\widehat{v}_{\phi,\psi,i}$ on resampled $(Z^{(b)},X^{(b)})$ data with $b=1,...,B$ and extract its diagonal terms, which we refer to as $\hat\sigma_{\phi,\psi,1:d}^{(b)}$. Letting the original singular value estimates be $\hat\sigma_{\phi,\psi,i}\doteq\hat\sigma_{\phi,\psi,i}^{(0)}$, we can use the sets $(\hat\sigma_{\phi,\psi,i}^{(b)})_{b=0}^B$ to evaluate the reliability of a given singular value estimate. If the variance of the estimates in a given collection of $\hat\sigma_{\phi,\psi,k}^{(b)}$ is large relative to their mean, we treat the value as unreliable and discard all the singular values $\sigma_{\phi,\psi,i\ge k}$ from further analysis.

For the remaining singular values, $\hat{\sigma}_{\phi,\psi,1},\cdots,\hat{\sigma}_{\phi,\psi,k}$, which we treat as ``reliably learned'' we utilise the quantiles of bootstrap samples to construct pseudo-confidence-sets $[\hat{\sigma}_{\phi,\psi,i}^{\min},\hat{\sigma}_{\phi,\psi,i}^{\max}]$ for $\sigma_{\phi,\psi,i}$. We use these as proxies for upper and lower bounds on $\sigma_{\phi,\psi,i}$. These provide us with matching pseudo-bounds on our estimates of $\langle v_{\phi,\psi,i},h_0\rangle$. That is, the ``confidence set'' for $\langle v_i,h_0\rangle$ is the interval \[
\left[(\hat{\sigma}_{\phi,\psi,i}^{\max})^{-1}\widehat{\E}[Y \hat u_{\phi,\psi,i}(Z)],(\hat{\sigma}_{\phi,\psi,i}^{\min})^{-1}\widehat{\E}[Y \hat u_{\phi,\psi,i}(Z)]\right],
\]
together with the central estimate $(\hat{\sigma}_{\phi,\psi,i}^{(0)})^{-1}\widehat{\E}[Y \hat u_{\phi,\psi,i}(Z)]$.

 \subsection{Spectral alignment in synthetic examples}
 Note that whenever the structural function is available, as is the case in our one-dimensional synthetic example and in the dSprites experiment, one can evaluate the correctness of the approximation proposed above by comparing it to $\langle h_0,v_{\phi,\psi,i}\rangle$. In Figure~\ref{fig:1d_alignment_estimates}, we observe that the method is reliable if the decay of singular values is sufficiently slow for feature-learning to pick up on the singular function pairs reliably. For $c_\sigma=0.8$ the approximations are very good while for $c_\sigma=0.2$ they become entirely unreliable. A more thorough theoretical analysis of these estimates is a topic warranting further investigation but these observations are generally in line with \Cref{eqn:alignment_estimation_error_bound}. In particular, small values of $\sigma_{\phi,\psi,i}$ inflate the approximation error.

\begin{figure}[htbp]
    \centering
    \begin{subfigure}[b]{0.45\textwidth}
        \centering
        \includegraphics[width=\textwidth]{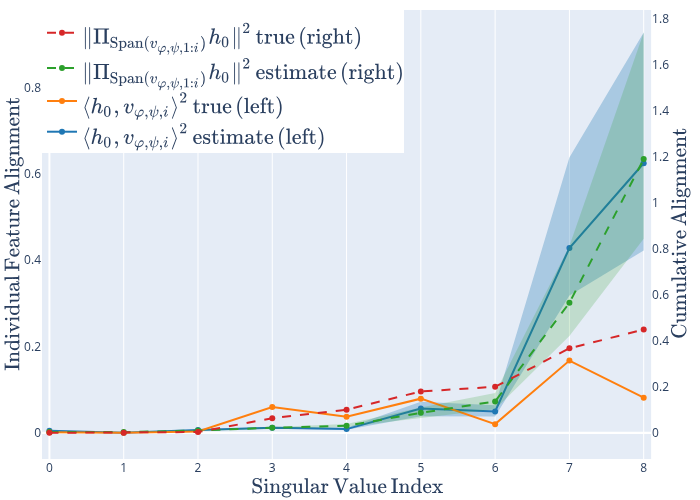}
        \caption{$c_\sigma=0.2,c_\alpha=0.2$}
    \end{subfigure}
    \hfill
    \begin{subfigure}[b]{0.45\textwidth}
        \centering
        \includegraphics[width=\textwidth]{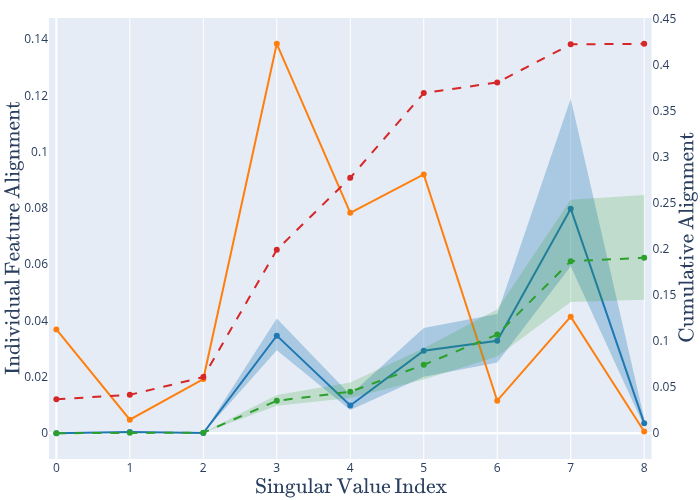}
        \caption{$c_\sigma=0.2,c_\alpha=5.0$}
    \end{subfigure}
    
    \vspace{1em}
    
    \begin{subfigure}[b]{0.45\textwidth}
        \centering
        \includegraphics[width=\textwidth]{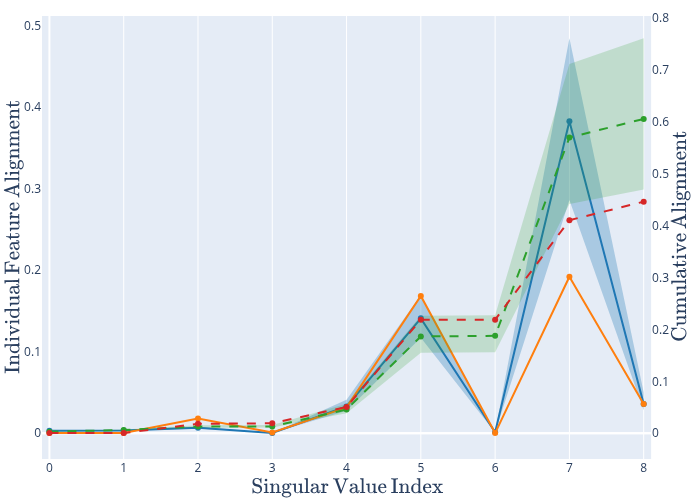}
        \caption{$c_\sigma=0.6,c_\alpha=0.2$}
    \end{subfigure}
    \hfill
    \begin{subfigure}[b]{0.45\textwidth}
        \centering
        \includegraphics[width=\textwidth]{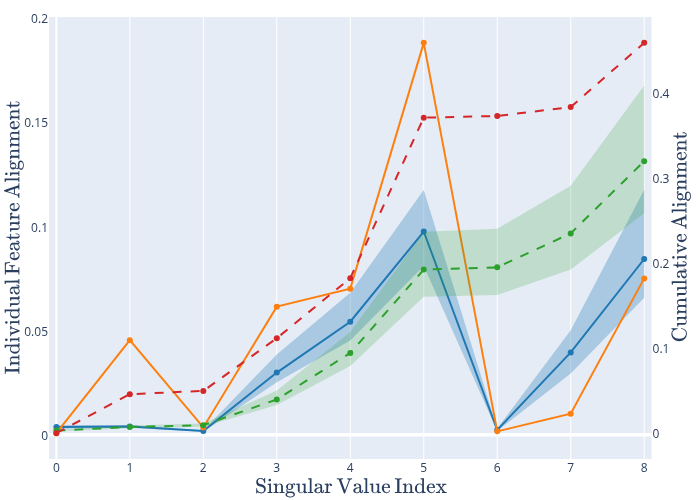}
        \caption{$c_\sigma=0.6,c_\alpha=5.0$}
    \end{subfigure}

    \vspace{1em}
    
    \begin{subfigure}[b]{0.45\textwidth}
        \centering
        \includegraphics[width=\textwidth]{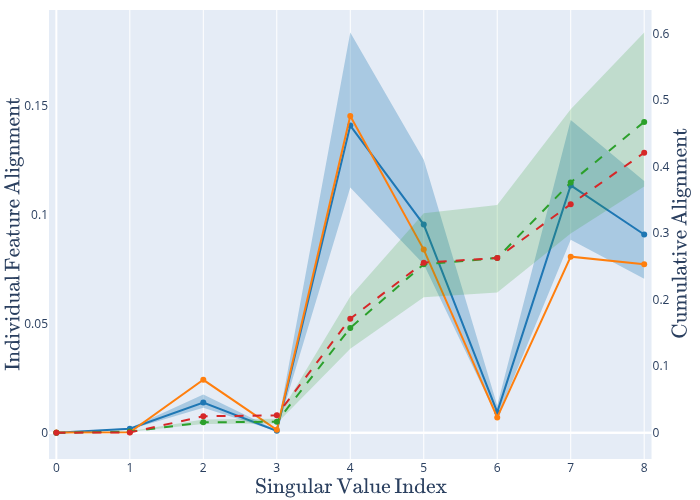}
        \caption{$c_\sigma=0.8,c_\alpha=0.2$}
    \end{subfigure}
    \hfill
    \begin{subfigure}[b]{0.45\textwidth}
        \centering
        \includegraphics[width=\textwidth]{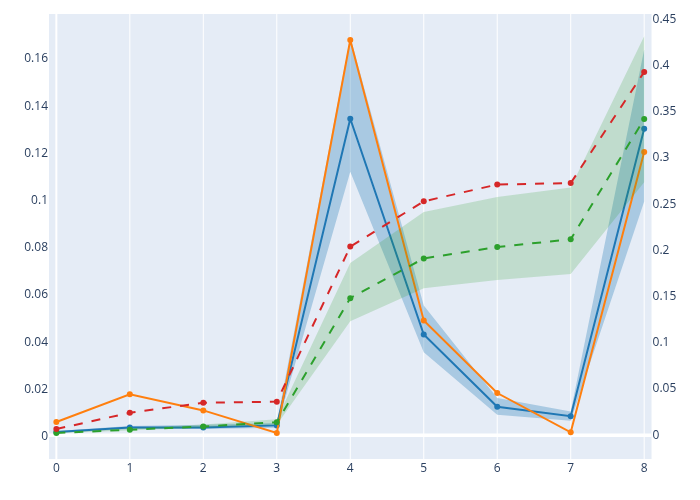}
        \caption{$c_\sigma=0.8,c_\alpha=5.0$}
    \end{subfigure}
    
    \caption{Evaluation of the reliability of spectral alignment estimation depending on the real spectral alignment measured in terms of $c_\alpha$ and the rate of singular value decay $c_\sigma$.}
    \label{fig:1d_alignment_estimates}
\end{figure}

 \subsection{dSprites is in the good regime}
 
We fitted the spectral-feature models for dSprites using the same architectures as proposed by \citep{sun25speciv}.
Analogously to the fully synthetic experiment, we are able to evaluate the spectrum of $\widehat\cT$ and directly measure where in it, $h_0$ is supported.  As seen in \Cref{fig:dsprite_eval}, the top of the learned spectrum is very flat, and hence, in line with what we observed in the previous example, the alignment estimates are reliable. Our experiments confirm that $h_0$ is usually spanned by the leading 32 spectral features and 85\% of its squared norm spanned by the leading 10 functions. Hence, dSprites is indeed an example of the good case.

\begin{figure}[htbp]
    \centering
    \begin{subfigure}[b]{0.45\textwidth}
        \centering
        \includegraphics[width=\textwidth]{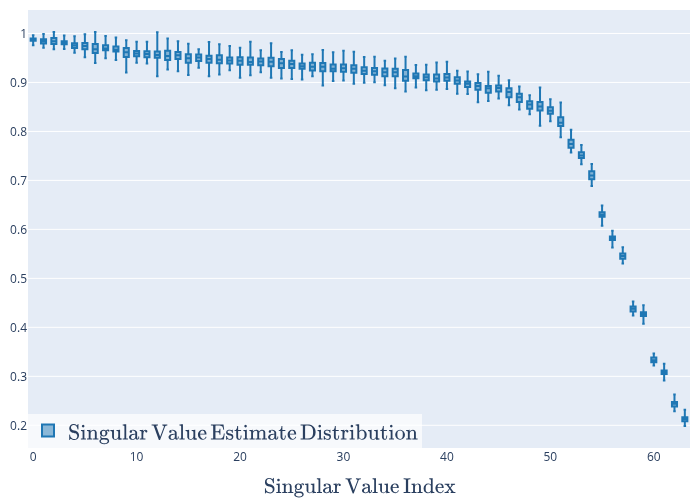}
        \caption{Distributions of resampled values of $\hat{\sigma}_{\phi,\psi,i}^{(b)}$ evaluated for a single $\widehat{\cT}_d$ model.}
        \label{fig:dsprite_svs}
    \end{subfigure}
    \hfill
   \begin{subfigure}[b]{0.45\textwidth}
        \centering
        \includegraphics[width=\textwidth]{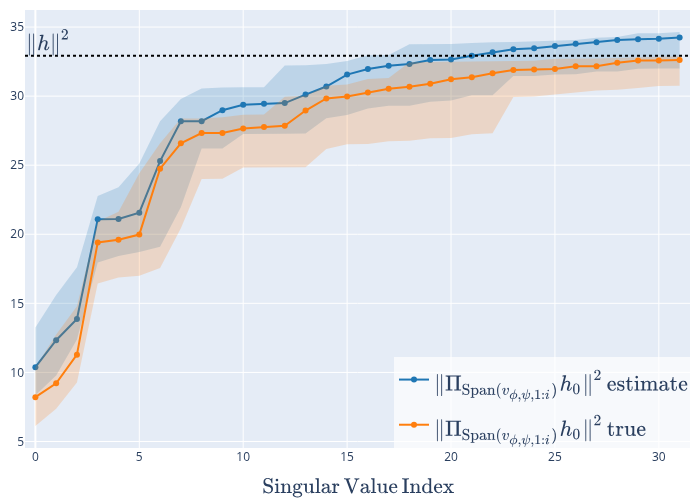}
        \caption{Estimates and true values of the projection length of $h_0$ onto the leading $i$ features.}
        \label{fig:dsprte_alignments}
    \end{subfigure}

    \caption{Evaluation of dSprites spectral alignment. Left: the confidence intervals correspond to bootstrapped refits of the feature covariance for an individual model. Right: the confidence intervals are obtained from 9 independently trained models with identical hyperparameters but different random seeds.}
    \label{fig:dsprite_eval}
\end{figure}

\end{document}